%
\RequirePackage[l2tabu,orthodox]{nag}
\documentclass
[letterpaper,12pt,]
{article}


\usepackage{etex}
\usepackage{verbatim}
\usepackage{xspace,enumerate}
\usepackage[dvipsnames]{xcolor}
\usepackage[T1]{fontenc}
\usepackage[full]{textcomp}
\usepackage[american]{babel}
\usepackage{mathtools}
\usepackage{amsthm}
\usepackage[letterpaper, top=1in, bottom=1in, left=1in, right=1in]{geometry}
\usepackage{newpxtext} 
\usepackage{textcomp} 
\usepackage[varg,bigdelims]{newpxmath}
\usepackage[scr=rsfso]{mathalfa}
\usepackage{bm} 
\let\mathbb\varmathbb
\usepackage{microtype}
\usepackage[pagebackref,colorlinks=true,urlcolor=blue,linkcolor=blue,citecolor=OliveGreen]{hyperref}
\usepackage[capitalise,nameinlink]{cleveref}
\crefname{lemma}{Lemma}{Lemmas}
\crefname{fact}{Fact}{Facts}
\crefname{theorem}{Theorem}{Theorems}
\crefname{corollary}{Corollary}{Corollaries}
\crefname{claim}{Claim}{Claims}
\crefname{example}{Example}{Examples}
\crefname{algorithm}{Algorithm}{Algorithms}
\crefname{problem}{Problem}{Problems}
\crefname{definition}{Definition}{Definitions}
\crefname{exercise}{Exercise}{Exercises}
\usepackage{amsthm}

\newtheorem{theorem}{Theorem}[section]
\newtheorem*{theorem*}{Theorem}
\newtheorem{lemma}[theorem]{Lemma}
\newtheorem*{lemma*}{Lemma}
\newtheorem{fact}[theorem]{Fact}
\newtheorem*{fact*}{Fact}

\newtheorem*{proposition*}{Proposition}

\newtheorem*{corollary*}{Corollary}

\newtheorem*{hypothesis*}{Hypothesis}

\newtheorem*{conjecture*}{Conjecture}
\theoremstyle{definition}

\newtheorem*{definition*}{Definition}

\newtheorem*{construction*}{Construction}

\newtheorem*{example*}{Example}

\newtheorem*{question*}{Question}
\newtheorem{algorithm}[theorem]{Algorithm}
\newtheorem*{algorithm*}{Algorithm}

\newtheorem*{assumption*}{Assumption}

\newtheorem*{problem*}{Problem}

\newtheorem*{openquestion*}{Open Question}

\newtheorem*{model*}{Model}
\theoremstyle{remark}

\newtheorem*{claim*}{Claim}
\newtheorem{remark}[theorem]{Remark}
\newtheorem*{remark*}{Remark}

\newtheorem*{observation*}{Observation}
\usepackage{paralist}
\frenchspacing
\let\originalleft\left
\let\originalright\right
\renewcommand{\left}{\mathopen{}\mathclose\bgroup\originalleft}
\renewcommand{\right}{\aftergroup\egroup\originalright}
\usepackage{turnstile}
\usepackage{mdframed}
\usepackage{tikz}
\usepackage{caption}
\DeclareCaptionType{Algorithm}
\usepackage{newfloat}
\usepackage{xparse}
\usepackage{amsthm} 
\makeatletter
\let\latexparagraph\paragraph
\RenewDocumentCommand{\paragraph}{som}{%
	\IfBooleanTF{#1}
	{\latexparagraph*{#3}}
	{\IfNoValueTF{#2}
		{\latexparagraph{\maybe@addperiod{#3}}}
		{\latexparagraph[#2]{\maybe@addperiod{#3}}}%
	}%
}
\newcommand{\maybe@addperiod}[1]{%
	#1\@addpunct{.}%
}
\makeatother






\usepackage{todonotes}

\usepackage{boxedminipage}
\newcommand{\paren}[1]{(#1)}
\newcommand{\Paren}[1]{\left(#1\right)}

\newcommand{\brac}[1]{[#1]}
\newcommand{\Brac}[1]{\left[#1\right]}


\newcommand{\abs}[1]{\lvert#1\rvert}
\newcommand{\Abs}[1]{\left\lvert#1\right\rvert}

\newcommand{\BIGabs}[1]{\bigg\lvert#1\bigg\rvert}

\newcommand{\Card}[1]{\left\lvert#1\right\rvert}


\newcommand{\Set}[1]{\left\{#1\right\}}

\newcommand{\norm}[1]{\lVert#1\rVert}
\newcommand{\Norm}[1]{\left\lVert#1\right\rVert}
\newcommand{\bignorm}[1]{\big\lVert#1\big\rVert}



\newcommand{\snorm}[1]{\norm{#1}^2}
\newcommand{\Snorm}[1]{\Norm{#1}^2}

\newcommand{\normo}[1]{\norm{#1}_1}
\newcommand{\Normo}[1]{\Norm{#1}_1}

\newcommand{\normi}[1]{\norm{#1}_\infty}

\newcommand{\normn}[1]{\norm{#1}_\textnormal{nuc}}
\newcommand{\Normn}[1]{\Norm{#1}_\textnormal{nuc}}

\newcommand{\normm}[1]{\norm{#1}_\textnormal{max}}
\newcommand{\Normm}[1]{\Norm{#1}_\textnormal{max}}

\newcommand{\normreg}[1]{\norm{#1}_\textnormal{reg}}
\newcommand{\Normreg}[1]{\Norm{#1}_\textnormal{reg}}


\newcommand{\iprod}[1]{\langle#1\rangle}


\newcommand{\Esymb}{\mathbb{E}}
\newcommand{\Psymb}{\mathbb{P}}

\DeclareMathOperator*{\E}{\Esymb}

\DeclareMathOperator*{\ProbOp}{\Psymb}
\renewcommand{\Pr}{\ProbOp}

\newcommand{\suchthat}{\;\middle\vert\;}


\newcommand{\sge}{\succeq}

















\newcommand\bdot\bullet



\DeclareMathOperator{\Tr}{Tr}

\DeclareMathOperator*{\argminlimitversion}{argmin}

\DeclareMathOperator{\supp}{supp}

\DeclareMathOperator{\sign}{sign}

\DeclareMathOperator{\rank}{rank}



\newcommand{\iid}{i.i.d.\xspace}


\newcommand{\Hoelder}{H\"{o}lder\xspace}
\newcommand{\Holder}{\Hoelder}


\newcommand{\N}{\mathbb N}
\newcommand{\R}{\mathbb R}




\newcommand{\cB}{\mathcal B}
\newcommand{\cC}{\mathcal C}

\newcommand{\cE}{\mathcal E}
\newcommand{\cF}{\mathcal F}

\newcommand{\cK}{\mathcal K}
\newcommand{\cL}{\mathcal L}
\newcommand{\cM}{\mathcal M}
\newcommand{\cN}{\mathcal N}

\newcommand{\cP}{\mathcal P}

\newcommand{\cS}{\mathcal S}
\newcommand{\cT}{\mathcal T}
\newcommand{\cU}{\mathcal U}

\newcommand{\cZ}{\mathcal Z}

\newcommand{\bbP}{\mathbb P}

\renewcommand{\leq}{\leqslant}
\renewcommand{\le}{\leqslant}
\renewcommand{\geq}{\geqslant}
\renewcommand{\ge}{\geqslant}
\let\epsilon=\varepsilon
\numberwithin{equation}{section}
\newcommand\MYcurrentlabel{xxx}
\newcommand{\MYstore}[2]{%
	\global\expandafter \def \csname MYMEMORY #1 \endcsname{#2}%
}
\newcommand{\MYload}[1]{%
	\csname MYMEMORY #1 \endcsname%
}
\newcommand{\MYnewlabel}[1]{%
	\renewcommand\MYcurrentlabel{#1}%
	\MYoldlabel{#1}%
}
\newcommand{\MYdummylabel}[1]{}
\newcommand{\torestate}[1]{%
	\let\MYoldlabel\label%
	\let\label\MYnewlabel%
	#1%
	\MYstore{\MYcurrentlabel}{#1}%
	\let\label\MYoldlabel%
}
\newcommand{\restatetheorem}[1]{%
	\let\MYoldlabel\label
	\let\label\MYdummylabel
	\begin{theorem*}[Restatement of \cref{#1}]
		\MYload{#1}
	\end{theorem*}
	\let\label\MYoldlabel
}
\newcommand{\restatelemma}[1]{%
	\let\MYoldlabel\label
	\let\label\MYdummylabel
	\begin{lemma*}[Restatement of \cref{#1}]
		\MYload{#1}
	\end{lemma*}
	\let\label\MYoldlabel
}
\newcommand{\restateprop}[1]{%
	\let\MYoldlabel\label
	\let\label\MYdummylabel
	\begin{proposition*}[Restatement of \cref{#1}]
		\MYload{#1}
	\end{proposition*}
	\let\label\MYoldlabel
}
\newcommand{\restatefact}[1]{%
	\let\MYoldlabel\label
	\let\label\MYdummylabel
	\begin{fact*}[Restatement of \cref{#1}]
		\MYload{#1}
	\end{fact*}
	\let\label\MYoldlabel
}
\newcommand{\restate}[1]{%
	\let\MYoldlabel\label
	\let\label\MYdummylabel
	\MYload{#1}
	\let\label\MYoldlabel
}

\newcommand{\sse}{\subseteq}

\newcommand{\eps}{\epsilon}

\allowdisplaybreaks
\sloppy
\newcommand*{\Id}{\mathrm{Id}}

\newcommand*{\normf}[1]{\norm{#1}_{\mathrm{F}}}
\newcommand*{\Normf}[1]{\Norm{#1}_{\mathrm{F}}}
\newcommand*{\bignormf}[1]{\bignorm{#1}_{\mathrm{F}}}

\newcommand*{\transpose}[1]{{#1}{}^{\mkern-1.5mu\mathsf{T}}}



\renewcommand{\ij}{{ij}}
\providecommand{\rspan}[1]{\text{rspan}(#1)}
\providecommand{\cspan}[1]{\text{cspan}(#1)}

\newcommand{\ind}[1]{\bm{1}_{\Brac{#1}}}




\title{Consistent Estimation for PCA and Sparse Regression with Oblivious Outliers\thanks{This project has received funding from the European Research Council (ERC) under the European Union's Horizon 2020 research and innovation programme (grant agreement No 815464).}}

\author{
	Tommaso d'Orsi\thanks{ETH Z\"urich}
	\and
	Chih-Hung Liu\footnotemark[2]
	\and
	Rajai Nasser\footnotemark[2]
	\and
	Gleb Novikov\footnotemark[2]
	\and
	David Steurer\footnotemark[2]
	\and
	Stefan Tiegel\footnotemark[2]
}

\date{}

\begin{document}
	
	\maketitle
	\begin{abstract}
		We develop machinery to design efficiently computable and \emph{consistent} estimators, achieving estimation error approaching zero as the number of observations grows, when facing an oblivious adversary that may corrupt responses in all but an $\alpha$ fraction of the samples.
As concrete examples, we investigate two problems: 
sparse regression and principal component analysis (PCA).
For sparse regression, we achieve consistency for optimal sample size $n\gtrsim (k\log d)/\alpha^2$ 
and optimal error rate $O(\sqrt{(k\log d)/(n\cdot \alpha^2)})$
where $n$ is the number of observations, $d$ is the number of dimensions and $k$ is the sparsity of the parameter vector, allowing the fraction of inliers to be inverse-polynomial in the number of samples.
Prior to this work, no estimator was known to be consistent when the fraction of inliers $\alpha$ is $o(1/\log \log n)$, even for (non-spherical) Gaussian design matrices.
Results holding under weak design assumptions and in the presence of such general noise have only been shown in dense setting (i.e., general linear regression) very recently by d'Orsi et al.~\cite{ICML-linear-regression}.
In the context of PCA, we attain optimal error guarantees under broad spikiness assumptions on the parameter matrix (usually used in matrix completion). 
Previous works could obtain non-trivial guarantees only under the assumptions that the measurement noise corresponding to the inliers is polynomially small in $n$ (e.g., Gaussian with variance $1/n^2$).

To devise our estimators, we equip the Huber loss with non-smooth regularizers such as the $\ell_1$ norm or the nuclear norm, and extend d'Orsi et al.'s approach~\cite{ICML-linear-regression} in a novel way to analyze the loss function.
Our machinery appears to be easily applicable to a wide range of estimation problems.
We complement these algorithmic results with statistical lower bounds showing that the fraction of inliers that our PCA estimator can deal with is optimal up to a constant factor.

	\end{abstract}
	
	\clearpage
	%
	\microtypesetup{protrusion=false}
	\tableofcontents{}
	\microtypesetup{protrusion=true}
	\clearpage

	
	
	\section{Introduction}
Estimating information from structured data is a central theme in statistics that by now has found applications in a wide array of disciplines.
On a high level, a typical assumption in an estimation problem is the existence of a --known \textit{a priori}-- family of probability distributions $\cP:=\Set{\bbP_\beta\suchthat \beta\in \Omega}$ over some space $\cZ$ that are each indexed by some parameter $\beta\in \Omega$.
We then observe a collection of $n$ independent observations $\bm Z=\Paren{\bm Z_1,\ldots, \bm Z_n}$ drawn from an unknown probability distribution $\bbP_{\beta^*}\in \cP$.
The goal is to (approximately) recover the \emph{hidden} parameter $\beta^*$.\footnote{In this paper, we assume $\Omega\subseteq \R^d$ and $\cZ \subseteq \R^D$ for some $d,D$, denote random variables in bold face, and hide absolute constant factors with the notation $O(\cdot),\Omega(\cdot)\,,\gtrsim\,,\lesssim$ and logarithmic factors with $\tilde{O}(\cdot),\tilde{\Omega}(\cdot)$.}

Oftentimes, real-world data may contain  skewed, imprecise or corrupted measurements.
Hence, a desirable property for an estimator is to be robust to significant, possibly malicious, noise perturbations on the given observations.
Indeed, in the last two decades, a large body of work has been developed on designing robust algorithms (e.g see \cite{Ben-TalGN09,BertsimasBC11,MaronnaMYS19}). 
However, proving strong guarantees often either demands strong assumptions on the noise model or requires that the fraction of perturbed observations is small.
More concretely, when we allow the noise to be chosen adaptively, i.e., chosen dependently on the observations and hidden parameters, a common theme is that \emph{consistent} estimators -- estimators whose error tends to zero as the number of observations grows -- can be attained only when the fraction of outliers is small.

In order to make vanishing error possible in the presence of large fractions of outliers, it is necessary to consider weaker adversary models that are \emph{oblivious} to the underlying structured data.
In recent years, a flurry of works have investigated oblivious noise models \cite{DBLP:journals/jacm/CandesLMW11, DBLP:conf/isit/ZhouLWCM10, tsakonas2014convergence, DBLP:conf/nips/Bhatia0KK17,  sun_zhou18, SuggalaBR019, pensia2021robust, ICML-linear-regression}. These results however are tailor-made to the specific models and problems.
To overcome this limitation, in this paper we aim to provide a simple blueprint to design provably robust estimators under minimalistic noise assumptions for a large class of estimation problems.
As a testbed for our blueprint, we investigate two well-studied problems:
\begin{itemize}
	\item[\emph{Principal component analysis (PCA)}:] Given a matrix $\bm Y:= L^*+\bm N$ where $L^*\in \R^{n\times n}$ is an unknown parameter matrix and $\bm N$ is an $n$-by-$n$ \emph{random} noise matrix, the goal is to find an estimator $\hat{\bm L}$ for $L^*$ which is as close as possible to $L^*$ in Frobenius norm.
	\item[\emph{Sparse regression}:] Given observations $(X_1,\bm y_1),\ldots,(X_n,\bm y_n)$ following the linear model $\bm y_i =\iprod{X_i,\beta^*}+\bm \eta_i$ where $X_i \in\R^d$, $\beta^*\in \R^d$ is the $k$-sparse parameter vector of interest  (by $k$-sparse we mean that it has at most $k$ nonzero entries) and $\bm \eta_1,\ldots\bm \eta_n$ is noise, the goal is to find an estimator $\hat{\bm \beta}$ for $\beta^*$ achieving small \emph{squared prediction error} 
	$\frac{1}{{n}}\norm{X(\hat{\bm \beta}-\beta^*)}^2$, where $X$ is the matrix whose rows are $X_1,\ldots,X_n$.\footnote{Our analysis also works for the \emph{parameter error} $\norm{\beta^* - \hat{\bm \beta}}$.}
\end{itemize}

\paragraph{Principal component analysis}
A natural way to describe  principal component analysis under oblivious perturbations  is that of assuming the noise matrix $\bm N$ to be an $n$-by-$n$ matrix  with  a uniformly random  set   of $\alpha\cdot n^2$ entries bounded by some small $\zeta\geq 0$ in absolute value.
In these settings, we may think of the $\alpha$ fraction of entries with small noise as the set of uncorrupted observations. Moreover, for $\zeta>0$, the fact that even for uncorrupted observations the noise is  non-zero allows us  to capture both gross sparse errors and small entry-wise noise in the measurements at the same time (for example if $\zeta=1$ then the model captures settings with additional standard Gaussian noise).
Remarkably, for $\zeta=0$,  Cand\`{e}s et al.'s seminal work~\cite{DBLP:journals/jacm/CandesLMW11} provided an algorithm that exactly recovers $L^*$  even for a vanishing fraction of inliers, 
under \textit{incoherence} conditions on the signal $L^*$. The result was slightly extended in \cite{DBLP:conf/isit/ZhouLWCM10} where the authors provided an algorithm recovering $L^*$ up to squared error $O(\zeta^2\cdot n^4)$ thus allowing polynomially small measurement noise $\zeta$, but still failing to capture settings where  standard Gaussian measurement noise is added to the sparse noise. Even for simple signal matrices $L^*$, prior to this work, it remained an open question whether consistent estimators could be designed in presence of both oblivious noise and more reasonable measurement noise (e.g., standard Gaussian).

\paragraph{Linear regression}
Similarly to the context of principal component analysis, a convenient model for oblivious adversarial corruptions is that of assuming the noise vector $\bm \eta$ to have a random set of $\alpha\cdot n$ entries bounded by $\zeta$ in absolute value (here all results of interest can be extended to any $\zeta>0$ just by scaling, so we consider only the case $\zeta=1$).  This model trivially captures the classical settings with Gaussian noise (a Gaussian vector with variance $\sigma^2$ will have an $\alpha = \Theta(\frac{1}{\sigma})$ fraction of inliers) and, again, allows us to think of the $\alpha$-fraction of entries with small noise as the set of uncorrupted observations.
Early works on consistent regression focused on the regime with Gaussian design $\bm X_1,\ldots, \bm X_n\sim N(0, \Sigma)$ and  deterministic noise $\eta$.\footnote{For Gaussian design $\bm X$ one may also consider a deterministic noise model.  This noise model is subsumed by the random noise model discussed here (if $\bm X$ is Gaussian). As shown in \cite{ICML-linear-regression}, roughly speaking the underlying reason is that the Gaussianity of $\bm X$ allows one to obtain several other desirable properties "for free".  For example, one could ensure that the noise vector is symmetric by randomly flipping the sign of each observation $(\bm y_i, \bm X_i)$, as the design matrix will still be Gaussian. See \cref{sec:results} for a more in-depth discussion.}
\cite{DBLP:conf/nips/Bhatia0KK17} presented an estimator achieving error $\tilde{O}(d/(\alpha^2\cdot n))$ for any $\alpha$ larger than a fixed constant.
\cite{SuggalaBR019} extended this result by achieving comparable error rates even for a vanishing fraction of inliers $\alpha\gtrsim 1/\log\log n$. Assuming $n\gtrsim d^2/\alpha^2$ \cite{tsakonas2014convergence} proved that the Huber-loss estimator \cite{huber1964} achieves optimal error rate $O(d/\alpha^2\cdot n)$ even for polynomially small fraction of inliers $\alpha \gtrsim \sqrt{d/n}$.
This line of work culminated in \cite{ICML-linear-regression} which extended the result of \cite{tsakonas2014convergence} achieving the same guarantees with optimal sample complexity $n\gtrsim d/\alpha^2$. For Gaussian design, similar result as \cite{ICML-linear-regression} can be extracted from independent work \cite{pesme_neurips2021}. 
Furthermore, the authors of \cite{ICML-linear-regression} extended these guarantees to deterministic design matrices satisfying only a spreadness condition  (trivially satisfied by sub-Gaussian design matrices). 

Huber loss was also analyzed in context of linear regression in \cite{sun_zhou18, paristech, Sasai2020robust, pensia2021robust}. These works studied models that are different from our model, and these results do not work in the case $\alpha = o(1)$. \cite{sun_zhou18} used assumptions on the moments of the noise, \cite{paristech} and \cite{Sasai2020robust} studied the model with non-oblivious adversary, and the model from \cite{pensia2021robust} allows corruptions in covariates.

In the context of sparse regression, less is known. 
When the fraction of observations is constant $\alpha \geq \Omega(1)$, \cite{oblivious_sparse_regression} provided a first consistent estimator. Later \cite{paristech} and \cite{Sasai2020robust} improved that result (assuming $\alpha \geq \Omega(1)$, but these results also work with non-oblivious adversary).
In the case $\alpha = o(1)$, the algorithm of \cite{SuggalaBR019} (for Gaussian designs) achieves nearly optimal error convergence $\tilde{O}\Paren{(k\log^2 d)/\alpha^2 n}$, but requires $\alpha \gtrsim 1/\log \log n$. 
More recently, \cite{ICML-linear-regression} presented an algorithm for standard Gaussian design $\bm X\sim N(0,1)^{n\times d}$, achieving the nearly optimal error convergence $\tilde{O}(k/(\alpha^2\cdot n))$ for nearly optimal  inliers fraction $\alpha \gtrsim \sqrt{(k\cdot \log^2 d)/n}$.
Both \cite{SuggalaBR019} and \cite{ICML-linear-regression} use an iterative process, and require a bound $\norm{\beta^*}\le d^{O(1)}$ (for larger $\norm{\beta^*}$ these algorithms also work, but the fraction of inliers or the error convergence is worse).
The algorithm from \cite{ICML-linear-regression} , however, heavily relies on the assumption that $\bm X\sim N(0,1)^{n\times d}$ and appears unlikely to be generalizable to more general families of matrices (including non-spherical Gaussian designs).

\paragraph{Our contribution.} 
We propose new machinery to design efficiently computable \emph{consistent estimators} achieving optimal error rates and sample complexity against oblivious outliers.
In particular, we extend the approach of \cite{ICML-linear-regression} to structured estimation problems, by finding a way to exploit adequately the structure therein.
While consistent estimators have already been designed under more benign noise assumptions (e.g. the LASSO estimator for sparse linear regression under Gaussian noise), it was previously unclear how to exploit this structure in the setting of oblivious noise. 
One key consequence of our work is hence to demonstrate what minimal assumptions on the noise are sufficient to make effective recovery (in the sense above) possible.
Concretely, we show
\begin{itemize}
	\item[\textit{Oblivious PCA}:] Under mild assumptions on the noise matrix $\bm N$ and common assumption on the parameter matrix  $L^*$ --traditionally applied in the context of matrix completion \cite{DBLP:journals/jmlr/NegahbanW12}--  we provide an algorithm that achieves optimal error guarantees. 
	
	\item[\textit{Sparse regression}:] Under mild assumptions on the design matrix and the noise vector --similar to the ones used in   \cite{ICML-linear-regression} for dense parameter vectors $\beta^*$ --  we provide an algorithm that achieves optimal error guarantees and sample complexity.
\end{itemize}

For both problems, our analysis improves over the state-of-the-art and  recovers the classical optimal guarantees, not only for Gaussian noise, but also under much less restrictive noise assumptions. At a high-level, we achieve the above results by equipping the Huber loss estimator with appropriate regularizers.
Our techniques closely follow standard analyses for $M$-estimators, but crucially depart from them when dealing with the observations with large perturbations.
Furthermore, our analysis appears to be mechanical and thus easily applicable to many different estimation problems. 
	
\section{Results}\label{sec:results}

Our estimators are based on regularized versions of the \emph{Huber loss}.
The regularizer we choose depends on the underlying structure of the estimation problem:
We use $\ell_1$ regularization to enforce sparsity in linear regression and nuclear norm regularization to enforce a low-rank structure in the context of PCA.
More formally, the \textit{Huber penalty} is defined as the function $f_h:\R\rightarrow\R_{\geq 0}$ such that

\begin{align}\label{eq:huber-penalty}
f_h(t):=\begin{cases}
\frac{1}{2}t^2&\text{for }\abs{t}\leq h\,,\\
h(\abs{t}-\frac{h}{2})& \text{otherwise}.
\end{cases}
\end{align}

where $h>0$ is a penalty parameter. 
For $X\in \R^{D}$,  the \textit{Huber loss} is defined as the function $F_h(X):=\sum_{i \in [D]}f_h(X_i)$.
We will define the regularized versions in the following sections.
For a matrix $A$, we use $\norm{A}$, $\normn{A}$, $\normf{A}$, $\normm{A}$ to denote its spectral, nuclear, Frobenius, maximum\footnote{For $n\times m$ matrix $A$, $\normm{A} = \max_{i\in[n],j\in [m]} \abs{A_{ij}}$.} 
norms, respectively. For a vector $v$, we use $\norm{v}$ and $\normo{v}$ to denote its $\ell_2$ and $\ell_1$ norms.

\subsection{Oblivious principal component analysis}\label{sub:result-pca}
For oblivious PCA, we provide guarantees for the following estimator ($\zeta$ will be defined shortly): 
\begin{equation}\label{eq:pca-estimator}
\hat{\bm L} \coloneqq \qquad    \argminlimitversion_{\mathclap{\substack{L\in\R^{n\times n},\; \Normm{L}\leq \rho/n}}} \qquad  \Paren{F_h(\bm Y- L)+ 100\sqrt{n}\Paren{\zeta+\rho/n} \Normn{L}}.
\end{equation}

\begin{theorem}\label{thm:oblivious-pca}
	Let $L^*\in \R^{n\times n}$ be an unknown deterministic matrix and let $\bm N$ be an $n$-by-$n$ random matrix  with independent, symmetrically distributed (about zero) entries and $\alpha:=\min_{i,j \in [n]}\bbP \Set{\Abs{\bm N_\ij}\leq \zeta}$ for some $\zeta \ge 0$. Suppose that $\rank(L^*)=r$ and $\Normm{L^*}\leq \rho/n$. 
	 
	Then, with probability at least $1-2^{-n}$ over $\bm N$, given $\bm Y = L^*+\bm N$, $\zeta$ and $\rho$, the estimator \cref{eq:pca-estimator} with Huber parameter $h = \zeta + \rho/n$	satisfies 
	\[
	\displaystyle\Normf{\hat{\bm L}-L^*}\leq O\Paren{{\frac{\sqrt{rn}}{\alpha}}}\cdot(\zeta+\rho/n)\,.
	\]
\end{theorem}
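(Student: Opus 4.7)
The plan is to carry out a standard $M$-estimator analysis---Bregman-type decomposition of the Huber loss around the truth, cone condition from the nuclear-norm penalty, and restricted strong convexity---but to implement the restricted strong convexity with the d'Orsi--Novikov--Steurer technology that works for polynomially small $\alpha$. Denote $\bm\Delta = \hat{\bm L} - L^*$, $h = \zeta + \rho/n$, $\lambda = 100\sqrt{n}\,h$, and $\bm W_{ij} = f_h'(\bm N_{ij})$. Optimality of $\hat{\bm L}$ yields the basic inequality
\[ F_h(\bm N - \bm\Delta) - F_h(\bm N) \le \lambda\Paren{\Normn{L^*} - \Normn{L^*+\bm\Delta}}, \]
and writing its left-hand side in Bregman form as $-\langle\bm W,\bm\Delta\rangle + R(\bm\Delta)$ with a non-negative residual $R(\bm\Delta)$, I would separately upper bound the linear term and lower bound $R(\bm\Delta)$ in terms of $\Normf{\bm\Delta}$.

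The linear term is routine: the entries $\bm W_{ij} = f_h'(\bm N_{ij})$ are independent, symmetric about zero, and bounded by $h$ in absolute value, so matrix Bernstein for sums of independent bounded symmetric matrices gives $\normop{\bm W} \le O(h\sqrt{n}) \le \lambda/100$ with probability $1-2^{-\Omega(n)}$. Projecting $\bm\Delta$ onto the tangent space $T$ of the rank-$r$ variety at $L^*$ as $\bm\Delta = P_T(\bm\Delta) + P_{T^\perp}(\bm\Delta)$, decomposability of the nuclear norm together with $\normop{\bm W} \ll \lambda$ forces the classical cone condition $\Normn{P_{T^\perp}(\bm\Delta)} \le O(1)\Normn{P_T(\bm\Delta)}$, hence $\Normn{\bm\Delta} \le O(\sqrt{r})\Normf{\bm\Delta}$ since $P_T(\bm\Delta)$ has rank at most $2r$. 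Plugging back, the right-hand side of the basic inequality is at most $O(\lambda\sqrt{r})\Normf{\bm\Delta}$, giving $R(\bm\Delta) \le O(\lambda\sqrt{r})\Normf{\bm\Delta}$.

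The heart of the proof is the matching lower bound $R(\bm\Delta) \gtrsim \alpha\,\Normf{\bm\Delta}^2$. For every entry with $|\bm N_{ij}| \le \zeta$, $f_h$ is quadratic at $\bm N_{ij}$; combined with the box constraint $\normm{\bm\Delta} \le 2\rho/n \le 2h$ inherited from the feasible set, the per-entry Bregman divergence is at least $\tfrac18\bm\Delta_{ij}^2$. Writing $\bm M_{ij} = \mathbf{1}[|\bm N_{ij}|\le\zeta]$, this gives $R(\bm\Delta) \ge \tfrac18\sum_{ij}\bm M_{ij}\bm\Delta_{ij}^2$, and since the $\bm M_{ij}$ are independent Bernoullis with mean $\ge \alpha$, this quantity has expectation $\ge \tfrac{\alpha}{8}\Normf{\bm\Delta}^2$. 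This is the main obstacle, because $\bm\Delta$ depends on $\bm N$, so the bound must hold uniformly over the cone $\mathcal{C} = \Set{\Delta : \Normn{\Delta} \le C\sqrt{r}\Normf{\Delta},\ \normm{\Delta} \le 2\rho/n}$. I would obtain this uniform restricted strong convexity via Bernstein on $\sum(\bm M_{ij}-\alpha)\Delta_{ij}^2$ (variance $\le \alpha\normm{\Delta}^2\Normf{\Delta}^2$, per-term range $\le \normm{\Delta}^2$) combined with an $\epsilon$-net over $\mathcal{C}$ whose log-covering number in Frobenius norm is $\tilde O(rn)$; calibrating scales, the uniform deviation is $o(\alpha\Normf{\bm\Delta}^2)$ precisely in the non-trivial regime $\Normf{\bm\Delta} \gtrsim \sqrt{rn}\,h/\alpha$.

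Combining the two bounds on $R(\bm\Delta)$ gives $\tfrac{\alpha}{8}\Normf{\bm\Delta}^2 \lesssim \lambda\sqrt{r}\,\Normf{\bm\Delta}$, so $\Normf{\bm\Delta} \lesssim \lambda\sqrt{r}/\alpha = O\paren{\sqrt{rn}\,(\zeta+\rho/n)/\alpha}$, matching the claimed conclusion. The only truly delicate step is the uniform restricted strong convexity---the point where the oblivious-noise machinery departs from the classical nuclear-norm-regularized analysis, because naive pointwise concentration of $\sum\bm M_{ij}\Delta_{ij}^2$ fails once $\alpha$ is polynomially small in $n$ and $\bm\Delta$ is adversarially adapted to $\bm N$.
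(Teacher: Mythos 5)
Your proposal is correct and follows essentially the same route as the paper: spectral-norm bound on the gradient matrix $f_h'(\bm N_{ij})$, the nuclear-norm decomposability/cone condition giving $\Normn{\Delta}\le O(\sqrt r)\Normf{\Delta}$, a per-entry quadratic lower bound from the second-order behavior of the Huber loss together with the $\normm{\cdot}\le\rho/n$ constraint, and uniform restricted strong convexity of $\sum_{ij}\ind{\abs{\bm N_{ij}}\le\zeta}\Delta_{ij}^2$ over the approximately low-rank cone via an $\eps$-net (log-size $O(rn/\eps^2)$) plus Bernstein. The only difference is organizational—you combine the pieces directly through the basic inequality, whereas the paper packages the same combination into its general $M$-estimator meta-theorem.
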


We first compare the guarantees of \cref{thm:oblivious-pca} with the previous results on robust PCA~\cite{DBLP:journals/jacm/CandesLMW11,DBLP:conf/isit/ZhouLWCM10}.\footnote{We remark that in \cite{DBLP:journals/jacm/CandesLMW11} the authors showed that they can consider non-symmetric noise when the fraction of inliers is large $\alpha \ge 1/2$. However for smaller fraction of inliers their analysis requires the entries of the noise to be symmetric and independent, so for $\alpha <1/2$, their assumptions are captured by  \cref{thm:oblivious-pca}.}
The first difference is that they require $L^*$ to satisfy certain incoherent conditions.\footnote{A rank-$r$ $n\times n$ dimensional matrix $M$ is \emph{$\mu$-incoherent} if its \emph{singular vector decomposition} $M:=U\Sigma V^\top$ satisfies
	$\max_{i\in[n]}\norm{U^\top e_i}^2\leq \frac{\mu r}{n}$, $\max_{i\in[n]}\norm{V^\top e_i}^2\leq \frac{\mu r}{n}$ and $\normi{UV^\top}\leq \frac{\sqrt{\mu r}}{n}$.}
Concretely, they provide theoretical guarantees for $r\le O\big(\mu^{-1}n(\log n)^{-2}\big)$, where $\mu$ is the incoherence parameter.
In certain regimes, such a constraint strongly binds with the eigenvectors of $L^*$, restricting the set of admissible signal matrices. Using the different assumption $\normm{L^*}\leq \rho/n$ (commonly used for matrix completion, see \cite{DBLP:journals/jmlr/NegahbanW12}), we can obtain nontrivial guarantees (i.e. $\bignormf{\hat{\bm L}-L^*}/ \Normf{L^*}\to 0$ as $n\to \infty$) even when the $\mu$-incoherence conditions are not satisfied for any $\mu \le n/\log^2 n$, and hence the results \cite{DBLP:journals/jacm/CandesLMW11,DBLP:conf/isit/ZhouLWCM10} cannot be applied.
We remark that, without assuming incoherence,  the dependence of the error on the maximal entry of $L^*$ is inherent (see \cref{remark:incoherence}). 

The second difference is that \cref{thm:oblivious-pca} provides a significantly better dependence on the magnitude $\zeta$ of the entry-wise measurement error.
Specifically, in the settings of \cref{thm:oblivious-pca}, \cite{DBLP:conf/isit/ZhouLWCM10} showed that  if $L^*$ satisfies the {incoherence} conditions, the error of their estimator is $O\Paren{n^2\zeta}$. 
If the entries of $\bm N$ are standard Gaussian with probability $\alpha$ (and hence $\zeta \le O(1)$), and the entries of $L^*$ are bounded by $O(1)$, then the error of our estimator is $O(\sqrt{rn}/\alpha)$, which is considerably better than $O(n^2)$ as  in \cite{DBLP:conf/isit/ZhouLWCM10}. On the other hand, their error does not depend on the magnitude $\rho/n$ of the signal entries, so in the extreme regimes when the singular vectors of $L^*$ satisfy the incoherence conditions but $L^*$ has very large singular values (so that the magnitude of the entries of $L$ is  significantly larger than $n$), their analysis provides better guarantees than \cref{thm:oblivious-pca}.

As another observation to  understand  \cref{thm:oblivious-pca}, notice that our robust PCA model also captures the classical matrix completion settings.
In fact, any instance of matrix completion can be easily transformed into an instance of our PCA model: 
for the entries $(i,j)$ that we do not observe, we can set $N_{i,j}$ to some arbitrarily large value $\pm C(\rho, n) \gg \rho / n$, making the signal-to-noise ratio of the entry arbitrarily small. The observed (i.e. uncorrupted) $\Theta(\alpha\cdot n^2)$ entries may additionally be perturbed by Gaussian noise with variance $\Theta(\zeta^2)$.
The error guarantees of the estimator in  \cref{thm:oblivious-pca} is $O\Paren{\Paren{\rho/n + \zeta} \sqrt{r n} / \alpha}$. Thus, the dependency on  the parameters $\rho, n, \zeta$, and $r$ is the \textit{same} as in matrix completion and the error is within a factor of $\Theta(\sqrt{1/\alpha})$ from the optimum for matrix completion. However, this worse dependency on $\alpha$ is intrinsic to the more general model considered and it turns out to be optimal (see  \cref{thm:IT_lower_bound}). 
On a high level, the additional factor of $\Theta(\sqrt{1/\alpha})$ comes from the fact that in our PCA model we do not know which entries are corrupted. The main consequence of this phenomenon is that a condition of the form $\alpha \gtrsim \sqrt{r/n}$ appears \textit{inherent} to achieve consistency.
To get some intuition on why this condition is necessary, consider the Wigner model where we are given a matrix $\bm Y = xx^\top + \sigma\bm W$ for a flat vector $x\in \Set{\pm1}^n$ and a standard Gaussian matrix $\bm W$. 
Note, that the entries of $\bm W$ fit our noise model for $\zeta = 1\,, \rho/n=1\,, r=1$ and $\alpha= \Theta(1/\sigma)$. 
The spectral norm of $\sigma \bm W$ concentrates around $2\sigma\sqrt{n}$ and thus it is information-theoretically impossible to approximately recover the vector $x$ for $\sigma=1/\alpha=\omega(\sqrt{n})$ (see \cite{DBLP:journals/corr/PerryWBM16}).

\subsection{Sparse regression}\label{sub:result-sparse_regression}

Our regression model considers a \emph{fixed} design matrix $X \in \R^{n \times d}$ and observations $\bm y:= X \beta^* + \bm \eta \in\R^{n}$ where $\beta^*$ is an \emph{unknown} $k$-sparse parameter vector and $\bm \eta$ is \emph{random} noise with $\Psymb (\abs{\eta_i} \leq 1) \geq \alpha$ for all $i \in [n]$.
Earlier works \cite{DBLP:conf/nips/Bhatia0KK17}, \cite{SuggalaBR019} focused on the setting that the design matrix consists of \iid rows with Gaussian distribution $\cN(0, \Sigma)$ and the noise is $\eta = \zeta + \bm w$ where $\zeta$ is deterministic $(\alpha \cdot n)$-sparse vector and $\bm w$ is subgaussian.
As in   \cite{ICML-linear-regression}, our results for a fixed design and random noise can, in fact, extend to yield the same guarantees for this early setting (see \cref{thm:oblivious-sparse-regression-gaussian-design}).
Hence, a key advantage of our results is that the design $X$ does not have to consist of Gaussian entries.
Remarkably, we can handle arbitrary deterministic designs as long as they satisfy some mild conditions. Concretely, we make the following three assumptions, the first two of which are standard in the literature of sparse regression (e.g., see \cite{wainwright_2019}, section 7.3):

\begin{enumerate}
	\item 	For every column $X^i$ of $X$, $\Norm{X^i}\le \sqrt{n}$.
	\item\emph{Restricted eigenvalue property (RE-property)}:
	For every vector $u \in \R^d$ such that\footnote{For a vector $v \in \R^d$ and a set $S \sse [d]$, 
		we denote by $v_S$  the restriction of $v$ to the coordinates in $S$.} 
	$\Normo{u_{\supp(\beta^*)}} \geq 0.1\cdot \Normo{u}$, we have
	$\frac{1}{n}\Norm{Xu}^2\geq \lambda \cdot \Norm{u}^2$ for some parameter $\lambda >0$.
	\item\emph{Well-spreadness property}: For some (large enough) $m\in[n]$ and for every vector $u \in \R^d$ such that 
	$\Normo{u_{\supp(\beta^*)}} \geq 0.1\cdot \Normo{u}$ and for every subset $S\subseteq [n]$ with $\Card{S}\geq n - m$, it holds that $\Norm{(Xu)_S}\ge \frac{1}{2} \Norm{Xu}$.
\end{enumerate}

\definecolor{goodred}{rgb}{0.8, 0.0, 0.0}
Denote 
$\displaystyle\bm F_2(\beta):= \sum_{i=1}^n f_2\Paren{\bm y_i- \iprod{X_i, \beta}}\,$, 
where $X_i$ are the rows of $X$, and $f_2$ is as in \cref{eq:huber-penalty}. We devise our estimator for sparse regression and state its statistical guarantees below:

\begin{align}\label{eq:results-estimator-regression}
\hat{\bm  \beta} \coloneqq 
\arg \min_{\beta\in \R^d} \Paren{\bm F_2(\beta)+ 100\sqrt{n\log d}\cdot \Norm{\beta}_1}\,.
\end{align}

\begin{theorem}\label{thm:oblivious-sparse-regression}
	Let $\beta^* \in \R^d$ be an unknown $k$-sparse vector and let $X\in \R^{n\times d}$ be a deterministic matrix such that for each column $X^i$ of $X$, $\norm{X^i}\le \sqrt{ n}$, satisfying the RE-property with $\lambda>0$ and well-spreadness property with $m \gtrsim \frac{ k\log d}{\lambda\cdot \alpha^2}$ (recall that $n\ge m$).
	
	Further, let $\bm \eta$ be an $n$-dimensional random vector with independent, symmetrically distributed (about zero) entries and $\alpha=\min_{i\in[n]}\bbP\Set{\Abs{\bm \eta_i}\leq 1}$.
	
	Then with probability at least $1-d^{-10}$ over $\bm \eta$, given $X$ and $\bm y=X\beta^*+\bm \eta$, the estimator~\cref{eq:results-estimator-regression} satisfies 
	\[
	\displaystyle\frac{1}{n} \Norm{X\Paren{ \hat{\bm  \beta}- \beta^*}}^2 \leq O\Paren{\frac{1}{\lambda} \cdot \frac{ k\log d}{ \alpha^2 \cdot n}} 
	\mbox{\qquad and\qquad} 
	\displaystyle\Norm{{ \hat{\bm  \beta}- \beta^*}}^2 \leq O\Paren{\frac{1}{\lambda^2} \cdot \frac{ k\log d}{ \alpha^2 \cdot n}}\,.
	\]
\end{theorem}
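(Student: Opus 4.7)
My plan is to adapt the classical $\ell_1$-regularized $M$-estimator analysis to the oblivious-noise setting by following the Huber-loss strategy of \cite{ICML-linear-regression}. Let $u := \hat{\bm\beta} - \beta^*$, $z := Xu$, and $S := \supp(\beta^*)$ with $|S|\le k$. Optimality of $\hat{\bm\beta}$ combined with $\|\beta^*\|_1 - \|\beta^* + u\|_1 \le \|u_S\|_1 - \|u_{S^c}\|_1$ gives the basic inequality
\[
\bm F_2(\beta^* + u) - \bm F_2(\beta^*) \;\le\; 100\sqrt{n\log d}\,\bigl(\|u_S\|_1 - \|u_{S^c}\|_1\bigr). \qquad (\star)
\]
I rewrite the LHS through the Bregman identity $f_2(a) = f_2(b) + \phi(b)(a-b) + D_{f_2}(a,b)$ with $b = \bm\eta_i$ and $a = \bm\eta_i - z_i$, where $\phi := f_2'$ is the clipped-linear derivative satisfying $|\phi|\le 2$ and $D_i(z) := D_{f_2}(\bm\eta_i - z,\bm\eta_i) \ge 0$. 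This yields
\[
\bm F_2(\beta^* + u) - \bm F_2(\beta^*) \;=\; -\bigl\langle X^\top\phi(\bm\eta),\, u\bigr\rangle + \sum_i D_i(z_i).
\]

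Two ingredients then control the two pieces. Because the $\phi(\bm\eta_i)$ are independent, symmetric about zero and bounded by $2$, while each column of $X$ has norm at most $\sqrt n$, Hoeffding plus a union bound over the $d$ coordinates give $\|X^\top\phi(\bm\eta)\|_\infty \le 10\sqrt{n\log d}$ with probability $\ge 1 - d^{-10}$, so $|\langle X^\top\phi(\bm\eta), u\rangle| \le 10\sqrt{n\log d}\,\|u\|_1$. A short case analysis on the Huber penalty shows that whenever $|\bm\eta_i|\le 1$, $D_i(z) \ge c\min(z^2, |z|)$ for some absolute $c>0$. Letting $\bm\chi_i := \mathbf 1[|\bm\eta_i|\le 1]$ (independent Bernoullis with $\E\bm\chi_i\ge\alpha$), this yields $\sum_i D_i(z_i) \ge c\sum_i \bm\chi_i\min(z_i^2,|z_i|)$. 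Plugging both bounds into $(\star)$ and discarding the non-negative Bregman term forces the cone confinement $\|u_{S^c}\|_1 \le 3\|u_S\|_1$, after which the RE property gives $\|u\|_2^2 \le \|z\|^2/(\lambda n)$ and the cone's $\ell_1/\ell_2$ equivalence gives $\|u\|_1 \le 4\sqrt k\,\|u\|_2 \le 4\sqrt{k/(\lambda n)}\,\|z\|$.

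The heart of the argument is to prove, with high probability over $\bm\eta$ and \emph{uniformly} over every $u$ in the cone, the lower bound $\sum_i \bm\chi_i\min(z_i^2,|z_i|) \gtrsim \alpha\|z\|^2$. Well-spreadness is what makes this possible: let $T := \{i:|z_i|>1\}$. If $|T|\le m$, then well-spreadness forces $\|z_{T^c}\|^2 \ge \|z\|^2/4$, and on $T^c$ one has $\min(z_i^2,|z_i|)=z_i^2$; a Bernstein bound on $\sum_i\bm\chi_i z_i^2$ made uniform over the cone by a net argument (whose effective dimension is $\tilde O(k)$ thanks to the cone's $\ell_1/\ell_2$ equivalence) then gives the desired bound. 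In the complementary case $|T|>m$, the linear lower bound $\sum_i\bm\chi_i|z_i|\gtrsim\alpha m$ combined with $(\star)$ and the sample-size hypothesis $m \gtrsim (k\log d)/(\lambda\alpha^2)$ contradicts the existence of such a $u$. Chaining the inequalities one gets
\[
c\alpha\|z\|^2 \;\le\; \sum_i D_i(z_i) \;\le\; 110\sqrt{n\log d}\,\|u\|_1 \;\le\; 440\sqrt{(k\log d)/\lambda}\,\|z\|,
\]
so $\|z\|^2 \lesssim (k\log d)/(\alpha^2\lambda)$, yielding the stated prediction error after dividing by $n$ and the parameter error via $\|u\|^2 \le \|z\|^2/(\lambda n)$.

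The hard part is the uniform Bregman lower bound in the previous paragraph: it requires marrying the well-spreadness property (to ensure that the quadratic region of the Bregman divergence captures most of $\|z\|^2$) with Bernstein-type concentration of the independent inlier indicators $\bm\chi_i$, made uniform over the cone via a covering argument of effective dimension $\tilde O(k)$. The precise sample-size threshold $m \gtrsim (k\log d)/(\lambda\alpha^2)$ emerges from balancing this deviation against the point at which the truncation step breaks.
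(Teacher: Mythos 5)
Your skeleton matches the paper's in most places: the Hoeffding bound on $\|X^\top f_2'(\bm\eta)\|_\infty$, the cone confinement $\|u_{S^c}\|_1 \lesssim \|u_S\|_1$ extracted from optimality plus decomposability (this is exactly \cref{lem:regularizer-decomposability}), the conversion $\|u\|_1 \le 4\sqrt{k/\lambda}\cdot\frac{1}{\sqrt n}\|Xu\|$ via RE, and a Bernstein-plus-net argument for the curvature of the Huber loss restricted to the cone are all the same ingredients the paper uses. The difference is that the paper runs these through a localized meta-theorem (\cref{thm:meta-theorem}): restricted strong convexity is proved \emph{only} on the sphere $\frac{1}{\sqrt n}\|Xu\|=R$ with $R$ at the target error scale, which guarantees that at most $4R^2n\le m$ coordinates of $Xu$ leave the quadratic region of $f_2$, and convexity of the regularized objective then rules out minimizers beyond radius $R$ (if one existed, an intermediate point on the segment at radius exactly $R$ would violate the curvature bound). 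You instead work with $u=\hat{\bm\beta}-\beta^*$ at its unknown scale, and this is where the proposal breaks.

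Concretely, the case $|T|>m$ (equivalently, large $\|z\|$) is not handled correctly. First, the "linear lower bound $\sum_i\bm\chi_i|z_i|\gtrsim\alpha m$" is not justified: $T=\{i:|z_i|>1\}$ depends on $\hat{\bm\beta}$ and hence on $\bm\eta$, so you cannot apply Chernoff to a fixed set; and a uniform statement over all cone vectors is false in general, since the outlier set has $(1-\alpha)n\gg m$ coordinates and well-spreadness does not prevent a cone direction from placing all of its $>1$ coordinates on outliers. Second, and more importantly, even granting that bound there is no contradiction: combining $\tfrac{1}{2}\sum_i\bm\chi_i\min(z_i^2,|z_i|)\le\sum_iD_i(z_i)\le 440\sqrt{(k\log d)/\lambda}\,\|z\|$ with $\sum_i\bm\chi_i|z_i|\gtrsim\alpha m$ and $m\gtrsim (k\log d)/(\lambda\alpha^2)$ only yields $\|z\|\gtrsim \frac{1}{\alpha}\sqrt{(k\log d)/\lambda}$, i.e.\ a \emph{lower} bound on $\|z\|$ of exactly the order you are trying to prove as an upper bound — the linear growth of the Huber divergence matches the linear growth of the right-hand side, so nothing is excluded. (A related, milder issue appears in your case $|T|\le m$: the truncated quadratic $\sum_i\bm\chi_i z_i^2\mathbf 1[|z_i|\le 1]$ is not homogeneous in $z$, so a single-scale net does not give the bound $\gtrsim\alpha\|z\|^2$ uniformly over all norms of $z$; one needs either peeling over scales or, again, localization to a fixed radius.) The fix is the paper's: prove the curvature bound only on $\cB_R\cap\cS_4(\overline\Omega)$ with $R^2\asymp \frac{k\log d}{\lambda\alpha^2 n}$ and $4R^2n\le m$ (as in \cref{lem:local-strong-convexity-sparse-regression}), and then use convexity of $\bm F_2+\gamma\|\cdot\|_1$ along the segment from $\beta^*$ to $\hat{\bm\beta}$ to conclude $\frac{1}{\sqrt n}\|X(\hat{\bm\beta}-\beta^*)\|<R$; without this (or an equivalent rescaling argument) your proof does not close.
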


There are important considerations when interpreting this theorem.
The first is the special case $\eta\sim N(0,\sigma^2)^{n}$,
which satisfies our model for $\alpha = \Theta(1/\sigma)$.
For this case, it is well known (e.g., see \cite{wainwright_2019}, section 7.3) that under the same RE assumption, the LASSO estimator achieves a prediction error rate of $O(\frac{\sigma^2 }{\lambda} \cdot \frac{k \log d}{ n}) = O(\frac{  k \log d}{\lambda\cdot\alpha^2\cdot  n})$, matching our result.
Moreover, this error rate  is essentially optimal. Under a standard assumption in complexity theory ($\textbf{NP}\not\subset \textbf{P}/\textbf{poly}$), the RE assumption is necessary when considering polynomial-time estimators~\cite{sparse_regression_lower_bound_prediction_error}.
Further, this also shows that the dependence on the RE constant seems unavoidable.
Under mild conditions on the design matrix, trivially satisfied if the rows are \iid Gaussian with covariance $\Sigma$ whose condition number is constant, our guarantees are optimal up to constant factors for \emph{all} estimators if $k\le d^{1-\Omega(1)}$ (e.g. $k\le d^{0.99}$), see \cite{sparse_regression_lower_bound_parameter_error}.
This optimality also shows that our bound on the number of samples is best possible since otherwise we would not be able to achieve vanishing error. The (non-sparse version of) well-spreadness property was first used  in the context of regression in  \cite{ICML-linear-regression}. In the same work the authors also showed that, under oblivious noise assumptions, some weak form of spreadness property is indeed necessary. 

The second consideration is the \emph{optimal} dependence on $\alpha$: \cref{thm:oblivious-sparse-regression} achieves consistency as long as the  fraction of inliers satisfies $\alpha=\omega(\sqrt{k\log d/n)}$. To get an intuition, observe that lower bounds for standard sparse regression show that, already for $\bm \eta\sim N(0, \sigma\cdot \Id_n)$, it is possible to achieve consistency only for $n= \omega(\sigma^2 k\log d)$ (if $k\le d^{1-\Omega(1)}$). As for this $\bm \eta$, the number of entries of magnitude at most $1$ is $O(n/\sigma)$ with high probability, it follows that for $\alpha=\Theta\Paren{1/\sigma}\leq O\Paren{\sqrt{(k\log d)/n}}$, no estimator is consistent.

To the best of our knowledge, \cref{thm:oblivious-sparse-regression} is the first result to achieve consistency  under such minimalistic noise settings and deterministic designs.
Previous results~\cite{DBLP:conf/nips/Bhatia0KK17, SuggalaBR019, ICML-linear-regression} focused on simpler settings of  Gaussian design $\bm X$ and deterministic noise, and provide no guarantees for more general models. The techniques for \cref{thm:oblivious-sparse-regression} also extend  to this case.

\begin{theorem}
	\label{thm:oblivious-sparse-regression-gaussian-design}
	Let $\beta^* \in \R^d$ be an unknown $k$-sparse vector and let $\bm X$ be a $n$-by-$d$ random matrix with \iid rows $\bm X_1,\ldots \bm X_n \sim N(0, \Sigma)$ for a positive definite matrix $\Sigma$.
	Further, let $\eta\in \R^n$ be a deterministic vector with $\alpha\cdot n$ coordinates bounded by $1$ in absolute value.	
	Suppose that $n\gtrsim \frac{\nu(\Sigma)\cdot k \log d}{\sigma_{\min}(\Sigma) \cdot \alpha^2}$,
	where $\nu(\Sigma)$ is the maximum diagonal entry of $\Sigma$ and $\sigma_{\min}(\Sigma)$ is its smallest eigenvalue.
	
	Then, with probability at least $1-d^{-10}$ over $\bm X$, given $\bm X$ and $\bm y=\bm X\beta^*+\eta$, the 
	estimator \cref{eq:results-estimator-regression} satisfies
	\begin{equation*}
	\frac{1}{n} \Norm{\bm X\Paren{ \hat{\bm  \beta}- \beta^*}}^2 \leq O\Paren{\frac{\nu(\Sigma) \cdot k\log d}{\sigma_{\min}(\Sigma) \cdot \alpha^2 \cdot n}} 
		\mbox{\qquad and\qquad} 
	\Norm{{ \hat{\bm  \beta}- \beta^*}}^2 \leq O\Paren{\frac{\nu(\Sigma) \cdot k\log d}{\sigma_{\min}^2(\Sigma) \cdot \alpha^2 \cdot n}} \,.
	\end{equation*}
\end{theorem}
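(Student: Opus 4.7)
The plan is to reduce \cref{thm:oblivious-sparse-regression-gaussian-design} to \cref{thm:oblivious-sparse-regression}. Two mismatches need to be handled: the noise here is deterministic rather than random and symmetric, and the Gaussian design has typical column norms of order $\sqrt{\nu(\Sigma)\,n}$ rather than $\sqrt{n}$.

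First, I would use a sign-flipping symmetrization to convert the deterministic noise into symmetric random noise at no cost. Let $\bm\epsilon_1,\dots,\bm\epsilon_n\in\{\pm1\}$ be i.i.d.\ Rademacher variables, independent of $\bm X$, and set $\tilde{\bm X}_i=\bm\epsilon_i\bm X_i$, $\tilde{\bm y}_i=\bm\epsilon_i\bm y_i$. Because $f_2$ is even and the $\ell_1$ penalty is sign-invariant, the estimator of \cref{eq:results-estimator-regression} is identical on $(\bm X,\bm y)$ and $(\tilde{\bm X},\tilde{\bm y})$; and because $N(0,\Sigma)$ is symmetric, $\tilde{\bm X}$ has the same joint distribution as $\bm X$. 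In the new coordinates we have $\tilde{\bm y}=\tilde{\bm X}\beta^*+\tilde{\bm \eta}$ with $\tilde{\bm\eta}_i=\bm\epsilon_i\eta_i$ independent and symmetric; moreover, since $|\tilde{\bm\eta}_i|=|\eta_i|$, at least $\alpha n$ entries deterministically satisfy $|\tilde{\bm\eta}_i|\le 1$, so $\tilde{\bm\eta}$ meets the noise hypotheses of \cref{thm:oblivious-sparse-regression} with parameter~$\alpha$.

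Second, I would verify with probability at least $1-d^{-10}$ that $\bm X$ itself satisfies (a suitable rescaling of) the three hypotheses on the design. For the column norms, $\|\bm X^j\|^2$ is a $\chi^2$-type sum with mean $n\Sigma_{jj}$; Bernstein's inequality plus a union bound over $d$ columns gives $\|\bm X^j\|^2\le 2\nu(\Sigma)\,n$ for all $j$, provided $n\gtrsim\log d$. For the restricted-eigenvalue property, the standard Raskutti--Wainwright--Yu bound for Gaussian designs yields $\tfrac1n\|\bm Xu\|^2\ge \tfrac12\sigma_{\min}(\Sigma)\|u\|^2$ on the cone $\|u_{\supp(\beta^*)}\|_1\ge 0.1\|u\|_1$ once $n\gtrsim \bigl(\nu(\Sigma)/\sigma_{\min}(\Sigma)\bigr)k\log d$, giving $\lambda\asymp\sigma_{\min}(\Sigma)$. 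For well-spreadness, I would use that for any fixed $u$, $\bm Xu\sim N(0,(u^\top\Sigma u)I_n)$, so (by Gaussian maxima bounds) the sum of the $m$ largest squared entries is $O\bigl(m\log(n/m)\bigr)\cdot u^\top\Sigma u$, while $\|\bm Xu\|^2\ge \tfrac12 n\cdot u^\top\Sigma u$; hence $\|(\bm Xu)_S\|^2\ge\tfrac14\|\bm Xu\|^2$ for every $|S|\ge n-m$ whenever $m\log(n/m)\lesssim n$, which holds for our choice $m\asymp k\log d/\alpha^2$ (noting that the hypothesis $n\gtrsim \nu(\Sigma)k\log d/(\sigma_{\min}(\Sigma)\alpha^2)$ guarantees $n\gg m$). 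A standard covering argument over the intersection of the $\ell_1$-cone with the unit sphere (using the RE bound to uniformize) upgrades this to hold simultaneously for all $u$ in the cone.

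Once the hypotheses are verified, I would apply \cref{thm:oblivious-sparse-regression}, with the only bookkeeping modification that the column-norm bound $\sqrt{n}$ is replaced by $\sqrt{\nu(\Sigma)n}$. Inspection of the proof of \cref{thm:oblivious-sparse-regression} shows that this substitution multiplies both error bounds by a factor of $\nu(\Sigma)$; combined with $\lambda\asymp\sigma_{\min}(\Sigma)$ this produces exactly $\tfrac1n\|\bm X(\hat{\bm\beta}-\beta^*)\|^2\lesssim \nu(\Sigma)k\log d/(\sigma_{\min}(\Sigma)\alpha^2 n)$ and $\|\hat{\bm\beta}-\beta^*\|^2\lesssim \nu(\Sigma)k\log d/(\sigma_{\min}^2(\Sigma)\alpha^2 n)$. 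The main technical obstacle is the uniform well-spreadness property over the sparse cone: pointwise it is immediate from Gaussian concentration, but the union bound over the cone must be handled carefully (via a net argument combined with the RE estimate, or equivalently via Mendelson's small-ball / Gordon-type inequalities), much as in the proof of RE for Gaussian designs; all the other verifications are routine concentration arguments.
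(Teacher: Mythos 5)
The symmetrization step itself is fine: flipping the sign of each observation leaves the estimator of \cref{eq:results-estimator-regression} and both error quantities unchanged, keeps the design Gaussian and independent of the new noise, and makes the noise symmetric (the paper even mentions this trick in a footnote). The genuine gap is the claim that $\tilde{\bm\eta}$ ``meets the noise hypotheses of \cref{thm:oblivious-sparse-regression} with parameter $\alpha$.'' In \cref{thm:oblivious-sparse-regression} the parameter is $\alpha=\min_{i\in[n]}\bbP\Set{\abs{\bm\eta_i}\le 1}$, a \emph{per-coordinate} condition. After your symmetrization $\abs{\tilde{\bm\eta}_i}=\abs{\eta_i}$ is deterministic, so $\bbP\Set{\abs{\tilde{\bm\eta}_i}\le 1}$ equals $1$ on the $\alpha n$ inlier coordinates and $0$ on every coordinate with $\abs{\eta_i}>1$; the minimum is then $0$, and a black-box invocation of \cref{thm:oblivious-sparse-regression} is vacuous (its requirement $m\gtrsim k\log d/(\lambda\alpha^2)$ and its error bound degenerate). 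The distinction matters in the proof, not just the statement: in \cref{lem:local-strong-convexity-sparse-regression} the expected curvature is $\sum_i \alpha_i\iprod{X_i,u}^2\ind{\iprod{X_i,u}^2\le 1/4}$ and the lower bound $\tfrac{\alpha}{4}\norm{Xu}^2$ uses $\alpha_i\ge\alpha$ for \emph{every} $i$ together with well-spreadness over sets of cardinality at least $n-m$, which is far larger than $\alpha n$ in the regime $\alpha=o(1)$. With your noise the curvature comes only from the fixed set $A=\Set{i:\abs{\eta_i}\le1}$ of size $\alpha n$, and the needed bound $\norm{(Xu)_A}^2\ge\Omega(\alpha)\norm{Xu}^2$ uniformly over the sparse cone does not follow from the hypotheses of \cref{thm:oblivious-sparse-regression}; nor can you restore the per-coordinate hypothesis by randomly permuting samples, since that destroys independence of the noise entries.

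This is exactly why the paper does not reduce to \cref{thm:oblivious-sparse-regression} but reruns \cref{thm:meta-theorem} directly: the gradient bound (\cref{lem:subgradient-bound-regression-gaussian}) needs no symmetry because $\transpose{z}\bm X$ is Gaussian for the fixed vector $z$ with entries $f_2'(\eta_i)$, and restricted strong convexity (\cref{lem:local-strong-convexity-sparse-regression-gaussian}) exploits that $\bm X_A$, the restriction of the design to the deterministic inlier set, is itself an $\alpha n\times d$ Gaussian matrix, so \cref{thm:approx-k-sparse-concentration} and \cref{lem:well-spreadness-gaussian} applied to $\bm X_A$ yield $\norm{\bm X_A u}^2\ge\Omega(\alpha)\norm{\bm Xu}^2$ on the cone. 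To repair your argument you would need a variant of \cref{thm:oblivious-sparse-regression} whose noise assumption is ``a deterministic set of $\alpha n$ inlier coordinates'' and whose design assumption includes spreadness relative to that specific small set --- which is essentially the content of the paper's Section 7 rather than a corollary of Section 6. Your remaining verifications (column norms, RE with $\lambda\asymp\sigma_{\min}(\Sigma)$, well-spreadness for sets of size $m$) are in line with what \cref{thm:approx-k-sparse-concentration} and \cref{lem:well-spreadness-gaussian} establish, though the paper obtains the spreadness by a union bound over subsets combined with the sub-matrix concentration rather than by order statistics of $\bm Xu$ over a net.
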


Even for standard Gaussian design $X\sim N(0,1)^{n\times d}$, the above theorem improves over previous results, which required sub-optimal sample complexity $n\gtrsim\Paren{k/\alpha^2}\cdot\log d \cdot \log \Norm{\beta^*}$. 
For non-spherical Gaussian designs, the improvement over state of the art \cite{SuggalaBR019} is more serious: their algorithm requires $\alpha \ge \Omega\Paren{1/\log\log n}$, while our \cref{thm:oblivious-sparse-regression-gaussian-design} doesn't have such restrictions and works for all $\alpha \gtrsim \sqrt{\frac{\nu(\Sigma)}{\sigma_{\min}(\Sigma)}\cdot {\frac{k\log d}{ n}}}$; in many interesting regimes $\alpha$ is allowed to be smaller than $n^{-\Omega(1)}$. The dependence on $\alpha$ is nearly optimal: the estimator is consistent as long as  $\alpha \ge \omega\Paren{\sqrt{\frac{\nu(\Sigma)}{\sigma_{\min}(\Sigma)}\cdot {\frac{k\log d}{ n}}}}$, and from the discussion after \cref{thm:oblivious-sparse-regression-gaussian-design}, if $\alpha \le O\paren{\sqrt{(k\log d)/n}}$, no estimator is consistent.

Note that while we can deal with general covariance matrices, to compare \cref{thm:oblivious-sparse-regression-gaussian-design} with \cref{thm:oblivious-sparse-regression} it is easier to consider $\Sigma$ in a normalized form, when $\nu(\Sigma)\le 1$. This can be easily achieved by scaling $\bm X$.
Also note that \cref{thm:oblivious-sparse-regression} can be generalized to the case $\norm{X^i}\le  \sqrt{\nu n}$ for arbitrary $\nu > 0$, and then the error bounds and the bound on $m$ should be multiplied by $\nu$.

The RE-property of \cref{thm:oblivious-sparse-regression} is a standard assumption in  sparse regression and is satisfied by a large family of matrices. For example, with high probability a random matrix $\bm X$ with \iid rows sampled from $\cN(0, \Sigma)$, with positive definite $\Sigma\in \R^{d\times d}$ whose diagonal entries are bounded by $1$, satisfies the RE-property with parameter $\Omega(\sigma_{\min}(\Sigma))$ for all subsets of $[d]$ of size $k$ (so for every possible support of $\beta^*$) as long as long as $n\gtrsim \frac{1}{\sigma_{\min}\Paren{\Sigma}}\cdot k\log d$ (see \cite{wainwright_2019}, section 7.3.3). 
The well-spread assumption is satisfied for such $\bm X$ with for all sets $S\subset [n]$ of size $m\le cn$ (for sufficiently small $c$)  and for all subsets of $[d]$ of size $k$ as long as $n\gtrsim \frac{1}{\sigma_{\min}\Paren{\Sigma}}\cdot k\log d$.

\subsection{Optimal fraction of inliers for principal component analysis under oblivious noise}\label{sub:result-lower_bound}

We  show here that the dependence on $\alpha$ we obtain in \cref{thm:oblivious-pca} is information theroetically optimal up to constant factors.
Concretely, let $L^\ast,\bm N,\bm Y,\alpha,\rho$ and $\zeta$ be as in \cref{thm:oblivious-pca}, and let $0<\epsilon<1$ and $0<\delta<1$. A successful $(\epsilon,\delta)$-weak recovery algorithm for PCA is an algorithm that takes $\bm Y$ as input and returns a matrix $\hat{\bm L}$ such that $\Normf{\hat{\bm L}-L^*}\leq \epsilon\cdot \rho$ with probability at least $1-\delta$. 

It can be easily seen that the Huber-loss estimator of \cref{thm:oblivious-pca} fails to be a successful weak-recovery algorithm if $\alpha=o\paren{\sqrt{r/n}}$ (for both cases $\zeta\leq \rho/n$ and $\rho/n\leq \zeta$, we need $\alpha=\Omega(\sqrt{r/n})$.) A natural question to ask is whether the condition $\alpha=\Omega\paren{\sqrt{r/n}}$ is necessary in general. The following theorem shows that if $\displaystyle\alpha=o\paren{\sqrt{r/n}}$, then weak-recovery is information-theoretically impossible. This means that the (polynomially small) fraction of inliers that the Huber-loss estimator of \cref{thm:oblivious-pca} can deal with is optimal up to a constant factor.
\begin{theorem}
	\label{thm:IT_lower_bound}
	There exists a universal constant $C_0>0$ such that for every $0<\epsilon<1$ and $0<\delta<1$, if $\alpha:=\min_{i,j\in[n]}\mathbb{P}[|\bm N_{i,j}|\leq\zeta]$ satisfies $\alpha< C_0\cdot (1-\epsilon^2)^2\cdot(1-\delta)\cdot\sqrt{r/n}$,
	and $n$ is large enough, then it is information-theoretically impossible to have a successful $(\epsilon,\delta)$-weak recovery algorithm. 
	
	The problem remains information-theoretically impossible (for the same regime of parameters) even if we assume that $L^*$ is incoherent; more precisely, even if we know that $L^*$ has incoherence parameters that are as good as those of a random flat matrix of rank $r$, the theorem still holds.
\end{theorem}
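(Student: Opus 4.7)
The plan is an information-theoretic lower bound combining an adversarial Gaussian noise distribution with a high-entropy, rank-$r$ signal prior, concluded by Fano's inequality (with Le Cam's two-point method as a simpler variant). I would take $\bm N_{ij}$ to be i.i.d.\ $N(0,\sigma^2)$, which is symmetric and independent as required, and is convenient because the free parameter $\zeta$ can be set as we wish: taking $\zeta = \Theta(\rho/n)$ (comparable to the entry scale of $L^*$) gives $\alpha = 2\Phi(\zeta/\sigma)-1 \asymp \zeta/\sigma$, so tuning $\sigma$ realizes any desired $\alpha$. For the signal, I would use a ``spiked'' random model $L^* = L_{\mathrm{base}} + \bm V$, where $L_{\mathrm{base}} = (\rho\sqrt{1-\epsilon^2}/n)\, u u^\top$ is a fixed rank-$1$ matrix with $\|L_{\mathrm{base}}\|_F = \rho\sqrt{1-\epsilon^2}$ and max-norm $\rho\sqrt{1-\epsilon^2}/n$, and $\bm V$ is a random rank-$(r-1)$ matrix of the form $c\sum_{k=1}^{r-1}\bm u_k\bm v_k^\top$ with $\bm u_k,\bm v_k$ iid uniform in $\{\pm 1\}^n$, rescaled so that $\normm{\bm V}$ fits the remaining budget $\rho(1-\sqrt{1-\epsilon^2})/n$ and $\|\bm V\|_F = \Theta(\epsilon\rho)$. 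This ensures $\rank L^* \le r$, $\normm{L^*}\le \rho/n$, and encodes $\Omega(nr)$ random bits.

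The next step is to bound the mutual information between the signal and the observations. The Gaussian channel bound gives
\begin{equation*}
I(L^*;\bm Y) \le \frac{\bbE\|L^*\|_F^2}{2\sigma^2} \le \frac{\rho^2}{2\sigma^2},
\end{equation*}
while a packing argument inside the prior's support produces a subset of $N$ rank-$r$ matrices with pairwise Frobenius distance $\ge 2\epsilon\rho$ and $\log N = \Omega((1-\epsilon^2)\, n r)$ (the $(1-\epsilon^2)$ factor reflects that only the residual Frobenius budget above $L_{\mathrm{base}}$ is available for separated perturbations). By Fano's inequality,
\begin{equation*}
\Pr\bigl[\|\hat{\bm L} - L^*\|_F > \epsilon\rho\bigr] \;\ge\; 1 - \frac{I(L^*;\bm Y) + \log 2}{\log N},
\end{equation*}
so weak recovery fails with probability $\ge \delta$ provided $\sigma \gtrsim \rho/\bigl[(1-\epsilon^2)\sqrt{(1-\delta)nr}\bigr]$. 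Translating back to $\alpha$ via $\alpha \asymp \zeta/\sigma$ with $\zeta = \Theta(\rho/n)$ yields $\alpha \lesssim (1-\epsilon^2)(1-\delta)\sqrt{r/n}$; after tracking both the $(1-\epsilon^2)$ factor arising from the packing-entropy estimate and a second $(1-\epsilon^2)$ factor from combining Pinsker/Fano with the Frobenius-separation requirement, one obtains the stated threshold $(1-\epsilon^2)^2(1-\delta)\sqrt{r/n}$.

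The main technical obstacle will be establishing the packing bound $\log N = \Omega((1-\epsilon^2)^2\, nr)$ with the correct quantitative dependence on $\epsilon$; this is a volume/covering-number estimate on the intersection of the rank-$r$ manifold with the $\ell_\infty$-polytope $\{\normm{L}\leq \rho/n\}$ in $\R^{n\times n}$, restricted to matrices of suitably large Frobenius norm. For the incoherence refinement stated in the theorem, the random construction $\bm V = c\sum_k\bm u_k\bm v_k^\top$ yields incoherent $L^*$ with overwhelming probability by standard matrix-Bernstein concentration: the singular vectors of $\bm V$ concentrate around the normalized sign vectors $\bm u_k/\sqrt n$, which are $O(1/\sqrt n)$ in $\ell_\infty$, matching the incoherence of a generic random flat rank-$r$ matrix. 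Restricting the prior to this high-probability event preserves the impossibility conclusion, so the lower bound holds even under incoherence assumptions on $L^*$.
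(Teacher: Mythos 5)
Your overall skeleton (high-entropy rank-$r$ prior, an upper bound on $I(L^*;\bm Y)$, a Fano-type step) matches the paper, and your Gaussian-noise channel bound $I(L^*;\bm Y)\le \E\Normf{L^*-\E L^*}^2/(2\sigma^2)$ is a legitimate and even simpler substitute for the paper's mutual-information lemma. But there are two genuine gaps. First, the spiked prior is internally inconsistent: if $\normm{V}\le \rho(1-\sqrt{1-\epsilon^2})/n$ then $\Normf{V}\le n\normm{V}=\rho(1-\sqrt{1-\epsilon^2})<\epsilon\rho$ for every $\epsilon\in(0,1)$, so any two matrices in the support of your prior are at Frobenius distance strictly less than $2\epsilon\rho$ and the $2\epsilon\rho$-separated packing you need is empty; the claim $\Normf{V}=\Theta(\epsilon\rho)$ cannot coexist with the max-norm budget. (Matters get worse because with both $\bm u_k,\bm v_k$ random and overlapping supports the entries of $\sum_k\bm u_k\bm v_k^\top$ have typical size $\sqrt{r}$ and maximum $\approx\sqrt{r\log n}$, so enforcing flatness costs another $\sqrt{r\log n}$ in the rescaling and shrinks $\Normf{V}$ further; the paper avoids this by fixing disjoint-block $u_k$'s so the matrix is exactly flat.)

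Second, and more fundamentally, the packing-plus-Fano route cannot prove the theorem on its full range of $\epsilon$. Every admissible matrix satisfies $\Normf{L}\le n\normm{L}\le\rho$, i.e.\ lives in a Frobenius ball of radius $\rho$; a standard averaging (Rankin/Plotkin-type) argument shows that any family with pairwise distances $\ge 2\epsilon\rho$ inside such a ball has size at most $2\epsilon^2/(2\epsilon^2-1)=O(1)$ once $\epsilon>1/\sqrt{2}$, so the packing of size $\exp\paren{\Omega((1-\epsilon^2)^2 nr)}$ you flag as the ``main technical obstacle'' simply does not exist there, and classical Fano yields nothing, while the theorem claims impossibility for all $\epsilon<1$ (and impossibility at large $\epsilon$ is the strongest case, not implied by smaller $\epsilon$). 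The paper sidesteps exact identification entirely: with the flat block prior it shows, via Hoeffding, that for any \emph{fixed} admissible $L$ the prior probability of the event $\Normf{\bm L^*-L}\le\epsilon n$ is at most $\exp\paren{-\tfrac{(1-\epsilon^2)^2}{8}nr}$, so a successful weak-recovery output reduces the conditional entropy of the $nr$ sign bits by $\Omega((1-\epsilon^2)^2 nr)$; this ``small-ball'' Fano variant needs no packing and is what produces the $(1-\epsilon^2)^2$ factor for every $\epsilon<1$. To repair your proof, drop the base matrix, give the random flat part the full $\rho/n$ budget with disjoint-block factors (this also gives the incoherence claim cleanly), and replace the packing step by such a small-ball entropy-reduction argument.
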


\section*{Notation and outline}\label{sec:preliminaries}

\paragraph{Notation}
For $\beta\in \R^d$ we define the function $\norm{\beta}_0:=\sum_{i \in [d]}\ind{\beta_i\neq 0}\,.$ For a subspace $\Omega\subseteq\R^d$ we denote the projection of $\beta$ onto $\Omega$ by $\beta_{\Omega}$. We write $\Omega^\bot$ for the orthogonal complement of $\Omega$. For $N\in \N$ we denote $[N] := \Set{1,2,\ldots,N}$. We write $\log$ for the logarithm to the base $e$. 

For a matrix $X\in \R^{d\times d}$ we denote by $\rspan{X}$ and $\cspan{X}$ respectively the rows and columns span of $X$, and we write $\Norm{X}$ for the spectral norm of $X$, $\Normf{X}$ for its Frobenius norm, $\Normn{X}$ for its nuclear norm and  
$\Normm{X}:=\max_{i,j \in[n]}\Abs{X_\ij}$.  For a vector $v\in \R^N$ we write $\norm{v}$ for its Euclidean norm, $\normo{v} = \sum_{i=1}^N\abs{v_i}$ and $\normi{v_i} = \max_{i\in[N]}\abs{v_i}$.
For a norm $\Norm{\cdot}$ we write $\Norm{\cdot}^*$ for its dual. 
We denote by $\bm G\sim N(0,1)^{n\times d}$ a random $n$-by-$d$ matrix $\bm G$ with \iid standard Gaussian entries. Similarly, we denote by $\bm g\sim N(0,1)^n$ an $n$-dimensional random vector $\bm g$ with \iid standard Gaussian entries.

For a set $\cS$ and a metric $\rho:\cS\times\cS\rightarrow[0,\infty)$, we denote an  {$\eps$-net} in $\cS$ by $\cN_{\eps, \rho}(\cS)$. That is,  $\cN_{\eps, \rho}(\cS)$ is a subset of $\cS$ such that for any $u \in \cS$ there exists $v\in \cN_{\eps, \rho}(\cS)$ satisfying $\rho\paren{u,v}\leq \eps$.

\paragraph{Outline}
In \cref{section:techniques}, we present the main ideas behind our results.
In \cref{sec:m-estimators}, we introduce a general framework that allows us to prove our main theorems about consistent estimation (\cref{thm:oblivious-pca}, \cref{thm:oblivious-sparse-regression} and \cref{thm:oblivious-sparse-regression-gaussian-design}). 

Sections \ref{sec:oblivious-pca} to \ref{sec:lower-bound-pca} are devoted to the proofs of theorems from \cref{sec:results}. Concretely,  in \cref{sec:oblivious-pca} we prove \cref{thm:oblivious-pca}, \cref{sec:oblivious-sparse-regression} contains a proof of \cref{thm:oblivious-sparse-regression} and in \cref{sec:oblivious-gaussian-design} we prove \cref{thm:oblivious-sparse-regression-gaussian-design}.
Complementing these algorithmic results, we prove a lower bound (\cref{thm:IT_lower_bound}) in \cref{sec:lower-bound-pca}.

Finally, \cref{sec:additional-tools} contains the proofs of a few facts about the Huber loss and \cref{sec:random-matrices-bounds} contains a few facts from probability theory.
	
\section{Techniques}\label{section:techniques}

To illustrate our techniques in proving statistical guarantees for the Huber-loss estimator, we first use sparse linear regression as a running example. Then, we discuss how the same ideas apply to principal component analysis. Finally, we also remark our techniques for lower bounds.

\subsection{Sparse linear regression under oblivious noise}\label{sec:techniques-regression}

We consider the model of \cref{thm:oblivious-sparse-regression}. Our starting point to attain the guarantees for our estimator \cref{eq:results-estimator-regression}, i.e., $\hat{\bm  \beta} \coloneqq \arg \min_{\beta\in \R^d} \bm F_2(\beta)+ 100\sqrt{n\log d}\Norm{\beta}_1$, is a classical approach for $M$-estimators (see e.g. \cite{wainwright_2019}, chapter 9). 
For simplicity, we will refer to $\bm F_2(\beta)$ as the loss function and to $\Norm{\beta}_1$ as the regularizer.
At a high level, it consists of the following two ingredients:

\begin{enumerate}[(I)]
\item an upper bound on some norm of the gradient of the loss function at the parameter $\beta^*$, \label{enum_one}
\item a lower bound on the curvature of the loss function (in form of a local strong convexity bound) within a structured neighborhood of $\beta^*$. The structure of this neighborhood can roughly be controlled by choosing the appropriate regularizer. \label{enum_two}
\end{enumerate}

The key aspect of this strategy is that the strength of the statistical guarantees of the estimator crucially depends on the \textit{directions} and the \textit{radius} in which we can establish lower bounds on the curvature of the function.
Since these features are inherently dependent on the landscape of the loss function and the regularizer, they may differ significantly from problem to problem. 
This strategy has been applied successfully for many related problems such as compressed sensing or matrix completion albeit with standard noise assumptions.\footnote{The term "standard noise assumptions" is deliberately vague; as a concrete example, we will refer to  (sub)-Gaussian noise distributions.  See again chapter 9 of \cite{wainwright_2019} for a survey.}
Under oblivious noise, \cite{ICML-linear-regression} used a particular instantiation  of this framework to prove optimal convergence of the Huber-loss -- without any regularizer -- for standard linear regression.
Such estimator, however, doesn't impose any structure on the neighborhood of $\beta^*$ considered in (\ref*{enum_two}) and thus, can only be used to obtain sub-optimal guarantees for sparse regression.

In the context of sparse regression, the above two conditions translate to:
(\ref*{enum_one}) an upper bound on the largest entry in absolute value of the gradient of the loss function at $\beta^*$, and 
(\ref*{enum_two}) a lower bound on the curvature of $\bm F_2$ within the set of 
\textit{approximately} $k$-sparse vectors\footnote{We will clarify this notion in the subsequent paragraphs.} close to $\beta^*$. 
We use this recipe to show that \textit{all} approximate minimizers of $\bm F$ are close to $\beta^*$.
While the idea of restricting to only approximately sparse directions has also been applied for the LASSO estimator in sparse regression under standard (sub)-Gaussian noise, in the presence of oblivious noise, our analysis of the Huber-loss function requires a more careful approach.

More precisely, under  the assumptions of \cref{thm:oblivious-sparse-regression}, the error bound can be computed as
\begin{equation}\label{eq:bound-from-parameters}
O\Paren{\frac{s\Normreg{G}^*}{\kappa}}\,,
\end{equation}
where $G$ is a gradient of Huber loss at $\beta^*$, $\normreg{\cdot}^*$ is a norm dual to the regularization norm (which is equal to $\norm{\cdot}_{\max}$ for $\ell_1$ regularizer), $s$ is a \emph{structure} parameter, which is equal to $\sqrt{k/\lambda}$, and $\kappa$ is a restricted strong convexity parameter. Note that by error here we mean $\frac{1}{\sqrt{n}}\norm{X\paren{ \hat{\bm  \beta}- \beta^*}}$.

Similarly, under under  assumptions of \cref{thm:oblivious-pca}, we get the error bound \cref{eq:bound-from-parameters}, where  $G$ is a gradient of Huber loss at $L^*$, $\normreg{\cdot}^*$ is a norm dual to the nuclear norm (i.e. the spectral norm),  structure parameter $s$ is $\sqrt{r}$, and $\kappa$ is a restricted strong convexity parameter. 
For more details on the conditions of the error bound, see the supplementary material. Below, we explore the bounds on the norm of the gradient and on the restricted strong convexity parameter.

\paragraph{Bounding the gradient of the Huber loss}
The gradient of the Huber-loss $\bm F_2(\cdot)$ at $\beta^*$ has the form $\nabla \bm F_2(\beta^*) = \sum_{i=1}^{n} f'_2\Brac{\bm \eta_i}\cdot X_i$,
where $X_i$ is the $i$-th row of $X$. The random variables $ f'_2\Brac{\bm \eta_i}$, $i\in [n]$, are independent, centered,  symmetric and bounded by $2$.
Since we assume that each column of $X$ has norm at most $\sqrt{n}$, the entries of the row $X_i$ are easily bounded by $\sqrt{n}$.
Thus, $\nabla \bm F_2(\beta^*) $ is a vector with independent, symmetric entries with bounded variance, so its behavior can be easily studied through standard concentration bounds. 
In particular, by a simple application of Hoeffding's inequality, we obtain, with high probability,
\begin{equation}\label{eq:techniques-concentration-gradient}
	\Normm{\nabla \bm F_2(\beta^*) } = \max_{j \in [d]} \BIGabs{\underset{i \in [n]}{\sum} f'_2\Brac{\bm \eta_i}\cdot  X_{\ij}}
	\leq {O}\Paren{\sqrt{n\log d}}\,.
\end{equation}

\paragraph{Local strong convexity of the Huber loss}
Proving local strong convexity presents additional challenges.
Without the sparsity constraint, \cite{ICML-linear-regression} showed that under a slightly stronger spreadness assumptions than \cref{thm:oblivious-sparse-regression}, the Huber loss is locally strongly convex within a constant radius $R$ ball centered at $\beta^*$ whenever $n\gtrsim d/\alpha^2$. (This function is not globally strongly convex due to its linear parts.)
Using their result as a black-box, one can obtain the error guarantees of \cref{thm:oblivious-sparse-regression}, but with suboptimal sample complexity.
The issue is that with the substantially smaller sample size of $n\geq \tilde{O}(k/\alpha^2)$ that resembles the usual considerations in the context of sparse regression, the Huber loss is \textit{not} locally strongly convex around $\beta^*$ uniformly across \textit{all directions}, so we cannot hope to prove convergence with optimal sample complexity using this argument.
To overcome this obstacle, we make use of the framework of M-estimators:
Since we consider a regularized version of the Huber loss, it will be enough to show local strong convexity in a radius $R$ uniformly across all directions which are \textit{approximately} $k$-sparse.
For this substantially weaker condition, $\tilde{O}(k/\alpha^2)$ will be enough.

More in details, for observations $\bm y= X\beta^*+\bm \eta$ and an arbitrary $u\in \R^d$ of norm $\norm{u}\leq R$, it is possible to lower bound the  Hessian\footnote{The Hessian does not exist everywhere. Nevertheless, the second derivative of the penalty function $f_2$ exists as an $L_1$ function in the sense that $f'_2(b)-f'_2(a)=2\int_{a}^{b}\ind{\Abs{t}\leq 2}dt$. This property is enough for our purposes.} of the Huber loss at $\beta^*+u$ by:\footnote{A more extensive explanation of the first part of this analysis can be found in \cite{ICML-linear-regression}.}
\begin{align*}
 H\bm F_2(\beta^*+u) 
 &= 
 \sum_{i=1}^n \bm f''_2\Brac{(Xu)_i-\bm \eta_i}\cdot X_i \transpose{X_i}
 \\&= \sum_{i=1}^n  \ind{\Abs{(Xu)_i-\bm \eta_i}\leq 2} \cdot X_i \transpose{X_i}
 \\&\sge M(u) := \sum_{i=1}^n \ind{\Abs{\iprod{X_i,u}}\leq 1} \cdot \ind{\Abs{\bm \eta_i}\leq 1} \cdot X_i \transpose{X_i}
\end{align*}

As can be observed, we do not attempt to exploit cancellations between $Xu$ and $\bm \eta$.
Let $\bm Q:=\Set{i\in [n]\suchthat \Abs{\bm \eta_i}\leq 1}$ be the set of uncorrupted entries of $\bm \eta$.
Given that with high probability $\bm Q$ has size $\Omega(\alpha\cdot n)$, the best outcome we can hope for is to provide a lower bound of the form $\iprod{u,M(u)u}\geq \alpha n$ in the direction $\hat{\bm \beta}-\beta^*\,.$
In \cite{ICML-linear-regression}, it was shown that if the span of the measurement matrix $X$ is well spread, then $\iprod{u, M(u)u}\geq \Omega(\alpha\cdot n)$.

If the direction $\hat{\bm \beta}-\beta^*$ was fixed, it would suffice to show the curvature in that single direction through the above reasoning.
However, $\hat{\bm \beta}$ depends on the unknown random noise vector $\bm \eta$ . 
Without the regularizer in \cref{eq:results-estimator-regression}, this dependence indicates that the vector $\hat{\bm \beta}$ may take any possible direction, so one needs to ensure local strong convexity to hold in a constant-radius ball centered at $\beta^*$.
That is, $\min_{\norm{u}\leq R}\lambda_{\min} (M(u))\geq \Omega(\alpha \cdot n)\,.$
It can be shown through a covering argument (of the ball) that this bound holds true for $ n\geq d/\alpha^2$. This is the approach of \cite{ICML-linear-regression}.

\paragraph{The minimizer of the Huber loss follows a sparse direction}
The main issue with the above approach is that no information concerning the direction $\hat{\bm \beta}-\beta^*$ is used.
In the settings of sparse regression, however, our estimator contains the regularizer $\normo{\beta}$. 
The main consequence of  the regularizer is that the \textit{direction} $\hat{\bm \beta}-\beta^*$ is approximately flat in the sense 
$\normo{\hat{\bm \beta}-\beta^*}\le O\Paren{\sqrt{k}\norm{\hat{\bm \beta}-\beta^*}}\,.$
The reason\footnote{This phenomenon is a consequence of the \textit{decomposability} of the $L_1$ norm, see the supplementary material.} is that due to the structure of the objective function in \cref{eq:results-estimator-regression} and concentration of the gradient \cref{eq:techniques-concentration-gradient}, the penalty for dense vectors is larger than the absolute value of the inner product $\iprod{\nabla\bm F(\beta^*), \hat{\bm \beta}-\beta^*}$ (which, as previously argued, concentrates around its (zero) expectation).

This specific structure of the minimizer implies that it suffices to prove local strong convexity \textit{only} in  approximately sparse directions. For these set of directions, we carefully construct a sufficiently small covering set  so that  $n\geq \tilde{O}(k/\alpha^2)$ samples suffice to ensure local strong convexity over it.

\begin{remark}[Comparison with LASSO]
	it is important to remark that while this approach of only considering approximately sparse directions has also been used in the context of sparse regression under Gaussian noise (e.g. the LASSO estimator), obtaining the desired lower bound is considerably easier in these settings as it directly follows from the restricted eigenvalue property of the design matrix.
	In our case, we require an additional careful probabilistic analysis which uses a covering argument for the set of approximately sparse vectors.
	As we see however, it turns out that we do not need any additional assumptions on the design matrix when compared with the LASSO estimator except for the well-spreadness property (recall  that some weak version of well-spreadness is indeed necessary in robust settings, see \cite{ICML-linear-regression}).
\end{remark}

\subsection{Principal component analysis under oblivious noise}\label{sec:techniques-pca}
A convenient feature of the approach in \cref{sec:techniques-regression} for sparse regression, is that it can be easily applied to additional problems. We briefly explain here how to apply it for principal component analysis. 
We consider the model defined in \cref{thm:oblivious-pca}.
We use an estimator based on the Huber loss equipped with the nuclear norm as a regularizer to enforce the low-rank structure in our estimator 
\begin{align}
	\hat{\bm L}\coloneqq  \qquad  \argminlimitversion_{\mathclap{\substack{L\in\R^{n\times n},\; \Normm{L}\leq \rho/n}}} \qquad \Paren{F_{\zeta+\rho/n}(\bm Y- L)+ 100\sqrt{n}\Paren{\zeta+\rho/n} \Normn{L}}\,.
\end{align}
In this setting, the gradient $\nabla F_{\zeta+\rho/n}(\bm Y- L^*)$ is a matrix with independent, symmetric entries which are bounded (by $\zeta+\rho/n$) and hence its spectral norm is $O\Paren{(\zeta+\rho/n)\sqrt{n}}$ with high probability.
Local strong convexity can be obtained in a similar fashion as shown in \cref{sec:techniques-regression}: due to the choice of the Huber transition point all entries with small noise are in the quadratic part of $F$. Moreover, the nuclear norm regularizer ensures that the minimizer is an approximately low-rank matrix in the sense that $\normn{M}\le O\Paren{\sqrt{r}\Normf{M}}$. So again, it suffices to provide curvature of the loss function only on these subset of structured directions.

\begin{remark}[Incoherence vs. spikiness]\label{remark:incoherence}
Recall the discussion on incoherence in \cref{sec:results}.
If for every $\mu \le n/\log^2 n$ matrix $L^*$ doesn't satisfy $\mu$-incoherence conditions, the results in \cite{DBLP:journals/jacm/CandesLMW11,DBLP:conf/isit/ZhouLWCM10} cannot be applied. 
However, our estimator achieves error $\normf{\hat{\bm L}-L^*}/ \Normf{L^*}\to 0$ as $n\to \infty$. 
Indeed,let $\omega(1)\le f(n)\le o(\log^2 n)$ and assume $\zeta = 0$. Let $u\in \R^n$ be an $f(n)$-sparse unit vector whose nonzero entries are equal to $1/\sqrt{f(n)}$. Let $v\in \R^n$ be a vector with all entries equal to $1/\sqrt{n}$. 
Then, $u\transpose{v}$ does not satisfy incoherence with any $\mu < n/f(n)$. 
We have $\Normf{u\transpose{v}} = 1$, and the error of our estimator is $O\paren{1/(\alpha \sqrt{f(n)})}$, so it tends to zero for constant (or even some subconstant) $\alpha$.
	
	Furthermore notice that the dependence of the error of \cref{thm:oblivious-pca} on the maximal entry of $L^*$ is inherent if we do not require incoherence. Indeed, consider  $L_1=b\cdot e_1\transpose{e_1}$ for large enough $b > 0$ and $L_2 = e_2\transpose{e_2}$. For constant $\alpha$, let $\abs{N_{ij}}$ be $1$ with probability $\alpha/2$, $0$ with probability $\alpha/2$ and $b$ with probability $1-\alpha$. Then given $Y$ we cannot even distinguish between cases $L^*=L_1$ or $L^*=L_2$, and since $\Normf{L_1-L_2}\ge b$, the error should also depend on $b$.
\end{remark}

\begin{remark}[$\alpha$ vs $\alpha^2$: what if one knows which entries are corrupted?]
	As was observed in \cref{sec:results}, the error bound of our estimator is worse than the error for matrix completion by a factor $1/\sqrt{\alpha}$. We observe similar effect in linear regression: if, as in matrix completion, we are given a randomly chosen $\alpha$ fraction of observations $\Set{\Paren{X_i, y_i = \iprod{X_i, \beta^*} + \bm \eta_i}}_{i=1}^n$ where $\bm\eta\sim N(0,1)^n$, and since for the remaining  samples we may not assume any bound on the signal-to-noise ratio, then this problem is essentialy the same as linear regression with $\alpha n$ observations. Thus the optimal prediction error rate is $\Theta\paren{\sqrt{d/(\alpha n)}}$. Now, if we have $y = X\beta^* +\bm\eta$, where $\bm\eta \sim N(0,1/\alpha^2)$, then with probability $\Theta(\alpha)$, $\abs{\eta_i}\le 1$, but the optimal prediction error rate in this case is $\Theta\paren{\sqrt{d/(\alpha^2 n)}}$. So in both linear regression and robust PCA, \textit{prior knowledge} of the set of corrupted entries makes the problem easier.
\end{remark}

\subsection{Optimal fraction of inliers for principal component analysis under oblivious noise}
In order to prove \cref{thm:IT_lower_bound}, we will adopt a generative model for the hidden matrix $\bm L^*$: We will generate $\bm L^*$ randomly but assume that the distribution is known to the algorithm. This makes the problem easier. Therefore, any impossibility result for this generative model would imply impossibility for the more restrictive model for which $L^*$ is deterministic but unknown.

We generate a random flat matrix $\bm L^*$ using $n\cdot r$ independent and uniform random bits in such a way that $\bm L^*$ is of rank $r$ and incoherent with high probability. Then, for every constant $0<\xi<1$, we find a distribution for the random noise $\bm N$ in such a way that the fraction of inliers satisfies $\alpha:=\mathbb{P}[|\bm N_{ij}|\leq \zeta]=\Theta\left(\xi\sqrt{r/n}\right)$, and such that the mutual information between $\bm L^*$ and $\bm Y=\bm L^*+\bm N$ can be upper bounded as $I(\bm L^*;\bm Y)\leq O(\xi\cdot n\cdot r)$. Roughly speaking the smaller $\xi$ gets, the more independent $\bm L^*$ and $\bm Y$ will be. Now using an inequality that is similar to the standard Fano-inequality but adapted to weak-recovery, we show that if there is a successful $(\epsilon,\delta)$-weak recovery algorithm for $\bm L^*$ and $\bm N$, then $I(\bm L^*;\bm Y)\geq \Omega\left((1-\epsilon^2)^2\cdot(1-\delta)\cdot n\cdot r\right)$. By combining all these observations together, we can deduce that if $\xi$ is small enough, it is impossible to have a successful $(\epsilon,\delta)$-weak recovery algorithm for $\bm L^*$ and $\bm N$.
	\section{Meta-Theorem}\label{sec:m-estimators}

We present a high-level theorem which will be applied to prove \cref{thm:oblivious-pca}, \cref{thm:oblivious-sparse-regression} and \cref{thm:oblivious-sparse-regression-gaussian-design}.
Recall the general setting an estimation problem: 
we start with a family of probability distributions $\cP:=\Set{\bbP_\theta\suchthat \theta\in \Omega}$ over some space $\cZ$ and indexed by some parameter $\theta\in \Omega$. 
We observe a collection of $n$ \textit{independent} samples $\bm Z=\Paren{\bm Z_1,\ldots, \bm Z_n}$  taking value in $\cZ$, drawn from an unknown probability distribution $\bbP_{\theta^*}\in \cP$. 
We assume $\Omega\subseteq \R^d$ and $\cZ \subseteq \R^D$ for some integers $d$ and $D$.
Our goal is then to recover $\theta^*$.
That is, given $\bm Z$, the goal is to find $\hat{\bm \theta}\in \R^d$ such that for some suitable  error function $\cE:\R^d\rightarrow [0,\infty)$, the value $\cE\Paren{\theta^*-\hat{\bm\theta}}$ is as small as possible.
It is clear that this general setting also captures settings in which the observations are perturbed by oblivious adversarial noise. 

On a high level, we will use the following scheme:
\begin{enumerate}
	\item Let $\normreg{\cdot}:\R^d\rightarrow [0,\infty)$ be a norm, and let $\gamma\in \R$ be a scalar. 
	Design a cost function $\bm F:\R^d\rightarrow \R_{\geq 0}$ which depends on $\bm Z\,.$
	\item For a set $\cC\subseteq\R^d\,,$ show that the target parameter  (or some approximation of it) 
	\begin{align*}
		\hat{\bm  \theta}:=\arg\min_{\theta \in \cC}\Paren{\bm F\Paren{\theta}+\gamma\normreg{\theta}}
	\end{align*} satisfies $\cE\Paren{\theta^*-\hat{\bm \theta}}\leq R$ for some acceptable $R\geq 0$ with high probability over the samples $\bm Z\,.$ 
	\item Argue that $\hat{\bm  \theta}$ can be computed efficiently. 
\end{enumerate}
The norm $\normreg{\cdot}$ is often referred to as a \textit{regularizer}. Its role is to enforce a certain structure on the target parameter.
For example, in the context of sparse linear regression $\bm y = X\beta^*+\bm \eta$  with $\beta^*\in \R^d$ being a $k$-sparse vector, the LASSO estimator:
$\hat{\bm \beta}:=\arg\min_{\beta\in \R^d}\Paren{\Snorm{X\beta-\bm y}+\gamma\Normo{\beta}}$ follows the description above.
In this example, the cost function is the squared euclidean norm and the regularizer corresponds to a convex relaxation of the norm $\Norm{\beta}_0\,.$

If the cost function and the set $\cC$ are convex and satisfy mild assumptions, the estimator can be computed efficiently (in polynomial time). The estimators that we use for PCA and sparse linear regression can be computed in polynomial time. For more details on computational aspects of convex optimization, see \cite{vishnoi2018algorithms}.

 For convex cost functions the meta-theorem below   (which appears in different forms in the literature, e.g. see \cite{wainwright_2019}, section 9.4) can be used to mechanically bound the guarantees of the estimator. Before stating the theorem, let's define the following set: for a norm $\normreg{\cdot}$ and for a vector subspace $V\subseteq \R^d$ and $b\ge 1$, we denote
	\[
	\cS_b\paren{V} = \Set{u\in\R^d \suchthat \normreg{u}\le b\normreg{u^{}_V}}\,,
	\]
	where $u^{}_V$ is the orthogonal projection of $u$ on $V$.

\begin{theorem}\label{thm:meta-theorem}
	Let $\gamma,\kappa, R, s$ be positive real numbers and let $\cC \subseteq \R^d$ be a convex set. Consider a vectors space $\Omega \subseteq \R^d$ and let $\theta^* \in \Omega\cap \cC$. 
	
	Let $\normreg{\cdot}:\R^d\rightarrow [0,\infty)$ be a norm and consider a continuous error function $\cE:\R^d\rightarrow [0,\infty)$ such that $\cE(0)=0$.
	Let $F:\R^d\rightarrow \R$ be a convex differentiable cost function.
	
	Suppose that there exists a vector space $\overline{\Omega}$ such that $\Omega\subseteq \overline{\Omega}\subseteq\R^d$ and such that the following properties hold:
	\begin{enumerate}
		\item[(Decomposability)] For all $u\in \Omega$ and $v\in \overline{\Omega}^\bot$,
		\begin{align}\label{eq:decomposability}
		\normreg{v+u}=\normreg{v}+\normreg{u}\,.
		\end{align}
        \item[(Contraction)] For all $u\in \cS_4\paren{\overline{\Omega}}$,
		\begin{align}\label{eq:contraction}
		\normreg{u}\leq s \cdot \cE\Paren{u}\,.
		\end{align}
		\item[(Gradient bound)] The dual norm of $\normreg{\cdot}$ of gradient of $F$ at $\theta^*$ satisfies
		\begin{align}\label{eq:subgradient-bound}
			\normreg{\nabla F\Paren{\theta^*}}^*\leq \gamma/2\,.
		\end{align}
		\item[(Restricted local strong convexity)] 
		 Let \[\cB_R:=\Set{u \in \R^d\suchthat \cE\Paren{u} =  R\,, \theta^*+u\in \cC}\,.\] 
		 Then
		 \begin{align}\label{eq:local-strong-convexity}
		 \forall u\in \cB_R\cap \cS_4\paren{\overline{\Omega}} \qquad
			F({\theta^*+u})\geq F(\theta^*)+
			\iprod{\nabla F\Paren{\theta^*},u}+
			\frac{\kappa}{2}\Paren{\cE\Paren{u}}^2\,.
		\end{align}
		\item[(Bound on radius)]  Parameters $\gamma$,$\kappa$, $R$ and $s$ satisfy	
		\begin{align}\label{eq:bound-on-radius}
		\frac{\gamma\cdot s}{\kappa} \leq R/4\,.
		\end{align}
		\end{enumerate}
		Then, for every $\theta' \in \cC$ such that 
		$F(\theta') + \gamma \normreg{\theta'} \le F(\theta^*) + \gamma \normreg{\theta^*}\,,$
		\begin{align*}
			\cE\Paren {\theta'-\theta^*} < R\,.
		\end{align*}
\end{theorem}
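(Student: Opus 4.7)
The plan is to argue by contradiction: suppose some $\theta' \in \cC$ satisfies $F(\theta') + \gamma\normreg{\theta'} \le F(\theta^*) + \gamma\normreg{\theta^*}$ yet $\cE(\theta'-\theta^*) \ge R$. Along the segment $\theta_t := (1-t)\theta^* + t\theta'$, both $F$ and $\normreg{\cdot}$ are convex, so this inequality persists for every $t \in [0,1]$, and $\theta_t \in \cC$ by convexity of $\cC$. Continuity of $\cE$ together with $\cE(0) = 0$ and $\cE(\theta' - \theta^*) \ge R$ supplies a $t^* \in (0,1]$ with $\cE(\theta_{t^*} - \theta^*) = R$; setting $u := \theta_{t^*} - \theta^*$ places $u \in \cB_R$ and keeps the objective inequality in force at $\theta^* + u$. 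The goal is then to derive $\cE(u) < R$, which contradicts $\cE(u) = R$.

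Next I would verify the cone membership $u \in \cS_4(\overline{\Omega})$ so that the restricted strong convexity hypothesis becomes usable. Decomposing $u = u_{\overline{\Omega}} + u_{\overline{\Omega}^\bot}$ and applying decomposability \cref{eq:decomposability} to the pair $(\theta^*, u_{\overline{\Omega}^\bot}) \in \Omega \times \overline{\Omega}^\bot$, followed by the reverse triangle inequality to reinsert the leftover $u_{\overline{\Omega}}$ piece, gives
\[
\normreg{\theta^* + u} \;\ge\; \normreg{\theta^*} + \normreg{u_{\overline{\Omega}^\bot}} - \normreg{u_{\overline{\Omega}}}.
\]
Convexity of $F$ and the dual-norm inequality combined with \cref{eq:subgradient-bound} yield $F(\theta^* + u) - F(\theta^*) \ge \iprod{\nabla F(\theta^*), u} \ge -\tfrac{\gamma}{2}\normreg{u}$. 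Feeding both into the objective inequality rearranges to $\normreg{u_{\overline{\Omega}^\bot}} \le \normreg{u_{\overline{\Omega}}} + \tfrac{1}{2}\normreg{u}$, and combining with the triangle inequality $\normreg{u} \le \normreg{u_{\overline{\Omega}}} + \normreg{u_{\overline{\Omega}^\bot}}$ delivers $\normreg{u} \le 4\normreg{u_{\overline{\Omega}}}$, i.e.\ $u \in \cS_4(\overline{\Omega})$.

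With $u \in \cB_R \cap \cS_4(\overline{\Omega})$, restricted local strong convexity \cref{eq:local-strong-convexity} gives $F(\theta^*+u) - F(\theta^*) \ge \iprod{\nabla F(\theta^*), u} + \tfrac{\kappa}{2}\cE(u)^2$. Combined with the decomposability-derived upper bound $F(\theta^*+u) - F(\theta^*) \le \gamma\normreg{u_{\overline{\Omega}}}$ (obtained by dropping the nonpositive $-\gamma\normreg{u_{\overline{\Omega}^\bot}}$ term from the displayed bound), the gradient bound $\iprod{\nabla F(\theta^*), u} \ge -\tfrac{\gamma}{2}\normreg{u}$, the contraction \cref{eq:contraction} $\normreg{u} \le s\cE(u)$, and the standard property $\normreg{u_{\overline{\Omega}}} \le \normreg{u}$ that holds for the $\ell_1$- and nuclear-norm regularizers of the applications, one arrives at the quadratic-in-$\cE(u)$ inequality $\tfrac{\kappa}{2}\cE(u)^2 \le \tfrac{3\gamma s}{2}\cE(u)$. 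Dividing by $\cE(u) > 0$ and invoking the radius bound \cref{eq:bound-on-radius} produces $\cE(u) \le 3\gamma s/\kappa \le 3R/4 < R$, the desired contradiction.

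The only genuinely delicate step is the cone-membership argument: one must invoke decomposability with $\theta^*$ (which lies in $\Omega$ by hypothesis, even though $u$ itself need not) playing the role of the $\Omega$-vector and $u_{\overline{\Omega}^\bot}$ playing the role of the $\overline{\Omega}^\bot$-vector, and must carefully track the $u_{\overline{\Omega}}$- and $u_{\overline{\Omega}^\bot}$-pieces under the combined pressure of the optimality inequality and the gradient bound. The remainder is bookkeeping, with the constants $4$ in $\cS_4$ and $1/4$ in the radius bound tuned precisely so that the final inequality $\cE(u) \le 3R/4$ is strict.
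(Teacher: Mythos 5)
Your proposal follows essentially the same route as the paper: argue by contradiction, use convexity of the objective and continuity of $\cE$ to land on a point $u=\tilde\theta-\theta^*$ with $\cE(u)=R$, establish the cone membership $u\in\cS_4(\overline{\Omega})$ via decomposability, the triangle inequality, convexity of $F$ and the gradient bound (this is exactly the paper's \cref{lem:regularizer-decomposability}), and then combine restricted strong convexity, the contraction bound and \cref{eq:bound-on-radius} to force $\cE(u)\le 3\gamma s/\kappa\le 3R/4<R$.

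The one step that does not survive in the stated generality is your use of $\normreg{u_{\overline{\Omega}}}\leq\normreg{u}$. Here $u_{\overline{\Omega}}$ is the \emph{Euclidean} orthogonal projection onto $\overline{\Omega}$, and for a general norm $\normreg{\cdot}$ and a general subspace this projection need not be a contraction (e.g.\ in $\R^2$ with the $\ell_1$ norm and $\overline{\Omega}=\Span\{(1,1/2)\}$, projecting $(1,0)$ increases the $\ell_1$ norm). You flag it as a property of the $\ell_1$/nuclear-norm applications, which is true for the specific subspaces used there, but it is not among the hypotheses of the meta-theorem, so as written the argument proves something slightly weaker than the stated result. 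The fix is to drop the detour through $\normreg{u_{\overline{\Omega}}}$ entirely: from $F(\theta^*+u)+\gamma\normreg{\theta^*+u}\le F(\theta^*)+\gamma\normreg{\theta^*}$ and the plain triangle inequality one gets $F(\theta^*+u)-F(\theta^*)\le\gamma\paren{\normreg{\theta^*}-\normreg{\theta^*+u}}\le\gamma\normreg{u}\le\gamma s\,\cE(u)$, which is exactly what the paper uses and yields the same final constant $3\gamma s/\kappa$. With that substitution your argument coincides with the paper's proof.
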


For completeness, we include  a proof of \cref{thm:meta-theorem}.  We will need the following lemma.

\begin{lemma}\label{lem:regularizer-decomposability}
	Consider the settings of \cref{thm:meta-theorem}. If  $\theta\in \cC$ satisfies
	\[
	F(\theta) + \gamma \normreg{\theta} \le F(\theta^*) + \gamma \normreg{\theta^*}\,,
	\]
	then $\theta-\theta^*\in \cS_4\paren{\overline{\Omega}}$.
\end{lemma}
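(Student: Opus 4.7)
The plan is to follow the standard $M$-estimator decomposition argument. Set $u \coloneqq \theta - \theta^*$ and decompose it orthogonally as $u = u_{\overline{\Omega}} + u_{\overline{\Omega}^\bot}$. Our goal is to show $\normreg{u_{\overline{\Omega}^\bot}} \leq 3\normreg{u_{\overline{\Omega}}}$, which by the triangle inequality will give $\normreg{u} \leq 4\normreg{u_{\overline{\Omega}}}$ and hence $u \in \cS_4(\overline{\Omega})$.

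The first step is to rewrite the hypothesis as $F(\theta^* + u) - F(\theta^*) \leq \gamma\bigl(\normreg{\theta^*} - \normreg{\theta^* + u}\bigr)$. On the left-hand side, I would use convexity of $F$ to get $F(\theta^* + u) - F(\theta^*) \geq \iprod{\nabla F(\theta^*), u}$, and then apply the dual-norm inequality together with the gradient bound \cref{eq:subgradient-bound} to obtain $\iprod{\nabla F(\theta^*), u} \geq -\normreg{\nabla F(\theta^*)}^* \cdot \normreg{u} \geq -(\gamma/2)\normreg{u}$. Combined, this yields
\[
-\tfrac{1}{2}\normreg{u} \;\leq\; \normreg{\theta^*} - \normreg{\theta^* + u}.
\]

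For the right-hand side, I would use decomposability \cref{eq:decomposability}. Since $\theta^* \in \Omega$ and $u_{\overline{\Omega}^\bot} \in \overline{\Omega}^\bot$, decomposability gives $\normreg{\theta^* + u_{\overline{\Omega}^\bot}} = \normreg{\theta^*} + \normreg{u_{\overline{\Omega}^\bot}}$. A triangle inequality then yields
\[
\normreg{\theta^* + u} \;=\; \normreg{(\theta^* + u_{\overline{\Omega}^\bot}) + u_{\overline{\Omega}}} \;\geq\; \normreg{\theta^*} + \normreg{u_{\overline{\Omega}^\bot}} - \normreg{u_{\overline{\Omega}}},
\]
so $\normreg{\theta^*} - \normreg{\theta^* + u} \leq \normreg{u_{\overline{\Omega}}} - \normreg{u_{\overline{\Omega}^\bot}}$.

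Combining the two bounds gives $-\tfrac{1}{2}\normreg{u} \leq \normreg{u_{\overline{\Omega}}} - \normreg{u_{\overline{\Omega}^\bot}}$. Upper-bounding $\normreg{u} \leq \normreg{u_{\overline{\Omega}}} + \normreg{u_{\overline{\Omega}^\bot}}$ and rearranging yields $\tfrac{1}{2}\normreg{u_{\overline{\Omega}^\bot}} \leq \tfrac{3}{2}\normreg{u_{\overline{\Omega}}}$, as desired. The only subtle point—and the main thing to be careful about—is that orthogonal projection places $u_{\overline{\Omega}}$ in $\overline{\Omega}$ but not necessarily in $\Omega$, so decomposability cannot be applied to the pair $(u_{\overline{\Omega}}, u_{\overline{\Omega}^\bot})$ directly; the argument must instead split $\theta^* + u$ by grouping $\theta^* \in \Omega$ with $u_{\overline{\Omega}^\bot}$ and treating $u_{\overline{\Omega}}$ through the triangle inequality, as above.
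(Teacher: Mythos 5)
Your proof is correct and follows essentially the same route as the paper: both arguments group $\theta^*\in\Omega$ with $u_{\overline{\Omega}^\bot}$ to invoke decomposability, handle $u_{\overline{\Omega}}$ via the triangle inequality, and combine this with convexity of $F$ and the gradient bound \cref{eq:subgradient-bound} to get $\normreg{u_{\overline{\Omega}^\bot}}\le 3\normreg{u_{\overline{\Omega}}}$ and hence $\normreg{u}\le 4\normreg{u_{\overline{\Omega}}}$. The only differences (dividing through by $\gamma>0$ early, and your explicit remark that $u_{\overline{\Omega}}$ need not lie in $\Omega$) are cosmetic.
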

\begin{proof}
		Denote $\Delta=\theta-\theta^*\,.$
		By the decomposability of the regularizer \cref{eq:decomposability},
		\begin{align*}
			\Normreg{\theta^*+\Delta} &= \Normreg{\theta^*_{\Omega}+\Delta_{\overline{\Omega}}+\Delta_{\overline{\Omega}^\bot}} & \\
			&\geq \Normreg{\theta^*_{\Omega}+\Delta_{\overline{\Omega}^\bot}}-\Normreg{\Delta_{\overline{\Omega}}} & \mbox{ (Triangle Inequality)}\\
			&= \Normreg{\theta^*_{\Omega}}+\Normreg{\Delta_{\overline{\Omega}^\bot}}-\Normreg{\Delta_{\overline{\Omega}}}\,. & \mbox{ (Decomposability of } \normreg{\cdot} \mbox{ )}.
		\end{align*}
		By convexity of the cost function and \Holder's inequality,
		\[
			F\Paren{\theta^*+\Delta}-F(\theta^*)\geq -\Abs{\iprod{\nabla F\Paren{\theta^*}, \Delta}}
			\geq -\normreg{\nabla F\Paren{\theta^*}}^*\cdot \normreg{\Delta}\,.
		\]
		Hence by the gradient bound and the decomposability of the regularizer, 
		\[
		F\Paren{\theta^*+\Delta}-F(\theta^*)\geq -\frac{\gamma}{2}\cdot  \normreg{\Delta}= -\frac{\gamma}{2}\Paren{\Normreg{\Delta_{\overline{\Omega}}}+\Normreg{\Delta_{\overline{\Omega}^\bot}}}\,.
		\]
		Recall that $F(\theta^*+{\Delta})+\gamma\,\normreg{\theta^*+{\Delta}}\leq F(\theta^*)+\gamma\,\normreg{\theta^*}$, hence
		\begin{align*}
			0& \geq \gamma\Paren{\Normreg{\theta^*+{\Delta}}-\Normreg{\theta^*_{\Omega}}}+ \Paren{F\Paren{\theta^*+\Delta}-F(\theta^*)} \\
			& \geq \gamma\Paren{\Normreg{\theta^*+{\Delta}}-\Normreg{\theta^*_{\Omega}}}-\frac{\gamma}{2}\Paren{\Normreg{{\Delta}_{\overline{\Omega}}}+\Normreg{{\Delta}_{\overline{\Omega}^\bot}}}\\
			&\geq \gamma \Paren{\Normreg{{\Delta}_{\overline{\Omega}^\bot}}-\Normreg{{\Delta}_{\overline{\Omega}}}}-\frac{\gamma}{2}\Paren{\Normreg{{\Delta}_{\overline{\Omega}}}+\Normreg{{\Delta}_{\overline{\Omega}^\bot}}}\\
			&= \frac{\gamma}{2}\Paren{\Normreg{{\Delta}_{\overline{\Omega}^\bot}} -3\Normreg{{\Delta}_{\overline{\Omega}}}}\,.
		\end{align*}
		Therefore, we have $\Normreg{{\Delta}_{\overline{\Omega}^\bot}} \leq 3\Normreg{{\Delta}_{\overline{\Omega}}}$, and thus
		\[\Normreg{{\Delta}}\leq \Normreg{{\Delta}_{\overline{\Omega}^\bot}}+\Normreg{{\Delta}_{\overline{\Omega}}}\leq 4\,\Normreg{{\Delta}_{\overline{\Omega}}}.\]
\end{proof}

We are now ready to prove the theorem.

\begin{proof}[Proof of \cref{thm:meta-theorem}]
	Denote $G(\theta) = F(\theta) + \gamma \normreg{\theta}$. 
	
	Assume by contradiction that there exists $\theta'\in \cC$  such that $\cE\Paren{\theta'-\theta^*}\ge R$ and $G(\theta')\le G(\theta^*)$. 
	By continuity of $\cE$, there should exist a point $\tilde{\theta}$ on the segment between $\theta'$ and $\theta^*$ such that $\cE(\tilde{\theta}-\theta^*) = R$. Since $\cC$ is convex, $\tilde{\theta}\in \cC$, so $\tilde{\theta} - \theta^*\in \cB_R$. By convexity of $G$, $G(\tilde{\theta})\le G(\theta^*)$.
	Denote $\tilde{\Delta} = \tilde{\theta} - \theta^*$.	
	We get
	\begin{align*}
			F\Paren{\theta^*+{\tilde{\Delta}}}-F(\theta^*)&\leq \gamma\Paren{\Normreg{\theta^*}-\Normreg{{\tilde{\Delta}}+\theta^*}}  & \paren{ G(\tilde{\theta})\le G(\theta^*)}
			\\
			&\leq \gamma\cdot \Normreg{{\tilde{\Delta}}}  & (\mbox{Triangle Inequality})
			\\
			&\leq \gamma\cdot s\cdot \cE\paren{{\tilde{\Delta}}}\,. & (\mbox{\cref{lem:regularizer-decomposability}}\; \&\; \mbox{\cref{eq:contraction}})
	\end{align*}
	By restricted local strong convexity (\cref{eq:local-strong-convexity}) and the Gradient bound (\cref{eq:subgradient-bound}), we have
	\begin{align*}
	\Paren{\cE\paren{{\tilde{\Delta}}}}^2& \leq \frac{2}{\kappa}\Paren{\Abs{\iprod{\nabla F\Paren{\theta^*}, {\Delta}}}+\Paren{F\Paren{\theta^*+{\tilde{\Delta}}}-F(\theta^*)}} & (\mbox{\cref{eq:local-strong-convexity}})\\
	& \leq \frac{2}{\kappa}\Paren{\Abs{\iprod{\nabla F\Paren{\theta^*}, {\Delta}}} +{\gamma\cdot s}\cdot \cE\paren{{\tilde{\Delta}}}}&\\
	     & \leq \frac{2}{\kappa}\Paren{\normreg{\nabla F\Paren{\theta^*}}^*\normreg{{\tilde{\Delta}}}+{\gamma\cdot s}\cdot \cE\paren{{\tilde{\Delta}}}}& (\mbox{\Holder's inequality})\\
		&< 4\cdot \frac{{\gamma\cdot  s}\cdot \cE\paren{{\tilde{\Delta}}}}{\kappa}& (\mbox{\cref{eq:subgradient-bound}  \& \cref{eq:contraction}})\\
		& \le R \cdot \cE\paren{{\tilde{\Delta}}}\,. & (\mbox{\cref{eq:bound-on-radius}})
	\end{align*}
	So $\cE\paren{\tilde{\Delta}} < R$, leading to a contradiction. 
	 Hence every $\theta'\in \cC$ such that $G(\theta')\le G(\theta^*)$ satisfies $\cE\Paren{\theta'-\theta^*}< R$.
\end{proof}

\section{Principal component analysis with oblivious outliers (\cref{thm:oblivious-pca})}
\label{sec:oblivious-pca}
We will prove \cref{thm:oblivious-pca}, that we restate in this section

Recall that for $L\in \R^{n\times n}$, $F_h(L) = \sum_{i,j\in[n]} f_h\Paren{L_{ij}}$, where
\[
f_h(t):=\begin{cases}
\frac{1}{2}t^2&\text{for }\abs{t}\leq h\,,\\
h(\abs{t}-\frac{h}{2})& \text{otherwise}.
\end{cases}
\]
\begin{theorem*}[Restatement of \cref{thm:oblivious-pca}]
	Let $L^*\in \R^{n\times n}$ be an unknown deterministic matrix, let $\bm N^*\in \R^{n\times n}$ be a random matrix with independent, symmetrically distributed (about zero) entries and let $\alpha:=\min_{i,j \in [n]}\bbP \Set{\Abs{\bm N_\ij}\leq \zeta}$ for some $\zeta \ge 0$. 
	Suppose that $\rank(L^*)=r$ and $\Normm{L^*}\leq \rho/n$.  
	
	Consider the following estimator:
	\begin{equation}\label{eq:huber-loss-pca-technical}
	\hat{\bm L} \coloneqq \qquad    \argminlimitversion_{\mathclap{\substack{L\in\R^{n\times n},\; \Normm{L}\leq \rho/n}}} \qquad  \Paren{F_h(\bm Y- L)+ \gamma \Normn{L}},
	\end{equation}
	where $h=\zeta + \rho/n$ and $\gamma=100\sqrt{n}\Paren{\zeta+\rho/n}$.
	
	Then, with probability at least $1-2^{-n}$ over $\bm N$, given $\bm Y = L^*+\bm N$, $\zeta$ and $\rho$, the estimator $\hat{L}$	satisfies
	\begin{align*}
	\Normf{\hat{\bm L}-L^*}\leq O\Paren{{\frac{\sqrt{rn}}{\alpha}}}\cdot(\zeta+\rho/n)\,.
	\end{align*}
\end{theorem*}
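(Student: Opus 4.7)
The plan is to apply the meta-theorem (\cref{thm:meta-theorem}) with $\theta^{*} = L^{*}$, cost $F(L) = F_{h}(\bm Y - L)$, error function $\cE(L) = \Normf{L}$, regularizer $\normreg{\cdot} = \Normn{\cdot}$, and constraint set $\cC = \Set{L\in\R^{n\times n} \suchthat \Normm{L} \le \rho/n}$. Let $L^{*} = U\Sigma V^{\top}$ be the compact SVD of $L^{*}$ with $U,V\in\R^{n\times r}$, and set $\Omega = \Set{UA^{\top} + BV^{\top} \suchthat A,B\in\R^{n\times r}}$ together with $\overline{\Omega}^{\perp} = \Set{M\in\R^{n\times n} \suchthat U^{\top}M = 0 \text{ and } MV = 0}$. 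Decomposability of the nuclear norm under this splitting is the classical fact that matrices in $\Omega$ and matrices in $\overline{\Omega}^{\perp}$ have mutually orthogonal row and column spaces. For the contraction, any $u\in\cS_{4}(\overline{\Omega})$ satisfies $\Normn{u}\leq 4\Normn{u_{\overline{\Omega}}}\leq 4\sqrt{2r}\Normf{u_{\overline{\Omega}}}\leq 4\sqrt{2r}\Normf{u}$, because $\rank(u_{\overline{\Omega}})\leq 2r$; hence $s = 4\sqrt{2r}$.

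For the gradient bound, $\nabla F(L^{*})$ is the matrix with $(i,j)$-entry $-f_{h}'(\bm N_{ij})$, a collection of independent, symmetrically distributed random variables bounded in absolute value by $h = \zeta + \rho/n$. The dual of the nuclear norm is the spectral norm, so a standard matrix concentration inequality for matrices with independent, symmetric, bounded entries (matrix Bernstein combined with symmetrization, or non-commutative Khintchine) gives $\Normop{\nabla F(L^{*})} \leq 50 h\sqrt{n} = \gamma/2$ with probability at least $1 - 2^{-n}$.

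The main obstacle is the restricted local strong convexity of $F$ around $L^{*}$. Since $f_{h}''(x) = \ind{|x|\leq h}$ in the $L^{1}$ sense, an entrywise analysis of the second-order Taylor remainder---exploiting $\Normm{u}\leq 2\rho/n$ on $\cC$ and $h - \zeta = \rho/n$---shows that for every $L = L^{*} + u \in \cC$,
\[
F(L^{*}+u) - F(L^{*}) - \iprod{\nabla F(L^{*}), u} \;\geq\; c\sum_{i,j}\ind{|\bm N_{ij}|\leq \zeta}\cdot u_{ij}^{2},
\]
for a universal $c>0$: even when $|u_{ij}|>h-\zeta = \rho/n$, the sub-interval $s\in[0,\rho/n]$ on which $f_{h}''(\bm N_{ij} - s) = 1$ already contributes $\gtrsim u_{ij}^{2}$ to the Taylor integral, because $|u_{ij}|\leq 2\rho/n$. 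It therefore suffices to prove the uniform lower bound $\sum_{ij}\bm b_{ij} u_{ij}^{2} \geq c'\alpha\Normf{u}^{2}$ for all $u\in\cS_{4}(\overline{\Omega})$ with $\Normf{u} = R$ and $\Normm{u}\leq 2\rho/n$, where $\bm b_{ij} = \ind{|\bm N_{ij}|\leq\zeta}$ are independent Bernoulli variables with parameter at least $\alpha$. The expectation is already $\geq \alpha\Normf{u}^{2}$; each summand is bounded by $(2\rho/n)^{2}$, so Bernstein's inequality gives, for each fixed $u$, failure probability $\exp(-\Omega(\alpha\Normf{u}^{2}/(\rho/n)^{2}))$. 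At the target radius $R = \Theta(\sqrt{rn}\,h/\alpha)$ this is $\exp(-\Omega(rn))$, which matches the $\exp(O(rn))$ covering number (in Frobenius metric) of the set of approximately rank-$2r$, spikiness-bounded matrices with fixed Frobenius norm. A standard net-plus-perturbation argument then delivers the uniform bound with $\kappa = \Theta(\alpha)$. Finally, the radius condition is immediate: $\gamma\cdot s/\kappa = \Theta(\sqrt{rn}\,h/\alpha) = \Theta(R)$, so taking $R$ a sufficiently large constant multiple ensures $\gamma s/\kappa \leq R/4$. The meta-theorem then concludes $\Normf{\hat{\bm L} - L^{*}} < R = O\bigl(\sqrt{rn}(\zeta + \rho/n)/\alpha\bigr)$, as claimed.
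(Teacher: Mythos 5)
Your proposal follows the paper's own route essentially step for step: instantiate \cref{thm:meta-theorem} with the Huber cost, nuclear-norm regularizer, Frobenius error and spikiness constraint; get contraction with $s=4\sqrt{2r}$ from $\rank(u_{\overline{\Omega}})\le 2r$; bound the gradient in spectral norm; and prove restricted strong convexity by reducing, via the second-order behaviour of $f_h$, to a uniform lower bound on $\sum_{ij}\bm b_{ij}u_{ij}^2$ over approximately rank-$2r$, spiky matrices, handled by a Sudakov-type net plus Bernstein and a union bound. Your entrywise treatment of the Taylor remainder (using $\Normm{u}\le 2\rho/n$ and $h-\zeta=\rho/n$ so that even entries with $\abs{u_{ij}}>\rho/n$ contribute $\gtrsim u_{ij}^2$ on the inlier set) is sound and, if anything, spells out more carefully the step the paper takes in \cref{lem:local-strong-convexity-pca} via \cref{lem:second-order-behavior-huber}.

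Two steps as written do not hold up, though both are repairable without new ideas. First, decomposability: your $\Omega=\Set{UA^{\top}+BV^{\top}}$ is the paper's $\overline{\Omega}$, not its $\Omega$, and the claim that matrices in this $\Omega$ and in $\overline{\Omega}^{\perp}$ have mutually orthogonal row and column spaces is false; e.g.\ with $U=V=e_1$, the matrix $u=e_1e_2^{\top}\in\Omega$ and $v=e_2e_2^{\top}\in\overline{\Omega}^{\perp}$ share the same row space and $\Normn{u+v}=\sqrt{2}<2=\Normn{u}+\Normn{v}$, so the decomposability hypothesis of \cref{thm:meta-theorem} fails for your pair. The fix is to take $\Omega=\Set{UCV^{\top}}$ (row and column spans inside those of $L^*$), exactly as in the paper; contraction and RSC only involve $\overline{\Omega}$ and are unaffected. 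Second, the gradient bound: matrix Bernstein or non-commutative Khintchine lose a $\sqrt{\log n}$ factor in expectation, and at the threshold $50h\sqrt{n}$ matrix Bernstein only gives failure probability of constant order (times $n$), not $2^{-n}$; to get $O(h\sqrt n)$ with probability $1-2^{-n}$ you need the sharper spectral-norm bound for matrices with independent bounded symmetric entries together with its concentration, i.e.\ \cref{fact:spectral-norm-bounded-entries} as used in \cref{lem:subgradient-bound-pca}. Finally, a bookkeeping remark: the net-plus-perturbation step forces resolution $\sqrt{\alpha}\,R$, so the net has size $\exp\paren{O(rn/\alpha)}$ rather than $\exp\paren{O(rn)}$; this still closes because at $R=\Theta\paren{\sqrt{rn}\,h/\alpha}$ your per-point failure probability is in fact $\exp\paren{-\Omega(rn/\alpha)}$, matching the paper's union bound.
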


In light of  \cref{thm:meta-theorem}, we can prove \cref{thm:oblivious-pca} by showing that the estimator $\hat{\bm L}$ in \cref{eq:huber-loss-pca-technical} fulfills all the conditions of \cref{thm:meta-theorem} with
$F(L):=F_h(\bm Y-L)=F_{\zeta + \rho/n}(\bm Y-L)$,  $\normreg{\cdot}:=\normn{\cdot}$, $\gamma=100\sqrt{n}\Paren{\zeta+\rho/n}$ and $\cE(\cdot):= \Normf{\cdot}$.

To this end, we define the two vector spaces in \cref{thm:meta-theorem}, $\Omega$ and $\overline{\Omega}$, as follows:
\begin{align}
\Omega &:=\Set{L\in \R^{n\times n}\suchthat \rspan{L}\subseteq\rspan{L^*}\,, \cspan{L}\subseteq\cspan{L^*}}\,,\\
\overline{\Omega}^\bot &:=\Set{L\in \R^{n\times n}\suchthat \rspan{L} \subseteq\rspan{L^*}^\bot\,, \cspan{L}\subseteq\cspan{L^*}^\bot}\,.\label{eq:pca_omega_bot}
\end{align}
It is easy to see that $\Omega\subseteq \overline{\Omega}$ and the nuclear norm is \emph{decomposable} per \cref{eq:decomposability} with respect to $\Omega$ and $\overline{\Omega}^\bot$. That is, for all $L\in \Omega$ and $L'\in \overline{\Omega}^\bot$, we have $\Normn{L+L'}=\Normn{L}+\Normn{L'}$, satisfying condition~\cref{eq:decomposability}. 

Moreover, since $L^*$ has rank $r$, \cref{eq:pca_omega_bot} implies that any matrix in $\overline{\Omega}$ has rank at most $2r$. 
Hence, we immediately obtain that for all $L\in \cS_4\paren{\overline{\Omega}} = \Set{L \in \R^{n\times n} \suchthat 
	\Normn{L} \le 4\Normn{L_{\overline{\Omega}} } }\,,$ $\Normn{L}\leq 4\sqrt{2r}\Normf{L}$, satisfying condition~\cref{eq:contraction} with ${s=4\sqrt{2r}}$.

It remains to prove the gradient bound of the condition \cref{eq:subgradient-bound}, i.e., a bound on the spectral norm of $\nabla F_h(\bm Y- L^*)$  (since the dual norm of the nuclear norm is the spectral norm), and  the local strong convexity of the condition \cref{eq:local-strong-convexity}. 

We start with proving the gradient bound:

\begin{lemma}[Gradient bound of spectral norm]\label{lem:subgradient-bound-pca}
	Consider the settings of \cref{thm:oblivious-pca}, and let $\delta \in (0,1)$. Then with probability at least $1-\delta/2$,
	\begin{align*}
	\Norm{\nabla F_h(\bm Y-L^*)}\leq 10{h\sqrt{n+\log{(2/\delta)}}}\,.
	\end{align*}
	\begin{proof}
		By definition of the Huber penalty for all $i,j \in [n]$ 
		\begin{align*}
		-h\leq \nabla f_h(\bm Y_\ij - L^*_\ij)=\nabla f_h(\bm N_\ij)\leq h\,.
		\end{align*}
		That is, entries are independent, symmetric and bounded by $h$ in absolute value. Hence by \cref{fact:spectral-norm-bounded-entries}, with probability $1-\delta/2$ the spectral norm of this matrix is bounded by $10{h\sqrt{n+\log{(2/\delta)}}}$. 
	\end{proof}
\end{lemma}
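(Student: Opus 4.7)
The plan is to reduce the claimed spectral bound to a standard matrix concentration inequality (the cited \cref{fact:spectral-norm-bounded-entries}) by verifying that $\nabla F_h(\bm Y - L^*)$ is a matrix whose entries are independent, symmetric, and bounded by $h$ in absolute value.

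First I would compute the gradient entrywise. Since $F_h(M) = \sum_{i,j} f_h(M_{ij})$ is separable across coordinates, for any $L \in \R^{n \times n}$ we have $[\nabla F_h(\bm Y - L)]_{ij} = -f_h'(\bm Y_{ij} - L_{ij})$ (the sign is immaterial for the spectral norm, so I will suppress it). Substituting $L = L^*$ and using $\bm Y = L^* + \bm N$ yields
\[
[\nabla F_h(\bm Y - L^*)]_{ij} = f_h'(\bm N_{ij}).
\]
From the definition of $f_h$ in \eqref{eq:huber-penalty}, the derivative satisfies $f_h'(t) = t$ for $|t| \le h$ and $f_h'(t) = h \cdot \sign(t)$ otherwise, which is an odd function with $|f_h'(t)| \le h$ uniformly in $t$.

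Next I would verify the three properties needed to invoke \cref{fact:spectral-norm-bounded-entries}. Independence is immediate: the entries $\bm N_{ij}$ are independent by hypothesis, and $f_h'$ is a deterministic function applied coordinatewise, so the entries $f_h'(\bm N_{ij})$ remain independent. Symmetry follows because $\bm N_{ij}$ is symmetric about zero and $f_h'$ is odd, so $f_h'(\bm N_{ij})$ and $f_h'(-\bm N_{ij}) = -f_h'(\bm N_{ij})$ are equidistributed. Boundedness follows from $|f_h'| \le h$ pointwise.

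Finally, I would apply \cref{fact:spectral-norm-bounded-entries} to the random matrix with independent symmetric entries bounded by $h$; the stated bound $10 h \sqrt{n + \log(2/\delta)}$ holds with probability at least $1 - \delta/2$. There is essentially no obstacle here: the only things to check are the three entry-wise properties, which all follow mechanically from the Huber derivative being odd and clipped at $\pm h$, combined with the distributional assumptions on $\bm N$. The factor of $h$ in the final bound simply absorbs the uniform entrywise bound, and the $\sqrt{n + \log(2/\delta)}$ factor is the standard spectral scaling for such a random matrix.
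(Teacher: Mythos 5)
Your proof is correct and follows essentially the same route as the paper: both compute the gradient entrywise as $f_h'(\bm N_{ij})$, observe these entries are independent, symmetric (since $f_h'$ is odd and $\bm N_{ij}$ is symmetric), and bounded by $h$, and then invoke \cref{fact:spectral-norm-bounded-entries} after rescaling by $h$. Your write-up merely spells out details the paper leaves implicit (the explicit form of $f_h'$ and why symmetry is preserved), so there is no substantive difference.
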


\paragraph{Proof of local strong convexity}
We first bound the size of an \emph{$\epsilon$-net} for the set of approximately low-rank matrices (\cref{lem:eps-net-nuclear-norm}) and then apply this bound to derive a lower bound for the second-order integral of the Huber-loss function with penalty $h$ (\cref{lem:local-strong-convexity-pca}).

\begin{lemma}[$\eps$-Net for approximately low-rank matrices]\label{lem:eps-net-nuclear-norm}
	Let  $0< \eps < 1$ and $s\ge 1$. Define
	\begin{align*}
	\cL_s:=\Set{L\in \R^{n\times n}\suchthat \Normn{L}\leq s\Normf{L}\,, \Normf{L}\leq 1}\,.
	\end{align*}
	Then $\cL_s$ has an $\eps$-net of size $\exp\Brac{\frac{16s^2n}{\eps^2}}$.
	\begin{proof}
		Let $W$ be a $n$-by-$n$ random matrix with i.i.d entries $W_\ij \sim N(0,1)$.	By Sudakov's minoration \cref{fact:sudakov-minoration}, we have
		\begin{align*}
		\sqrt{\log \Abs{\cN_{\eps, \Normf{\cdot}}( \cL_s)}} &\leq \frac{2}{\eps}	\E \sup_{L \in \cL_s} \iprod{W,L} & (\mbox{\cref{fact:sudakov-minoration}})\\
		&\leq \frac{2}{\eps}\sup_{L \in \cL_s} \E \Norm{W}\cdot\Normn{L} &  (\mbox{\Holder's inequality})\\
		&\leq \frac{2s}{\eps}\E \Norm{W}& (\mbox{Definition of } \cL_s)\\
		&\leq \frac{4s\sqrt{n}}{\eps} & (\mbox{\cref{fact:spectral-norm-gaussian}})\,,
		\end{align*}
		where in the last inequality we use a bound on the expected spectral norm of a Gaussian matrix \cref{fact:spectral-norm-gaussian}.
	\end{proof}
\end{lemma}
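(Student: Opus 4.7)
The plan is to reduce the covering number bound to a Gaussian-width bound via Sudakov minoration, and then exploit nuclear-spectral duality together with a standard tail bound on the operator norm of a Gaussian matrix. Concretely, Sudakov's inequality states that for any bounded subset $T$ of a Euclidean space and any $\eps > 0$,
\begin{align*}
\sqrt{\log \Abs{\cN_{\eps,\Normf{\cdot}}(T)}} \,\lesssim\, \frac{1}{\eps}\cdot \E \sup_{t\in T}\iprod{\bm W,t}\mper
\end{align*}
Applying this to $T = \cL_s$ with $\bm W \sim N(0,1)^{n\times n}$ reduces the problem to controlling $\E\sup_{L\in \cL_s}\iprod{\bm W, L}$.

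For the Gaussian width, the key observation is that $\cL_s$ is, by construction, contained in a nuclear-norm ball of radius $s$: any $L\in \cL_s$ satisfies $\Normn{L}\le s\Normf{L}\le s$. Since the spectral and nuclear norms are dual, $\iprod{\bm W, L}\le \Norm{\bm W}\cdot \Normn{L}\le s\Norm{\bm W}$ pointwise, so
\begin{align*}
\E \sup_{L\in \cL_s}\iprod{\bm W,L} \,\le\, s\cdot \E\Norm{\bm W}\mper
\end{align*}
This effectively replaces the structured optimization over $\cL_s$ with a well-understood spectral quantity.

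The final ingredient is a bound of the form $\E\Norm{\bm W}\le 2\sqrt{n}$ for an $n$-by-$n$ i.i.d.\ standard Gaussian matrix, which is a classical consequence of Gaussian concentration and Slepian-type comparison (and is presumably stated in the paper's preliminaries on random matrices). Plugging this back through Sudakov yields $\sqrt{\log\Abs{\cN_{\eps}(\cL_s)}}\lesssim s\sqrt{n}/\eps$, hence $\Abs{\cN_{\eps}(\cL_s)}\le \exp(O(s^2n/\eps^2))$, matching the claimed exponent up to the constant, which can be tuned by being careful with the explicit constant in Sudakov's bound.

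No serious obstacle is anticipated: the argument is a textbook application of Sudakov minoration, and the interplay between the "approximately low-rank" constraint $\Normn{L}\le s\Normf{L}$ and nuclear-spectral duality is exactly what makes the Gaussian width independent of the ambient dimension $n^2$ and linear in $s\sqrt{n}$. The only point requiring a small amount of care is matching the absolute constant $16$ in the stated exponent, which follows by using the cleanest available constant in Sudakov's inequality together with $\E\Norm{\bm W}\le 2\sqrt{n}$.
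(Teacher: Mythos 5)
Your proposal is correct and follows essentially the same route as the paper: Sudakov minoration reduces the covering number to the Gaussian width of $\cL_s$, H\"older duality between the nuclear and spectral norms bounds that width by $s\,\E\Norm{\bm W}\le 2s\sqrt{n}$, and the constant $16$ in the exponent falls out exactly as you indicate.
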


Hence the intersection of the set $\bm \cS_4(\overline{\Omega}) = \Set{L \in \R^{n\times n} \suchthat 
	\Normn{L} \le 4\Normn{L_{\overline{\Omega}} } }$ with the ball $\Set{L\in \R^{n\times n} \suchthat \Normf{L}\le 1}$ has $\eps$-net of size $\exp\Brac{\frac{16\cdot 32\cdot n}{\eps^2}} \le \exp\Brac{\frac{600\cdot n}{\eps^2}} $.

Now we can prove the restricted local strong convexity:

\begin{lemma}[Restricted local strong convexity of Huber-loss]\label{lem:local-strong-convexity-pca}
	\newcommand{\placeholder}{R}
	Consider the settings of \cref{thm:oblivious-pca}. Let $0<\delta<1, R> 0$ and $h \ge \rho/n + \zeta$.
	
	Define 
	\begin{align*}
	\cB_R:=\Set{\Delta\in \R^{n\times n}\suchthat \Normf{\Delta}= R\,, \Normm{L^*+\Delta}\leq \rho/n}\,.
	\end{align*}  
	Suppose that 
	\[
	R \ge 2000\cdot \frac{\rho/n}{\alpha}\cdot \sqrt{rn+\log\paren{2/{\delta}}}\,.
	\]	
	Then with probability at least $1-\delta/2$, for all $\Delta \in \cB_R\cap \cS_4\paren{\overline{\Omega}}$, 
	\begin{align*}
	F_h(L^*+\Delta)\geq& F_h(L^*)+\iprod{\nabla F_h(L^*), \Delta}+0.01\cdot \alpha\cdot \Snorm{\Delta}_F\,.
	\end{align*}
	\begin{proof}
			Denote $M=\rho/n$.
		Consider $L$ such that $\Norm{L}_{\max}\le M$. Since $h\geq\zeta+M$,  by \cref{lem:second-order-behavior-huber}, 
		\begin{align*}
		& F_h(L)-F_h(L^*)-\iprod{\nabla F_h(L^*), L-L^*} &\\
		&\geq \frac{1}{2}\underset{i,j \in [n]}{\sum}(L_{ij}-L^*_{ij})^2\,\ind{\Abs{\bm L^*_{ij}}\leq h-\zeta}\cdot \ind{\Abs{L_{ij}-L^*_{ij}}\leq \zeta } & (\mbox{ \cref{lem:second-order-behavior-huber}})\\
		&= \frac{1}{2}\underset{i,j \in [n]}{\sum}(L_{ij}-L^*_{ij})^2\,\ind{\Abs{\bm N_\ij}\leq \zeta}. & (\Normm{L^*}\leq M\leq h-\zeta )
		\end{align*}
		
		We will lower bound this quantity for every $L$ such that $L-L^*\in \cB_R\cap \cS_4\paren{\overline{\Omega}}$.
		Denote $\cC_R := \cB_R\cap \cS_4\paren{\overline{\Omega}}$ and let $\Delta:=L-L^*\in \cC_R$.
		By \cref{lem:eps-net-nuclear-norm}, there exists $(\eps\cdot \placeholder)$-net $\cN_{\eps\placeholder, \Normf{\cdot }}\Paren{\cC_R}$ 
		of size at most $\exp\Brac{\frac{16\cdot 32\cdot n}{\eps^2}} \le \exp\Brac{\frac{600\cdot n}{\eps^2}} $. (recall that $s^2 = 32r$).
		Thus, we can write $\Delta\in \cC_R$ as a sum $A+B\in \R^{n\times n}$ where $A\in \cN_{\eps\placeholder, \Normf{\cdot}}\Paren{\cC_R}$ and $\Normf{B}\leq \eps\placeholder\,.$
		It follows that  
		\begin{align}
		\underset{i,j \in [n]}{\sum}\Delta^2_\ij\cdot \ind{\Abs{\bm N_\ij}\leq \zeta}&= 
		\underset{i,j \in [n]}{\sum}(A_\ij + B_\ij)^2\cdot \ind{\Abs{\bm N_\ij}\leq \zeta} \nonumber\\
		&\ge
		\frac{1}{2}\underset{i,j \in [n]}{\sum}A_\ij^2 \cdot \ind{\Abs{\bm N_\ij}\leq \zeta} 
		-\underset{i,j \in [n]}{\sum}B_\ij^2 \cdot \ind{\Abs{\bm N_\ij}\leq \zeta} \label{eq:lem-subgradient-bound-pca-1}\,.		
		\end{align}
		
		Let $\eps= \sqrt{\alpha}/4$. Then
		\begin{equation}
		\label{eq:lem-subgradient-bound-pca-2}
		{\underset{i,j \in [n]}{\sum}B_\ij^2\cdot \ind{\Abs{\bm N_\ij}\leq \zeta}}\leq \Snorm{B}_F\leq \eps^2 R^2\le \frac{\alpha\cdot  R^2}{16} \,.
		\end{equation}
		Denote $E:=\E {\underset{i,j \in [n]}{\sum}A_\ij^2\cdot \ind{\Abs{\bm N_\ij}\leq \zeta}}$. Since $A\in \cN_{\eps\placeholder, \Normf{\cdot}}\Paren{\cC_R}\subset \cC_R$, we have $\Normf{A}= R$, hence
		\begin{equation}
		\label{eq:lem-subgradient-bound-pca-3}
		E\ge \alpha \Normf{A}^2 \ge 
		\frac{\alpha \cdot R^2}{2}\,.
		\end{equation}
		
		Moreover, 
		since $\normm{A}\leq \normm{-L^*}+\normm{A+L^*}\leq 2M$ (recall $A\in \cB_R)$ and $\alpha_{ij}=\bbP \Paren{\Abs{N_\ij}\leq \zeta}$, we have 
		$\Abs{A_\ij^2\Paren{\ind{\Abs{\bm N_\ij}\leq \zeta}-\alpha_{ij}}}\leq 4M^2$, implying that
		\begin{align*}
		\underset{i,j \in[n]}{\sum} \E A^4_\ij \Paren{\ind{\Abs{\bm N_\ij}\leq \zeta}-\alpha_{ij}}^2  &  \leq 4M^2\underset{i,j \in[n]}{\sum} \E A^2_\ij \Abs{\ind{\Abs{\bm N_\ij}\leq \zeta}-\alpha_{ij}}\\
		&= 4M^2 \underset{i,j \in[n]}{\sum} \Paren{\alpha_\ij\cdot A^2_\ij\cdot 0+ (1-\alpha_\ij)\cdot A^2_\ij\cdot \alpha_\ij}\\
		& = 4M^2 \underset{i,j \in[n]}{\sum} A^2_\ij\cdot (\alpha_\ij-\alpha_\ij^2)\\
		&\leq 4 M^2E\,.
		\end{align*}
		
		Applying Bernstein's inequality (\cref{fact:bernstein}) with $t\geq 1$ we get
		\begin{align*}
		\bbP&\Paren{\Abs{\underset{i,j \in[n]}{\sum} A^2_\ij \Paren{\ind{\Abs{\bm N_\ij}\leq \zeta}-\alpha_{ij}}}\geq t\cdot 2M\cdot\sqrt{E}+t^2\cdot 4M^2}\leq 2\exp \Paren{-t^2/4}\,.
		\end{align*}
		Note that $\Card{\cN_{\eps\placeholder, \Normf{\cdot}}(\cC_R)}\leq \exp\Brac{\frac{600rn}{\eps^2}}\le \exp\Brac{\frac{10000rn}{\alpha}}$.
		Therefore, if we set $$t=\sqrt{\frac{40000rn}{\alpha}+8\log\Paren{2/\delta}}$$ and take a union bound over $\cN_{\eps\placeholder, \Normf{\cdot}}(\cC_R)$, we obtain that with probability at least $1-\delta/2$, we have
		\begin{align*}
		\Abs{\underset{i,j \in[n]}{\sum} A_\ij^2\cdot \Paren{\ind{\Abs{\bm N_\ij}\leq \zeta}-\alpha_\ij}}\leq& \;
		400M\cdot \sqrt{E}\cdot \sqrt{\frac{rn}{\alpha}+\log\Paren{2/\delta}} \\&\;+ \Paren{400M}^2\Paren{\frac{rn}{\alpha}+\log\Paren{2/\delta}}\,.
		\end{align*}
		for all $A\in \cN_{\eps\placeholder, \Normf{\cdot}}(\cC_R)$. Now since $\displaystyle E\geq \frac{\alpha R^2}{2}$ and $\displaystyle R\geq \frac{1000M}{\sqrt{\alpha}}\sqrt{\frac{rn}{\alpha}+\log\Paren{2/\delta}}$, we have
		\begin{align*}
		\sqrt{E}\geq \frac{\sqrt{\alpha} \cdot R}{\sqrt{2}}\geq 1400 M\sqrt{\frac{rn}{\alpha}+\log\Paren{2/\delta}},
		\end{align*}
		hence, with probability at least $1-\delta/2$,
		\begin{align*}
		\Abs{\underset{i,j \in[n]}{\sum} A_\ij^2\cdot \Paren{\ind{\Abs{\bm N_\ij}\leq \zeta}-\alpha_\ij}}&\leq 
		\frac{2}{7}\cdot E + \Paren{\frac{2}{7}}^2\cdot E\leq 4E \,.
		\end{align*}
		
		By combining this with \cref{eq:lem-subgradient-bound-pca-1}, \cref{eq:lem-subgradient-bound-pca-2} and \cref{eq:lem-subgradient-bound-pca-3}, we obtain that with probability at least $1-\delta$, we have
		\begin{align*}
		\underset{i,j \in [n]}{\sum}\Delta^2_\ij\cdot \ind{\Abs{\bm N_\ij}\leq 1}
		&\ge \frac{1}{2}\Paren{E - 0.4E}-\frac{\alpha\cdot R^2}{16} 
		\\&\ge 0.15{\alpha\cdot R^2} -0.0625{\alpha\cdot R^2} 
		\\&\ge 0.08\alpha R^2\,
		\end{align*}
		concluding the proof.
	\end{proof}
\end{lemma}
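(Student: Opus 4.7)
The plan is to combine a deterministic second-order lower bound on the Huber loss with a probabilistic uniform lower bound on a random quadratic form restricted to approximately low-rank directions.

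The first step is to reduce the left-hand side to a nice random quadratic form. Since $\Normm{L^*}\le \rho/n$, $\Normm{L^*+\Delta}\le \rho/n$, and $\Abs{\bm N_{ij}}\le \zeta$ on the ``good'' entries, the choice $h\ge \zeta+\rho/n$ guarantees that on every such entry the segment from $\bm Y_{ij}-L^*_{ij}$ to $\bm Y_{ij}-L^*_{ij}-\Delta_{ij}$ stays inside $[-h,h]$, the quadratic region of $f_h$. Invoking the second-order behavior of the Huber penalty (i.e.\ the fact used in the paper that $f''_h(x)=\ind{\abs{x}\le h}$ pointwise in $L^1$), this yields the pointwise deterministic bound
\begin{align*}
F_h(L^*+\Delta)-F_h(L^*)-\iprod{\nabla F_h(L^*),\Delta}\;\geq\; \tfrac12\sum_{i,j\in[n]}\Delta_{ij}^2\cdot\ind{\Abs{\bm N_{ij}}\le \zeta}.
\end{align*}

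The second step is therefore to show that with high probability, for every $\Delta\in \cB_R\cap\cS_4(\overline{\Omega})$,
\begin{align*}
\sum_{i,j\in[n]}\Delta_{ij}^2\cdot\ind{\Abs{\bm N_{ij}}\le \zeta}\;\geq\; \Omega(\alpha)\cdot R^2.
\end{align*}
In expectation this quantity is at least $\alpha\cdot \Normf{\Delta}^2=\alpha R^2$ by the definition of $\alpha$, so the game is to uniformize. I would run a net argument: by \cref{lem:eps-net-nuclear-norm} applied with $s=4\sqrt{2r}$, the set $\cC_R:=\cB_R\cap\cS_4(\overline{\Omega})$, normalized to unit Frobenius norm, admits an $(\eps R)$-net of cardinality $\exp(O(rn/\eps^2))$. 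Write any $\Delta=A+B$ with $A$ in the net and $\Normf{B}\le \eps R$; the $B$-contribution is absorbed by $\Normf{B}^2\le \eps^2R^2$, so it suffices to choose $\eps=\Theta(\sqrt{\alpha})$ to make it negligible compared to $\alpha R^2$.

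For a fixed $A$ in the net, I would apply Bernstein (\cref{fact:bernstein}) to the independent, centered variables $A_{ij}^2(\ind{\abs{\bm N_{ij}}\le\zeta}-\alpha_{ij})$. The range is at most $4(\rho/n)^2$ (using $\Normm{A}\le 2\rho/n$ from the triangle inequality on $\cB_R$), and the total variance is at most $4(\rho/n)^2\cdot \mathbb{E}\bigl[\sum_{ij}A_{ij}^2\ind{\abs{\bm N_{ij}}\le\zeta}\bigr]$. Choosing $t^2\gtrsim rn/\alpha+\log(2/\delta)$ so that the union bound over the net (of log-size $O(rn/\alpha)$) goes through with probability $1-\delta/2$, the Bernstein deviation becomes at most a small constant fraction of the expectation exactly when $R\gtrsim (\rho/n)/\alpha\cdot \sqrt{rn+\log(2/\delta)}$, matching the hypothesis of the lemma. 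Assembling the net piece, the discretization error, and the factor of $1/2$ from the Huber step then gives the stated $0.01\cdot\alpha\cdot\Snorm{\Delta}_F$ bound.

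The main obstacle is the bookkeeping in the Bernstein step: one must simultaneously handle the fact that the variance depends on the expectation $E:=\mathbb{E}\bigl[\sum A_{ij}^2\ind{\abs{\bm N_{ij}}\le\zeta}\bigr]$ (not on $\Normf{A}^2$ directly) and ensure that $\sqrt{E}\gg (\rho/n)\sqrt{rn/\alpha+\log(2/\delta)}$, so that the Bernstein deviation is at most a constant times $E$ rather than dominating it. This is precisely where the $R\gtrsim (\rho/n)\alpha^{-1}\sqrt{rn+\log(2/\delta)}$ condition enters; verifying it carefully and tracking constants through the net/discretization trade-off with $\eps=\sqrt{\alpha}/\Theta(1)$ is the delicate part of the argument.
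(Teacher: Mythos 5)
Your proposal follows the paper's proof essentially step for step: the same second-order Huber bound reducing the problem to lower-bounding $\sum_{i,j}\Delta_{ij}^2\ind{\Abs{\bm N_{ij}}\le\zeta}$, the same $(\eps R)$-net from \cref{lem:eps-net-nuclear-norm} with $\eps=\Theta(\sqrt{\alpha})$ and decomposition $\Delta=A+B$, and the same Bernstein-plus-union-bound step with range $4(\rho/n)^2$, variance $\le 4(\rho/n)^2 E$, and $t^2\gtrsim rn/\alpha+\log(2/\delta)$, with the hypothesis on $R$ entering exactly where you say it does. This matches the paper's argument, so there is nothing further to add.
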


\paragraph{Putting everything together} We can now combine the above results with \cref{thm:meta-theorem} to prove \cref{thm:oblivious-pca}.

\begin{proof}[\textbf{Proof of \cref{thm:oblivious-pca}}]
	By \cref{lem:subgradient-bound-pca} and \cref{lem:local-strong-convexity-pca}, we can apply \cref{thm:meta-theorem} with $\gamma = 100\Paren{\zeta+\frac{\rho}{n}}\sqrt{n+\log(2/\delta)}$, $\kappa=0.01\alpha$, and $s=4\sqrt{2r}$.
	
	It follows that for 
	\begin{align*}
	R\gtrsim \Paren{\zeta+\rho/n}
	\sqrt{\frac{r\Paren{n+\log\Paren{2/\delta}}}{\alpha^2}}
	\end{align*}
	
	the estimator $\hat{\bm L}$ defined in \cref{eq:huber-loss-pca-technical} satisfies $\Normf{\hat{\bm L}-L^*}< R$ with probability at least $1-\delta$. With $\delta = 2^{-n}$ we get the desired bound.
\end{proof}
	\section{Sparse linear regression with oblivious outliers (\cref{thm:oblivious-sparse-regression})}\label{sec:oblivious-sparse-regression}
We prove \cref{thm:oblivious-sparse-regression}, which will be restated below. Before the restatement, for easier reference, we list the three assumptions in \cref{sub:result-sparse_regression} for the design matrix $X\in\R^{n\times n}$:
\begin{enumerate}
	\item 	For every column $X^i$ of $X$, $\Norm{X^i}\le \sqrt{\nu n}$.
	\item\emph{Restricted eigenvalue property (RE-property)}:
	For every vector $u \in \R^d$ such that\footnote{For a vector $v \in \R^d$ and a set $S \sse [d]$, 
		we denote by $v_S$  the restriction of $v$ to the coordinates in $S$.} 
	$\Normo{u_{\supp(\beta^*)}} \geq 0.1\cdot \Normo{u}$,
	$\frac{1}{n}\Norm{Xu}^2\geq \lambda \cdot \Norm{u}^2$ for some parameter $\lambda >0$.
	\item\emph{Well-spreadness property}: For some $m\in[n]$ and for every vector $u \in \R^d$ such that 
	$\Normo{u_{\supp(\beta^*)}} \geq 0.1\cdot \Normo{u}$ and for every subset $S\subseteq [n]$ with $\Card{S}\geq n - m$, it holds that $\Norm{(Xu)_S}\ge \frac{1}{2} \Norm{Xu}$.
\end{enumerate}

Recall that $\bm F_2(\beta) = \sum_{i=1}^n f_2\Paren{\bm y_i- \iprod{X_i, \beta}}$, where
\[
f_2(t):=\begin{cases}
\frac{1}{2}t^2&\text{for }\abs{t}\leq 2\,,\\
2\abs{t}-2& \text{otherwise}.
\end{cases}
\]

\begin{theorem}[Restatement of \cref{thm:oblivious-sparse-regression}]\label{thm:oblivious-sparse-regression-restatement}
	Let $\beta^* \in \R^d$ be an unknown $k$-sparse vector and let $X\in \R^{n\times d}$ be a deterministic matrix such that for each column $X^i$ of $X$, $\norm{X^i}\le \sqrt{ \nu n}$, satisfying the RE-property with $\lambda>0$ and well-spreadness property with $m \gtrsim \frac{ k\log d}{\lambda\cdot \alpha^2}$ (recall that $n\ge m$).
	Further, let $\bm \eta$ be an $n$-dimensional random vector with independent, symmetrically distributed (about zero) entries and $\alpha=\min_{i\in[n]}\bbP\Set{\Abs{\bm \eta_i}\leq 1}$.
	Consider the following estimator:
	\begin{align}\label{eq:huber-loss-regression-technical}
		\hat{\bm  \beta} \coloneqq \arg \min_{\beta\in \R^d} \Paren{\bm F_2(\beta)+ 100\sqrt{\nu n\log d}\cdot\Norm{\beta}_1}\,.
	\end{align}
	Then, with probability at least $1-d^{-10}$ over $\bm \eta$, given $X$ and $\bm y=X\beta^*+\bm \eta$, the estimator $\hat{\beta}$ satisfies
	\begin{equation*}
		\frac{1}{n} \Norm{X\Paren{ \hat{\bm  \beta}- \beta^*}}^2 \leq O\Paren{\frac{\nu }{\lambda} \cdot \frac{ k\log d}{ \alpha^2 \cdot n}} \mbox{\qquad and\qquad} \Norm{{ \hat{\bm  \beta}- \beta^*}}^2 \leq O\Paren{\frac{\nu}{\lambda^2} \cdot \frac{ k\log d}{ \alpha^2 \cdot n}}\,.
	\end{equation*}
\end{theorem}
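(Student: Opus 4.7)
The plan is to apply the meta-theorem (\cref{thm:meta-theorem}) to the estimator \cref{eq:huber-loss-regression-technical} with cost $F(\beta) := \bm F_2(\beta)$, regularizer $\normreg{\cdot} := \Normo{\cdot}$, error function $\cE(u) := \tfrac{1}{\sqrt{n}}\Norm{Xu}$, penalty $\gamma := 100\sqrt{\nu n\log d}$, and unconstrained feasible set $\cC := \R^d$. Let $S := \supp(\beta^*)$ and take $\Omega = \overline{\Omega} := \Set{\beta\in\R^d \suchthat \supp(\beta)\subseteq S}$, so that $\overline{\Omega}^\bot = \Set{\beta \suchthat \supp(\beta)\cap S = \emptyset}$; the decomposability of $\Normo{\cdot}$ with respect to this splitting is immediate. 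For the contraction property, any $u\in\cS_4(\overline{\Omega})$ satisfies $\Normo{u}\leq 4\Normo{u_S}$, hence $\Normo{u_S}\geq 0.1\Normo{u}$, so the RE-property yields $\Norm{u}\leq\cE(u)/\sqrt{\lambda}$; combining with $\Normo{u}\leq 4\Normo{u_S}\leq 4\sqrt{k}\Norm{u}$ gives $\Normo{u}\leq s\cdot\cE(u)$ with $s := 4\sqrt{k/\lambda}$. For the gradient bound, $\nabla\bm F_2(\beta^*)=\sum_i f'_2(\bm\eta_i)X_i$ has coordinates that are sums of independent symmetric random variables bounded by $2\Abs{X_{ij}}$ with $\sum_i X_{ij}^2\leq \nu n$; Hoeffding's inequality and a union bound over the $d$ coordinates give $\Normi{\nabla\bm F_2(\beta^*)}\leq O(\sqrt{\nu n\log d}) \leq \gamma/2$ with probability at least $1-d^{-11}$.

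The main obstacle is establishing restricted local strong convexity at the target radius $R := c\sqrt{\nu k\log d/(\lambda\alpha^2 n)}$. Mirroring the PCA argument of \cref{lem:local-strong-convexity-pca}, the second-order lower bound on the Huber penalty yields
\[\bm F_2(\beta^*+u) - \bm F_2(\beta^*) - \Iprod{\nabla\bm F_2(\beta^*), u} \geq \tfrac{1}{2}\sum_{i=1}^n \Iprod{X_i,u}^2 \cdot \ind{\Abs{\Iprod{X_i,u}}\leq 1}\cdot\ind{\Abs{\bm\eta_i}\leq 1}.\]
For $u$ with $\cE(u)=R$, let $B := \Set{i \suchthat \Abs{\Iprod{X_i,u}}>1}$, so $\Card{B}\leq\Snorm{Xu} = nR^2$; the choice of $R$ together with the hypothesis on $m$ ensures $nR^2\leq m$, so the well-spreadness property forces $\sum_{i\notin B}\Iprod{X_i,u}^2 \geq \tfrac{1}{4}\Snorm{Xu} = nR^2/4$, producing a truncated quadratic form in which every summand lies in $[0,1]$ and whose total mass is $\Omega(nR^2)$. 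The mask $\ind{\Abs{\bm\eta_i}\leq 1}$ retains a $\Theta(\alpha)$ fraction of this mass in expectation. To upgrade this to a uniform bound over the approximately sparse directions, I would construct an $\epsilon R$-net (in the $\cE$-metric) for $\cS_4(\overline{\Omega})\cap\Set{u\suchthat \cE(u)\leq R}$. By Sudakov's inequality, together with the dual bound $\E\Normi{X^{\top}\bm g/\sqrt{n}} \leq O(\sqrt{\nu\log d})$ (via Gaussian maxima and the column-norm hypothesis) and the contraction bound $\Normo{u}\leq s\cdot\cE(u)$, the log-covering number is $O(\nu k\log d/(\lambda\epsilon^2))$. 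Choosing $\epsilon = \Theta(\sqrt{\alpha})$ and applying Bernstein's inequality to each net element, then absorbing the net residual of $\cE$-size at most $\epsilon R$ via the splitting trick from \cref{eq:lem-subgradient-bound-pca-1} in \cref{lem:local-strong-convexity-pca}, yields restricted local strong convexity with $\kappa = \Theta(\alpha n)$.

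Plugging $\gamma = O(\sqrt{\nu n\log d})$, $s = 4\sqrt{k/\lambda}$, and $\kappa = \Theta(\alpha n)$ into the radius condition \cref{eq:bound-on-radius} verifies that $R^2 = \Theta(\nu k\log d/(\lambda\alpha^2 n))$ suffices, and \cref{thm:meta-theorem} then yields $\tfrac{1}{n}\Snorm{X(\hat{\bm\beta}-\beta^*)} < R^2 = O(\nu k\log d/(\lambda\alpha^2 n))$. Since \cref{lem:regularizer-decomposability} guarantees $\hat{\bm\beta}-\beta^* \in \cS_4(\overline{\Omega})$, the approximate sparsity condition for the RE-property is met by this difference, and the RE-property directly upgrades the prediction bound to the parameter bound $\Snorm{\hat{\bm\beta}-\beta^*} \leq \tfrac{1}{\lambda n}\Snorm{X(\hat{\bm\beta}-\beta^*)} = O(\nu k\log d/(\lambda^2\alpha^2 n))$, recovering both rates stated in the theorem. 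The hard part throughout is the uniform lower bound in the second paragraph: the net must be constructed over approximately $k$-sparse directions (rather than the entire $d$-dimensional ball), which is precisely what keeps the sample requirement at $\tilde{O}(k/\alpha^2)$ instead of the $\tilde{O}(d/\alpha^2)$ delivered by the dense-regression argument of \cite{ICML-linear-regression}.
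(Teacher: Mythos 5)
Your proposal is correct and follows essentially the same route as the paper: the meta-theorem with $\Omega=\overline{\Omega}=\Set{\supp\subseteq\supp(\beta^*)}$, contraction via the RE-property with $s=4\sqrt{k/\lambda}$, a Hoeffding-plus-union-bound gradient estimate, a Sudakov-minoration net over approximately sparse directions combined with well-spreadness and Bernstein's inequality (with $\eps=\Theta(\sqrt{\alpha})$) for restricted local strong convexity with $\kappa=\Theta(\alpha n)$, and the parameter error recovered from $\hat{\bm\beta}-\beta^*\in\cS_4(\overline{\Omega})$ and the RE-property. The only differences are cosmetic (e.g.\ truncating $\Abs{\iprod{X_i,u}}$ at $1$ rather than $1/2$ when invoking well-spreadness, and the exact bookkeeping for transferring the truncated sum from a net point back to $u$), which the paper handles with the same splitting idea you cite.
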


We assume $d \ge 2$ since for $d=1$ \cref{thm:oblivious-sparse-regression-restatement} is trivially true (since the probability $1-d^{-10} = 0$ in this case).

As for principal component analysis (\cref{sec:oblivious-pca}), we will prove \cref{thm:oblivious-sparse-regression-restatement} by showing that the estimator \cref{eq:huber-loss-regression-technical} fulfills the conditions of \cref{thm:meta-theorem} with $F(\beta)=\bm F_2(\beta)$, $\normreg{u}=\normo{u}$, $\gamma=100\sqrt{n\log d}$ and $\cE(u)=\frac{1}{\sqrt{n}}\Norm{Xu}$.

Let $\Omega:=\{\beta\in \R^d\mid \supp{\beta}\subseteq \supp(\beta^*)\}$ and $\overline{\Omega} := \Omega$.
Clearly, for any $v\in \Omega$ and any $v'\in \overline{\Omega}^\bot$,
\begin{align*}
	\Normo{v+v'}=\Normo{v}+\Normo{v'}\,.
\end{align*}
That is, $\Normo{\cdot}$ is decomposable, satisfying condition~\cref{eq:decomposability}.

The contraction condition~\cref{eq:contraction} holds for ${s=4\sqrt{k/\lambda}}$ since for all $v\in \cS_4\paren{\overline{\Omega}} = \Set{v \suchthat \normo{v}\le 4 \normo{v^{}_{\overline{\Omega} } } }$, $\Normo{v}\leq 4\sqrt{k}\norm{v}\le 4\sqrt{k/\lambda}\cdot \frac{1}{\sqrt{n}}\Norm{Xv}$, where the last inequality comes from the RE-property.

It remains to provide a gradient bound of the form in \cref{eq:subgradient-bound} and local strong convexity in \cref{eq:local-strong-convexity}.

\begin{lemma}[Gradient bound]\label{lem:subgradient-bound-regression}
	Consider the settings of \cref{thm:oblivious-sparse-regression-restatement}. 
	Then, with probability at least $1-\delta/2$,
	\begin{align*}
		\Normm{\nabla \bm F_2(\beta^*)}\leq 20\sqrt{\nu \cdot n\cdot \Paren{ \log d + \log(2/\delta)}}\,.
	\end{align*}
	\begin{proof}
		By definition of $f_2$,
		\begin{align*}
			 \nabla \Paren{\sum_{i=1}^n f_2(\bm y_i - \iprod{X_i,\beta^*})}= \bm z^\top X
		\end{align*}
		where $\bm z$ is a $n$-dimensional random vector with independent, symmetric entries $f'_2(\bm \eta_i)$ bounded by ${2}$ in absolute value.
		By Hoeffding's inequality (\cref{fact:hoeffding}), for $t\geq0$,
		\begin{align*}
			\bbP \Paren{\Abs{\iprod{\bm z, X_{i}}} \geq 10t\cdot 2\cdot \Norm{X_i}}\leq \exp\Paren{-t^2}\,.
		\end{align*}
		Since $\Norm{X_i}\le \sqrt{\nu n}$, taking a union bound over all $j\in[d]$ yields the statement.
	\end{proof}
\end{lemma}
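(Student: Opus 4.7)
The plan is to compute the gradient explicitly, recognize each of its $d$ coordinates as a sum of independent bounded symmetric random variables, apply Hoeffding's inequality to each coordinate, and finish with a union bound over $j \in [d]$.

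First I would differentiate $\bm F_2(\beta) = \sum_{i=1}^n f_2(\bm y_i - \iprod{X_i, \beta})$ at $\beta^*$. Since $\bm y_i - \iprod{X_i, \beta^*} = \bm \eta_i$ and
\[
f_2'(t) = \begin{cases} t & \text{if } \abs{t} \le 2, \\ 2\,\sign(t) & \text{otherwise}, \end{cases}
\]
the gradient can be written as $\nabla \bm F_2(\beta^*) = -X^\top \bm z$, where $\bm z \in \R^n$ has entries $\bm z_i = f_2'(\bm \eta_i)$. The key observations are: (a) $\abs{\bm z_i} \le 2$ uniformly, (b) $f_2'$ is an odd function, so the symmetry of $\bm \eta_i$ about zero transfers to $\bm z_i$, making each $\bm z_i$ centered and symmetric, and (c) the $\bm z_i$ are independent because the $\bm \eta_i$ are. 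Consequently, the $j$-th coordinate of $\nabla \bm F_2(\beta^*)$ equals $-\iprod{\bm z, X^j}$, which is a sum of independent, mean-zero random variables bounded by $2 \abs{X^j_i}$.

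Next I would invoke Hoeffding's inequality (\cref{fact:hoeffding}) coordinate-wise: for each $j \in [d]$ and $t > 0$,
\[
\Pr\Paren{\Abs{\iprod{\bm z, X^j}} \ge 2 t \cdot \Norm{X^j}} \le 2\exp(-t^2/2).
\]
Using the column-norm assumption $\Norm{X^j} \le \sqrt{\nu n}$ and setting $t = \Theta(\sqrt{\log d + \log(2/\delta)})$, each coordinate is bounded by $O(\sqrt{\nu n (\log d + \log(2/\delta))})$ with failure probability at most $\delta/(2d)$. A union bound over $j \in [d]$ then yields the claimed $\Normm{\nabla \bm F_2(\beta^*)} \le 20 \sqrt{\nu n (\log d + \log(2/\delta))}$ with probability at least $1 - \delta/2$.

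There is no real obstacle here; the argument is a textbook sub-Gaussian concentration bound. The only point that requires a moment of care is verifying that the clipped residuals $\bm z_i = f_2'(\bm \eta_i)$ are symmetric and independent, but this follows immediately from the oddness of $f_2'$ and the independence and symmetry of the noise $\bm \eta$.
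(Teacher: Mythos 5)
Your proposal is correct and follows essentially the same route as the paper: write $\nabla \bm F_2(\beta^*)$ as $X^\top \bm z$ with $\bm z_i = f_2'(\bm \eta_i)$ independent, symmetric (hence centered) and bounded by $2$, apply Hoeffding coordinate-wise using $\Norm{X^j}\le\sqrt{\nu n}$, and union bound over the $d$ columns. The only differences are cosmetic (a sign convention and slightly different constants in the Hoeffding threshold), so there is nothing to add.
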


\paragraph{Proof of local strong convexity}
We first bound the size of an $\epsilon$-net for the set of approximately sparse vectors (\cref{lem:eps-net-l1-norm}) and then prove the required local strong convexity bound (\cref{lem:local-strong-convexity-sparse-regression}).

\begin{lemma}[$\eps$-Net for approximately sparse vectors]\label{lem:eps-net-l1-norm}
	Let $0< \eps < 1$ and 
	\begin{align*}
		\cU_{s}:=\Set{\beta\in \R^{d}\suchthat \Normo{\beta}\leq s\cdot \frac{1}{\sqrt{n}}\Norm{X\beta}\,,
			 \frac{1}{\sqrt{n}}\Norm{X\beta}\leq 1}\,.
	\end{align*}
	Then $\cU_{s}$ has an $\eps$-net of size $\exp\Brac{\frac{16s^2\nu\log d}{\eps^2}}$ 
	in terms of distance  ${\rho(\beta,\beta')}:=\frac{1}{\sqrt{n}}\Norm{X\Paren{\beta-\beta'}}$.
	\begin{proof}
		Let $\bm w$ be an $n$-dimensional random Gaussian vector $\bm w\sim N(0,\Id_n)$.	By Sudakov's minoration (\cref{fact:sudakov-minoration}), for 
		\begin{align*}
			\sqrt{\log \Abs{\cN_{\eps, \rho(\cdot,\cdot)}( \cU_{s})}} &\leq \frac{2}{\eps}	\E \frac{1}{\sqrt{n}}\sup_{\beta \in \cU_{s}} \iprod{\bm w,X\beta} & (\mbox{\cref{fact:sudakov-minoration}})\\
			&=	 \frac{2}{\eps}	\E \frac{1}{\sqrt{n}}\sup_{\beta \in \cU_{s}} \iprod{\transpose{X}\bm w,\beta} &  \\
			&\leq \frac{2}{\eps}\E \frac{1}{\sqrt{n}}\sup_{\beta \in \cU_{s}} \Normm{\transpose{X}\bm w}\Normo{\beta}& (\mbox{\Holder's inequality})\\
			&\leq \frac{2s}{\eps}\E \frac{1}{\sqrt{n}}\Normm{\transpose{X}\bm w}& (\mbox{Definition of } \cU_{s}),\\
			&\leq \frac{4s \sqrt{\nu\log d}}{\eps}\,,
		\end{align*}
		where in the last inequality we use the bound on the expected maximal entry of a vector with $\nu$-subgaussian entries \cref{fact:max-norm-subgaussian-entries}.
	\end{proof}
\end{lemma}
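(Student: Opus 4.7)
The plan is to bound the covering number in the pseudometric $\rho(\beta,\beta') = \frac{1}{\sqrt{n}}\Norm{X(\beta-\beta')}$ by comparing to a Gaussian width and then invoking \emph{Sudakov's minoration}. Observe that $\rho$ is induced by the Hilbert space structure on $\R^n$ via the linear map $\beta \mapsto \frac{1}{\sqrt{n}}X\beta$, so the image of $\cU_s$ under this map lies inside the Euclidean unit ball of $\R^n$. This places us exactly in the setting of Sudakov's minoration, so it will be enough to upper bound the Gaussian width
\begin{align*}
\E \sup_{\beta \in \cU_s} \frac{1}{\sqrt{n}} \iprod{\bm w, X\beta}\,,
\end{align*}
where $\bm w \sim N(0,\Id_n)$, and translate the resulting bound into a covering number via \cref{fact:sudakov-minoration}.

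The next step is to exploit the sparsity-type constraint in the definition of $\cU_s$ via $\ell_1$--$\ell_\infty$ duality. Moving $X$ to the other side of the inner product and applying H\"older's inequality gives $\iprod{\bm w, X\beta} = \iprod{\transpose{X}\bm w, \beta} \le \Normm{\transpose{X}\bm w} \cdot \Normo{\beta}$. By the definition of $\cU_s$, every $\beta \in \cU_s$ satisfies $\Normo{\beta} \le s \cdot \frac{1}{\sqrt{n}}\Norm{X\beta} \le s$, so the Gaussian width is at most $\frac{s}{\sqrt{n}} \cdot \E \Normm{\transpose{X}\bm w}$.

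To control $\E \Normm{\transpose{X}\bm w}$, I would use the column-norm assumption: each coordinate $(\transpose{X}\bm w)_j = \iprod{X^j, \bm w}$ is a centered Gaussian with variance $\Norm{X^j}^2 \le \nu n$, hence $\sqrt{\nu n}$-subgaussian. The standard maximum bound for $d$ subgaussian variables (\cref{fact:max-norm-subgaussian-entries}) then yields $\E \Normm{\transpose{X}\bm w} \le O\Paren{\sqrt{\nu n \log d}}$. Plugging back and applying Sudakov's minoration produces
\begin{align*}
\sqrt{\log \Card{\cN_{\eps,\rho}(\cU_s)}} \le O\Paren{\frac{s \sqrt{\nu \log d}}{\eps}}\,,
\end{align*}
which, after squaring, matches the claimed exponent $\frac{16 s^2 \nu \log d}{\eps^2}$ up to tracking the explicit absolute constant from Sudakov's inequality. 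I do not expect a real obstacle here; the only bookkeeping point is making sure the constants line up, since all three ingredients (Sudakov, H\"older, Gaussian max) are standard and interact cleanly.
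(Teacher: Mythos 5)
Your proposal is correct and follows essentially the same route as the paper's proof: Sudakov's minoration for the metric induced by $\beta\mapsto \frac{1}{\sqrt{n}}X\beta$, H\"older's inequality with the $\ell_1$ constraint $\Normo{\beta}\le s$, and the subgaussian-maxima bound $\E\Normm{\transpose{X}\bm w}\le 2\sqrt{\nu n\log d}$ from the column-norm assumption. The constants indeed line up to give exactly $\exp\Brac{\frac{16 s^2\nu\log d}{\eps^2}}$.
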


\begin{lemma}[Restricted local strong convexity of Huber-loss]\label{lem:local-strong-convexity-sparse-regression}
	\newcommand{\placeholder}{2^i}
	Consider the settings of \cref{thm:oblivious-sparse-regression-restatement}. Let $0<\delta<1, R> 0$. Define 
	\begin{align*}
	\cB_R&:=\Set{u\in \R^{d}\suchthat \frac{1}{\sqrt{n}}\Norm{Xu}= R}\,.
	\end{align*}  
	Suppose that the set size $m$ from the well-spread property satisfies $m \ge 4R^2 n$ and
	\[
	R \ge 100\cdot \sqrt{\frac{\nu k\log d+\log\Paren{2/\delta}}{\lambda \cdot \alpha^2 \cdot n}}\,.
	\]	
	Then with probability at least $1-\delta/2$, for all $u\in \cB_R \cap \cS_4(\overline{\Omega})$,
	\begin{align*}
		\bm F_2(\beta)\geq & \bm F_2(\bm y-X\beta^*)+\iprod{\nabla \bm F_2(\beta^*), u}
		+0.01\cdot \alpha n \cdot \frac{1}{n}\Norm{X u}^2\,.
	\end{align*}
	\begin{proof}
		Denote $\cC_R = \cB_R \cap \cS_4(\overline{\Omega})$ and let
		 $u\in \cC_R$. By \cref{lem:second-order-behavior-huber}, 
		\begin{align*}
			\bm F_2(\beta^*+u)-\bm F_2(\beta^*)-\iprod{\nabla \bm F_2(\beta^*),u}&\geq \frac{1}{2}
			\underset{i\in [n]}{\sum}\iprod{X_i,u}^2\ind{\Abs{\bm \eta_i}\leq 1}\cdot \ind{\Abs{\iprod{X_i, u}}\leq 1}\,.
		\end{align*}
		 Note that for any $u\in \cB_R$ there are at most $4R^2n$ coordinates of $Xu$ larger than $1/4$ in absolute value, and since $X$ is well-spread for sets of size $m = 4R^2n$,
		\begin{align*}
			\underset{i\in [n]}{\sum}\iprod{X_i,u}^2 \ind{\iprod{X_i,u}^2\leq 1/4}\geq \frac{1}{4}\Snorm{Xu}\,.
		\end{align*}
		Thus
		\begin{align*}
			E:=\E \underset{i\in [n]}{\sum}\iprod{X_i,u}^2\ind{\Abs{\bm \eta_i}\leq 1}\cdot \ind{\iprod{X_i,u}^2\leq 1/4 }\geq\frac{1}{4} \cdot  \alpha\cdot \Snorm{Xu} = \frac{\alpha R^2 n}{4}
		\end{align*}
		We now bound the deviation. We have
		\begin{align*}
			\text{for all }i \in [n]\,,\qquad &\iprod{X_i,u}^2\ind{\Abs{\bm \eta_i}\leq 1}\cdot \ind{\iprod{X_i,u}^2\leq 1/4}\leq 1\\
			\text{and}\qquad &	\E \underset{i\in [n]}{\sum}\Brac{\iprod{X_i,u}^2\cdot \ind{\iprod{X_i,u}^2\leq 1/4}
				\cdot\Paren{\ind{\Abs{\bm \eta_i}\leq 1}-\alpha_i}}^2\\
			&\leq \E \underset{i\in [n]}{\sum}\iprod{X_i,u}^2 \cdot \ind{\iprod{X_i,u}^2\leq 1/4}\cdot
			\ind{\Abs{\bm \eta_i}\leq 1}\\
			&\leq E\,.
		\end{align*}
		Applying Bernstein's inequality \cref{fact:bernstein}
		\begin{align*}
			\bbP \Paren{ \underset{i\in [n]}{\sum}\iprod{X_i,u}^2\cdot \ind{\iprod{X_i,u}^2\leq 1/4}\cdot \Paren{\ind{\Abs{\bm \eta_i}\leq 1}-\alpha_i}\geq t\cdot \sqrt{E} + t^2}\leq \exp\Set{-t^2/4}\,.
		\end{align*}
		It remains to extend uniformly this bound  over all $u\in \cC_R$.
		By \cref{lem:eps-net-l1-norm} there exists an $\Paren{\eps\cdot R}$-net $\cN_{\eps R}(\cC_R)$ of size 
		$\exp\Brac{\frac{256\nu k\log d}{\lambda\eps^2}}$ (recall that $s = 4\sqrt{k/\lambda}$). 
		Thus for any $u \in \cC_R$ there exists  $u' \in \cN_{\eps R}(\cC_R)$ such that $\frac{1}{\sqrt{n}}\Norm{X\Paren{u-u'}}\leq \eps R$ and consequently
		\begin{align*}
			\underset{i\in [n]}{\sum}\iprod{X_i,u'}^2
			\cdot\ind{\Abs{\bm \eta_i}\leq 1,\,\iprod{X_i,u'}^2\leq \tfrac 1 4}\leq& \underset{i\in [n]}{\sum}\iprod{X_i,u'}^2
			\cdot\ind{\Abs{\bm \eta_i}\leq 1,\,\iprod{X_i,u}^2\leq 1,\, \iprod{X_i,u'}^2\leq \tfrac 1 4} 
			+\eps^2R^2n
			\\
			\leq&2\underset{i\in [n]}{\sum}\iprod{X_i,u}^2
			\cdot\ind{\Abs{\bm \eta_i}\leq 1,\,\iprod{X_i,u}^2\leq 1,\,\iprod{X_i,u'}^2\leq \tfrac 1 4} 
			+\eps^2R^2n\\
			&+2\underset{i\in [n]}{\sum}\iprod{X_i,u'-u}^2
			\cdot\ind{\Abs{\bm \eta_i}\leq 1,\,\iprod{X_i,u}^2\leq 1 ,\,\iprod{X_i,u'}^2\leq \tfrac 1 4}\\
			\leq&2\underset{i\in [n]}{\sum}\iprod{X_i,u}^2
			\cdot\ind{\Abs{\bm \eta_i}\leq 1,\,\iprod{X_i,u}^2\leq 1 }
			+3\eps^2R^2n\,.
		\end{align*}
		The first inequality holds since each term at the first sum that doesn't appear in the second sum corresponds to the index $i\in[n]$ such that $\iprod{X_i,u-u'}^2 \ge 1/4$, and since each term is bounded by $1/4$, their sum is bounded by $\sum_{i\in[n]} \iprod{X_i,u-u'}^2\le \eps^2R^2n$.
		
				Setting $\eps=\sqrt{\alpha}/4$ and
				taking a union bound, with probability at least $1-\delta/2$ for all unit vectors $u \in \cL_{k,R}$ we get
		\begin{align*}
			\underset{i\in [n]}{\sum}\iprod{X_i,u}^2\ind{\Abs{\bm \eta_i}\leq 1} \ind{{\iprod{X_i, u}}^2 \leq 4}
			\geq& 
			\frac{E}{2} - \frac{3\eps^2R^2n}{2} 
			\\&\qquad-\sqrt{E}\frac{\sqrt{64\nu k\log d+  
						4\log\paren{\frac{2}{\delta} } } }  {\sqrt{\lambda}\eps} 
			\\&\qquad- 
				\frac{128\nu k\log d + 4\log\paren{\frac{2}{\delta} } } {\lambda\eps^2} 
			\\
			\ge& 0.01\cdot \alpha \cdot R^2 n\,.
		\end{align*}
	\end{proof}
\end{lemma}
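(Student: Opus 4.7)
The plan is to start with the pointwise second-order behavior of the Huber penalty. Since $f_2$ is quadratic on $[-2,2]$, whenever $|a|\le 1$ and $|b|\le 1$ both $a$ and $a+b$ lie in that quadratic region, so $f_2(a+b)-f_2(a)-f_2'(a)b \ge \tfrac12 b^2$. Applying this coordinate-wise with $a=\bm\eta_i$ and $b=-\iprod{X_i,u}$ (via \cref{lem:second-order-behavior-huber}) yields
\[
\bm F_2(\beta^*+u)-\bm F_2(\beta^*)-\iprod{\nabla \bm F_2(\beta^*),u} \;\ge\; \tfrac12 \sum_{i=1}^{n} \iprod{X_i,u}^2 \ind{|\bm\eta_i|\le 1}\cdot\ind{|\iprod{X_i,u}|\le 1}.
\]
It therefore suffices to prove that the random sum on the right is at least $\Omega(\alpha R^2 n)$ uniformly over $u\in \cC_R := \cB_R \cap \cS_4(\overline{\Omega})$, since $\|Xu\|^2 = R^2 n$ on $\cB_R$.

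The second step is a deterministic reduction using well-spreadness. For any $u\in \cB_R$, Markov gives $|\{i : \iprod{X_i,u}^2 > 1/4\}| \le 4\|Xu\|^2 = 4R^2 n \le m$. Since $u \in \cS_4(\overline{\Omega})$ satisfies $\Normo{u_{\supp(\beta^*)}} \ge 0.1\Normo{u}$, the well-spreadness property applied to the complementary set implies $\sum_i \iprod{X_i,u}^2 \ind{\iprod{X_i,u}^2 \le 1/4} \ge \tfrac14 \|Xu\|^2 = \tfrac{R^2 n}{4}$. Tightening the indicator $\ind{|\iprod{X_i,u}|\le 1}$ to $\ind{|\iprod{X_i,u}|\le 1/2}$ weakens the earlier inequality but makes each summand bounded by $1/4$, preparing it for Bernstein. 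Together with independence and $\Pr(|\bm\eta_i|\le 1)\ge\alpha$, the expected value of the truncated sum $T(u) := \sum_i \iprod{X_i,u}^2 \ind{|\bm\eta_i|\le 1}\ind{\iprod{X_i,u}^2 \le 1/4}$ satisfies $\E T(u) \ge \alpha R^2 n/4$.

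For a single $u$, Bernstein's inequality on the centered summands (each bounded by $1/4$ with total variance at most $\E T(u)$) yields $\Pr(\E T(u) - T(u) \ge t\sqrt{\E T(u)} + t^2) \le 2e^{-t^2/4}$. To promote this pointwise bound to a uniform one on $\cC_R$, I would invoke the $\epsilon$-net from \cref{lem:eps-net-l1-norm} in the seminorm $\rho(u,u')=\tfrac{1}{\sqrt n}\|X(u-u')\|$; with $s=4\sqrt{k/\lambda}$ this net has log-cardinality $O(\nu k\log d /(\lambda \epsilon^2))$. Choosing $\epsilon=\sqrt\alpha/4$ and $t^2 = O(\nu k\log d /\alpha + \log(1/\delta))$ matches the entropy, and the hypothesis $R \gtrsim \sqrt{(\nu k\log d + \log(1/\delta))/(\lambda\alpha^2 n)}$ is precisely what ensures $t\sqrt{\E T(u)} + t^2 \le \E T(u)/2$, leaving $T(u) \ge \E T(u)/2 \gtrsim \alpha R^2 n$ for every net point.

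The delicate step, and the one I expect to be the main obstacle, is transferring the bound from the net to arbitrary $u \in \cC_R$: the indicator $\ind{\iprod{X_i,u}^2\le 1/4}$ is discontinuous in $u$, so $|T(u)-T(u')|$ cannot be controlled by $\rho(u,u')$ alone. The remedy is to compare $T(u')$ to $T(u)$ by splitting indices into those on which the two indicators agree and those on which they differ. The agreeing part costs at most $\sum_i |\iprod{X_i,u}^2 - \iprod{X_i,u'}^2|$, which Cauchy--Schwarz bounds by $O(\|X(u-u')\|\cdot(\|Xu\|+\|Xu'\|)) = O(\epsilon R^2 n)$; for the disagreeing indices, each offending $i$ must have $\iprod{X_i, u-u'}^2$ comparable to a fixed constant, so their count, and hence their contribution, is $O(\epsilon^2 R^2 n)$. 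With $\epsilon = \sqrt{\alpha}/4$, all these discrepancies are dominated by a small constant times $\alpha R^2 n$, and combining with the concentration step yields $T(u) \ge \Omega(\alpha R^2 n)$ uniformly on $\cC_R$; absorbing the $\tfrac12$ prefactor from the Huber inequality gives the stated $0.01\cdot\alpha n \cdot \tfrac1n \|Xu\|^2$ bound.
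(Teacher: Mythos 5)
Your skeleton is the same as the paper's (Huber second-order bound, Markov counting plus well-spreadness, Bernstein at a fixed point, the $\eps$-net of \cref{lem:eps-net-l1-norm} with $\eps=\sqrt{\alpha}/4$), but the transfer from the net to an arbitrary $u\in\cC_R$ --- the step you yourself flag as the delicate one --- does not go through as you carry it out. The error you assign to the agreeing indices, bounded via Cauchy--Schwarz by $\norm{X(u-u')}\cdot\Paren{\norm{Xu}+\norm{Xu'}}=O(\eps R^2 n)$, is \emph{linear} in $\eps$; with the forced choice $\eps=\sqrt{\alpha}/4$ this is $\Theta(\sqrt{\alpha}\,R^2n)$, which for $\alpha=o(1)$ (the regime the whole paper targets) completely swamps the quantity you are trying to lower bound, since $\E T(u)\approx \alpha R^2 n/4$. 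You also cannot repair this by shrinking $\eps$: making the linear term $o(\alpha R^2 n)$ needs $\eps\ll\alpha$, but then the net has size $\exp\Paren{\Omega(\nu k\log d/(\lambda\alpha^2))}$, and for Bernstein's deviation $t\sqrt{E}+t^2$ to be dominated by $E$ you would need $\alpha R^2 n\gtrsim \nu k\log d/(\lambda\alpha^2)$, i.e.\ $R\gtrsim\sqrt{\nu k\log d/(\lambda\alpha^3 n)}$ --- a strictly stronger radius requirement than the lemma's hypothesis, which must hold at $R\asymp\sqrt{(\nu k\log d+\log(2/\delta))/(\lambda\alpha^2 n)}$ to give the theorem's error rate. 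The paper avoids any first-order comparison altogether: it uses $\iprod{X_i,u'}^2\le 2\iprod{X_i,u}^2+2\iprod{X_i,u-u'}^2$, so the main term is lost only up to a multiplicative factor $2$ (absorbed into the constants) and every \emph{additive} error is $O(\eps^2R^2n)=O(\alpha R^2 n/16)$, which is genuinely small compared with $E/2$.

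There is a second, related problem: your claim that every disagreeing index must have $\iprod{X_i,u-u'}^2$ comparable to a fixed constant is false if, as written, both $T(u)$ and $T(u')$ truncate at the same threshold $1/4$ --- the indicator can flip with $\iprod{X_i,u}$ and $\iprod{X_i,u'}$ both arbitrarily close to $1/2$, so the number of flipped indices is not controlled by $\eps^2R^2n$. The argument needs the threshold gap that the Huber bound makes available: compare the sum truncated at $\iprod{X_i,u'}^2\le 1/4$ at the net point with the sum truncated at $\iprod{X_i,u}^2\le 1$ at the arbitrary point. Then an index counted at $u'$ but not at $u$ really does satisfy $\iprod{X_i,u-u'}^2\ge 1/4$, and since each such term contributes at most $1/4$, their total is at most $\sum_i\iprod{X_i,u-u'}^2\le\eps^2R^2n$. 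This threshold mismatch plus the quadratic (factor-of-two) comparison is exactly how the paper's chain of three inequalities is organized, and both ingredients are needed; as proposed, your transfer step leaves a genuine gap.
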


\paragraph{Putting things together} We  combine the above results with \cref{thm:meta-theorem}.

\begin{proof}[\textbf{Proof of \cref{thm:oblivious-sparse-regression}}]
	By \cref{lem:subgradient-bound-regression} and \cref{lem:local-strong-convexity-sparse-regression}, we can apply \cref{thm:meta-theorem} with 
$\gamma = 100\sqrt{\nu\cdot n\Paren{\log d + \log \Paren{2/\delta}}}$, $\kappa=0.01\cdot \alpha\cdot n$ and $s = 4\sqrt{k/\lambda}$.
	It follows that for
	\begin{align*}
		R\gtrsim\sqrt{\frac{\nu\cdot k\cdot \Paren{\log d + \log(2/\delta)}}{\lambda \cdot\alpha^2\cdot  n}}\,,
	\end{align*}
	the estimator $\hat{\bm \beta}$   defined in \cref{eq:huber-loss-regression-technical}  with probability $1-\delta$ satisfies $\frac{1}{n}\Norm{X\Paren{\hat{\bm \beta}-\beta^*}}\leq  R$. Taking $\delta = d^{-10}$, we get the desired bound. 
	Since $\hat{\bm\beta}-\beta \in \cS_4({\Omega})$, we also get the desired parameter error $\Norm{\hat{\bm\beta}-\beta }\le R/\sqrt{\lambda}$.
\end{proof}

	\section{Sparse linear regression with Gaussian design (\cref{thm:oblivious-sparse-regression-gaussian-design})}\label{sec:oblivious-gaussian-design}
In this section we will prove \cref{thm:oblivious-sparse-regression-gaussian-design}. As before, we will use \cref{thm:meta-theorem}.
Recall that in this setting, our model looks as follows:
\begin{align*}
\bm y = \bm X \beta^* + \eta
\end{align*}
where $\bm X \in \R^{n \times d}$ is a \emph{random} matrix whose rows $\bm X_1, \ldots, \bm X_n$ are \iid $N(0, \Sigma)$ and $\eta \in \R^n$ is a \emph{deterministic} vector such that $\alpha n$ coordinates have absolute value bounded by 1.
We restate \cref{thm:oblivious-sparse-regression-gaussian-design} here for completeness: 

\begin{theorem}[Restatement of \cref{thm:oblivious-sparse-regression-gaussian-design}]\label{thm:oblivious-sparse-regression-gaussian-design-restatement}
	Let $\beta^* \in \R^d$ be an unknown $k$-sparse vector and let $\bm X$ be a $n$-by-$d$ random matrix with \iid rows $\bm X_1,\ldots \bm X_n \sim N(0, \Sigma)$ for a positive definite matrix $\Sigma$.
	Further, let $\eta\in \R^n$ be a deterministic vector with $\alpha\cdot n$ coordinates bounded by $1$ in absolute value.	
	Suppose that $n\gtrsim \frac{\nu(\Sigma)\cdot k \log d}{\sigma_{\min}(\Sigma) \cdot \alpha^2}$,
	where $\nu(\Sigma)$ is the maximum diagonal entry of $\Sigma$ and $\sigma_{\min}(\Sigma)$ is its smallest eigenvalue.
	Then, with probability at least $1-d^{-10}$ over $\bm X$, given $\bm X$ and $\bm y=\bm X\beta^*+\eta$, the 
	estimator \cref{eq:results-estimator-regression} satisfies
	\begin{equation*}
		\frac{1}{n} \Norm{\bm X\Paren{ \hat{\bm  \beta}- \beta^*}}^2 \leq O\Paren{\frac{\nu(\Sigma) \cdot k\log d}{\sigma_{\min}(\Sigma) \cdot \alpha^2 \cdot n}} 
		\mbox{\qquad and\qquad} 
		\Norm{{ \hat{\bm  \beta}- \beta^*}}^2 \leq O\Paren{\frac{\nu(\Sigma) \cdot k\log d}{\sigma_{\min}^2(\Sigma) \cdot \alpha^2 \cdot n}} \,.
	\end{equation*}
\end{theorem}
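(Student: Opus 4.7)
The plan is to reduce \cref{thm:oblivious-sparse-regression-gaussian-design-restatement} to the fixed-design result \cref{thm:oblivious-sparse-regression-restatement}, by first symmetrizing the deterministic noise through random sign flips and then conditioning on the Gaussian design so that the analysis of \cref{sec:oblivious-sparse-regression} applies.

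\textbf{Symmetrization.} I draw \iid Rademacher signs $\bm\epsilon_1,\dots,\bm\epsilon_n$, independent of $\bm X$, and pass to the equivalent problem with observations $(\bm\epsilon_i\bm X_i,\bm\epsilon_i\bm y_i)$. Because each $\bm X_i\sim N(0,\Sigma)$ is symmetric, the joint law of the design is unchanged, and because both the Huber penalty and the $\ell_1$ regularizer are even, the estimator \cref{eq:results-estimator-regression} is invariant under these sign flips. The effective noise becomes $\bm\eta'_i:=\bm\epsilon_i\eta_i$, which is symmetric about zero; the deterministic inlier set $S:=\{i:|\eta_i|\leq 1\}$ has size exactly $\alpha n$, and $\Pr\{|\bm\eta'_i|\leq 1\}=\ind{i\in S}$.

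\textbf{Design properties and reduction.} Condition on $\bm X$. By standard Gaussian concentration, with probability at least $1-d^{-100}$: (a) every column satisfies $\Norm{\bm X^j}^2\leq O(\nu(\Sigma)\cdot n)$ by a $\chi^2$-tail bound and a union bound over $j\in[d]$; (b) the restricted eigenvalue property holds with $\lambda=\Omega(\sigma_{\min}(\Sigma))$, under the assumed sample size; and (c) for every $u\in\R^d$ with $\Normo{u_{\supp(\beta^*)}}\geq 0.1\,\Normo{u}$,
\[
\Norm{(\bm X u)_S}^2\;\gtrsim\;\alpha\cdot \Norm{\bm X u}^2 .
\]
With these properties in hand, the proof of \cref{thm:oblivious-sparse-regression-restatement} via \cref{thm:meta-theorem} goes through almost verbatim. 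The gradient bound of \cref{lem:subgradient-bound-regression} yields $\Normm{\nabla \bm F_2(\beta^*)}\leq O(\sqrt{\nu(\Sigma)\,n\log d})$ since the entries of $\bm z:=f'_2(\bm\eta')$ are independent, symmetric and bounded by $2$, and (a) bounds the column norms. The restricted local strong convexity of \cref{lem:local-strong-convexity-sparse-regression} is proved after replacing its $E$ by $E:=\sum_{i\in S}\iprod{\bm X_i,u}^2\ind{\iprod{\bm X_i,u}^2\leq 1/4}$: an application of (c), combined with the same truncation and $\eps$-net argument as in \cref{lem:local-strong-convexity-sparse-regression}, yields $E\gtrsim\alpha\,\Norm{\bm X u}^2$ uniformly over approximately $k$-sparse $u$. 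Plugging $\gamma=100\sqrt{\nu(\Sigma)\,n\log d}$, $\kappa=\Omega(\alpha n)$, and $s=O(\sqrt{k/\sigma_{\min}(\Sigma)})$ into \cref{thm:meta-theorem} produces exactly the prediction and parameter error bounds in the statement.

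\textbf{Main obstacle.} The only truly new step is property (c): the deterministic well-spread hypothesis used in \cref{thm:oblivious-sparse-regression-restatement} is simply \emph{false} for the set $S$ once $\alpha$ is small, since $S$ has size only $\alpha n$ and well-spreadness would require retaining a constant fraction of the $\ell_2$-energy on such a small subset. The point is that for Gaussian designs we nevertheless get the quantitatively weaker but sufficient statement $\Norm{(\bm X u)_S}^2\approx |S|\cdot \transpose{u}\Sigma u$, because the coordinates $(\bm X u)_i$ are \iid $N(0,\transpose{u}\Sigma u)$; dividing by $\Norm{\bm X u}^2\approx n\cdot\transpose{u}\Sigma u$ produces precisely the ratio $\alpha$ needed in the local strong convexity step. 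Making this uniform over the infinite family of approximately $k$-sparse directions is the technical crux: one builds an $\eps$-net analogous to \cref{lem:eps-net-l1-norm} (of size $\exp(O(k\log d))$ in the RE-pseudometric) and applies Bernstein/chi-squared concentration on each net point; the net error is absorbed exactly when $\alpha^2 n\gtrsim \nu(\Sigma)\cdot k\log d/\sigma_{\min}(\Sigma)$, which is the assumed sample-size condition.
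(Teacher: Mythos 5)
Your overall skeleton matches the paper's: verify the meta-theorem's conditions for the Gaussian design, the crux being a uniform statement of the form $\Norm{(\bm Xu)_S}^2\approx \alpha n\cdot \transpose{u}\Sigma u$ over approximately $k$-sparse directions (the paper obtains this from \cref{thm:approx-k-sparse-concentration}, applied both to $\bm X$ and to the $\alpha n\times d$ inlier sub-matrix, proved via Sudakov--Fernique and Gaussian Lipschitz concentration rather than your net-plus-Bernstein route; both are viable). However, two steps of your reduction fail as written. First, the symmetrization is inconsistent with your conditioning: since $f_2'$ is odd, the gradient of the flipped problem at $\beta^*$ is $\sum_i f_2'(\bm\epsilon_i\eta_i)\,\bm\epsilon_i\bm X_i=\sum_i f_2'(\eta_i)\bm X_i$, so once you ``condition on $\bm X$'' there is no randomness left and Hoeffding over $\bm z=f_2'(\bm\eta')$ proves nothing. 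You must either condition on the flipped design $\diag(\bm\epsilon)\bm X$ (which is independent of $\bm\epsilon$, and whose relevant properties coincide with those of $\bm X$), or drop the symmetrization entirely and bound $\Normm{\transpose{z}\bm X}$ for deterministic $z=f_2'(\eta)$ using the Gaussianity of $\bm X$, as the paper does in \cref{lem:subgradient-bound-regression-gaussian}. This is repairable, but as stated it is an error.

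Second, and more substantively, the claim that property (c) (a lower bound) ``combined with the same truncation and $\eps$-net argument as in \cref{lem:local-strong-convexity-sparse-regression}'' yields restricted strong convexity glosses over exactly the step that requires new work. In \cref{lem:local-strong-convexity-sparse-regression} the Huber truncation $\ind{\iprod{X_i,u}^2\le 1/4}$ is absorbed by well-spreadness of the full design with $m=4R^2n$, and the noise indicators are handled by Bernstein; in your setting the indicator $\ind{i\in S}$ is deterministic, so Bernstein gives nothing, and the truncation must be performed inside the set $S$ of size only $\alpha n$. The crude count ``at most $4R^2n$ coordinates of $\bm Xu$ exceed the threshold'' does not suffice, because $4R^2n\ll\alpha n$ would require $n\gtrsim k\log d/\alpha^3$, which is not assumed. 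What is needed is (i) the uniform \emph{upper} bound $\Norm{(\bm Xu)_S}^2\le O(\alpha)\Norm{\bm Xu}^2$, so that the number of large coordinates \emph{within} $S$ is $O(\alpha R^2n)$, a small fraction of $\abs{S}$ once $R$ is at most a small constant, and (ii) a well-spreadness statement for the inlier sub-matrix $\bm X_S$ over approximately sparse directions. This is precisely why the paper proves both inequalities of \cref{thm:approx-k-sparse-concentration} (explicitly noting the upper bound is not in the cited restricted-eigenvalue literature) and \cref{lem:well-spreadness-gaussian}, and applies them to $\bm X_S$ in \cref{lem:local-strong-convexity-sparse-regression-gaussian} with the constraint $R\le 1/200$. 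Your ``main obstacle'' paragraph, with $\Norm{(\bm Xu)_S}^2\approx\abs{S}\cdot\transpose{u}\Sigma u$, suggests you have the two-sided bound in mind, but the reduction as described omits this ingredient, and without it the strong-convexity step does not go through.
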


As in the previous section, we assume $d \ge 2$ since for $d=1$ \cref{thm:oblivious-sparse-regression-gaussian-design} is true (since the probability $1-d^{-10} = 0$ in this case).

First, we bound the gradient of Huber loss. Then, to prove restricted local strong convexity of Huber loss, we show that the values of the empirical covariance (as a quadratic form) on approximately $k$-sparse vectors are well-concentrated near the values of the actual covariance. The proof first appeared in \cite{JMLR:v11:raskutti10a}, but they only stated the result in terms of a lower bound on the values of empirical covariance and did not discuss an upper bound, though the proof of the upper bound is very similar. Then we use this concentration to prove well-spreadness and restricted local strong convexity.

Recall that $\bm F_2(\beta) = \sum_{i=1}^n f_2\Paren{\bm y_i- \iprod{\bm X_i, \beta}}$, where
\[
f_2(t):=\begin{cases}
\frac{1}{2}t^2&\text{for }\abs{t}\leq 2\,,\\
2\abs{t}-2& \text{otherwise}.
\end{cases}
\]

\paragraph{Gradient bound for Gaussian design}
\begin{lemma}\label{lem:subgradient-bound-regression-gaussian}
	Consider the settings of \cref{thm:oblivious-sparse-regression-gaussian-design}. Then with probability at least $1-\delta/2$
	\begin{align*}
	\Normm{\nabla \bm F_2(\beta^*)}\leq 4\sqrt{\nu(\Sigma) \cdot n\cdot \Paren{ \log d + \log(2/\delta)}}\,.
	\end{align*}
\end{lemma}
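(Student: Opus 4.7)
The plan is straightforward: since $\bm y = \bm X\beta^* + \eta$ with $\eta$ deterministic, the residuals at $\beta^*$ are exactly the deterministic quantities $\eta_i$, so the gradient has the explicit form
\[
\nabla \bm F_2(\beta^*) \;=\; \sum_{i=1}^n f_2'(\eta_i)\,\bm X_i,
\]
where $f_2'(\eta_i)$ is a \emph{deterministic} scalar bounded in $[-2,2]$ by the definition of the Huber penalty \cref{eq:huber-penalty}. Thus the randomness enters only through $\bm X_i$, and this is a purely Gaussian concentration problem (no need for Hoeffding as in \cref{lem:subgradient-bound-regression}).

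Next I would look at one coordinate at a time. Fix $j \in [d]$. Because the rows $\bm X_1,\dots,\bm X_n$ are \iid $N(0,\Sigma)$, the entries $\bm X_{1j},\dots,\bm X_{nj}$ are \iid $N(0,\Sigma_{jj})$, and
\[
\bigl(\nabla \bm F_2(\beta^*)\bigr)_j \;=\; \sum_{i=1}^n f_2'(\eta_i)\,\bm X_{ij}
\]
is therefore a centered Gaussian with variance $\Sigma_{jj}\sum_{i=1}^n f_2'(\eta_i)^2 \leq 4n\,\Sigma_{jj} \leq 4n\,\nu(\Sigma)$.

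Then a standard Gaussian tail bound gives $\Pr\bigl(|(\nabla \bm F_2(\beta^*))_j| \geq t\bigr) \leq 2\exp\bigl(-t^2/(8n\,\nu(\Sigma))\bigr)$ for every $t \geq 0$. Taking a union bound over the $d$ coordinates and choosing $t = 4\sqrt{n\,\nu(\Sigma)(\log d + \log(2/\delta))}$ yields, with probability at least $1-\delta/2$,
\[
\Normm{\nabla \bm F_2(\beta^*)} \;\leq\; 4\sqrt{\nu(\Sigma)\,n\,\bigl(\log d + \log(2/\delta)\bigr)},
\]
which is exactly the claim. There is no serious obstacle here; the only subtlety compared with \cref{lem:subgradient-bound-regression} is that now the coefficients $f_2'(\eta_i)$ are deterministic while the design is random, so one must condition correctly (or simply observe that this coordinate is a linear combination of independent Gaussians) and bound the variance by $\nu(\Sigma)$ rather than using column norms.
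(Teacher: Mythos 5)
Your proof is correct and follows essentially the same route as the paper: in both cases one observes that, since $\eta$ is deterministic, $\nabla \bm F_2(\beta^*)=\transpose{z}\bm X$ with $z_i=f_2'(\eta_i)$ deterministic and $\abs{z_i}\le 2$, so each coordinate is a centered Gaussian with variance $\norm{z}^2\Sigma_{jj}\le 4n\,\nu(\Sigma)$, and the bound follows from standard Gaussian tails plus a union bound over the $d$ coordinates. The paper merely phrases this via the factorization $\transpose{\bm X}z=\norm{z}\,\Sigma^{1/2}\bm g$ with $\bm g$ standard Gaussian, which is the same computation.
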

\begin{proof}
	By definition of the Huber loss and choice of the Huber penalty
	\begin{align*}
	\nabla \bm F_2(\beta^*)= \nabla \Paren{\sum_{i=1}^n f_2(\bm y_i - \iprod{X_i,\beta^*})}= z^\top \bm X
	\end{align*}
	where $z$ is an $n$-dimensional vector whose entries $f_2'(\eta_i)$ are bounded by $2$ in absolute value.
	Since $\frac{1}{\norm{z}}\Sigma^{-1/2}\transpose{\bm X} z = \bm g \sim N(0,1)^n$,
	\[
	\Norm{z^\top \bm X} = \Norm{z}\cdot \Norm{\Sigma^{1/2} \bm g} \le 
	2\sqrt{n} \cdot \sqrt{\nu(\Sigma)\cdot \Paren{2\log d + 4\log(2/\delta)} }\,,
	\]
	where we used the union bound over all $j\in [d]$ and the standard tail bounds for Gaussian variables 
	$\Paren{\Sigma^{1/2} \bm g}_j$ whose variance is $\Sigma_{jj}$.
\end{proof}

\paragraph{Concentration of empirical covariance on approximately $k$-sparse vectors}
To prove well-spreadness and restricted local strong convexity in case of Gaussian design $\bm X$,
we will need the fact that for all approximately $k$-sparse vectors $u$, 
$\frac{1}{n}\Norm{\bm Xu}^2 \approx \Norm{\Sigma^{1/2}u}^2$ as long as 
$n\gtrsim \frac{\nu(\Sigma) k\log d}{\sigma_{\min}(\Sigma)}$.
Formally, we will use the following theorem:
\begin{theorem}\label{thm:approx-k-sparse-concentration}
	Let $\bm X$ be a $n$-by-$d$ random matrix with \iid rows $\bm X_1,\ldots \bm X_n \sim N(0, \Sigma)$, where $\Sigma$ is a positive definite matrix. Suppose that for some $K\ge 1$, $n\ge 1000 \cdot \frac{\nu(\Sigma)}{\sigma_{\min}(\Sigma)}\cdot K\log d$. Then 
	with probability at least $1-\exp\paren{-n/100}$, for all $u\in \R^d$ such that $\normo{u}\le\sqrt{K}\norm{u}$,
	\begin{equation}
	\frac{1}{2}\,\Norm{\Sigma^{1/2}u} \le \frac{1}{\sqrt{n}}\Norm{\bm Xu} \le 
	2\,\Norm{\Sigma^{1/2}u}\,.
	\end{equation}
\end{theorem}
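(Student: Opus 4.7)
The plan is to reduce the statement to a uniform two-sided deviation bound for a standard Gaussian matrix acting on a set of unit vectors, and then apply Gaussian process comparisons (Gordon and Chevet) together with Gaussian Lipschitz concentration. Let $\bm A := \bm X \Sigma^{-1/2}$, so $\bm A$ is an $n\times d$ matrix with \iid $N(0,1)$ entries and $\bm X u = \bm A \Sigma^{1/2} u$ for every $u\in\R^d$. Setting $w := \Sigma^{1/2} u/\norm{\Sigma^{1/2} u}$ (a unit vector in $\R^d$) and using homogeneity, the claimed inequalities are equivalent to
\[
\tfrac{1}{2} \le \tfrac{1}{\sqrt n}\,\norm{\bm A w} \le 2 \qquad \text{for all } w \in T_K,
\]
where
\[
T_K := \Set{\Sigma^{1/2} u / \norm{\Sigma^{1/2} u} \suchthat u\in \R^d\setminus\set{0},\ \normo{u} \le \sqrt K\, \norm{u}}.
\]

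The key estimate is a bound on the Gaussian width $w(T_K) := \E \sup_{w\in T_K}\iprod{g, w}$ where $g \sim N(0, \Id_d)$. For $w \in T_K$ arising from $u$, we have $\norm{w}^2 = u^\top \Sigma u \ge \sigma_{\min}(\Sigma)\,\norm{u}^2$, hence $\norm{u} \le 1/\sqrt{\sigma_{\min}(\Sigma)}$ and $\normo{u} \le \sqrt{K/\sigma_{\min}(\Sigma)}$. Dualizing through $\Sigma^{1/2}$,
\[
\iprod{g, w} = \iprod{\Sigma^{1/2} g, u} \le \normi{\Sigma^{1/2} g}\cdot \normo{u} \le \sqrt{K/\sigma_{\min}(\Sigma)}\cdot \normi{\Sigma^{1/2} g}.
\]
Each coordinate of $\Sigma^{1/2} g$ is Gaussian with variance $\Sigma_{jj} \le \nu(\Sigma)$, so a standard maximal inequality yields $\E \normi{\Sigma^{1/2} g} \lesssim \sqrt{\nu(\Sigma)\log d}$. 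Hence $w(T_K) \lesssim \sqrt{K\,\nu(\Sigma)\,\log d/\sigma_{\min}(\Sigma)}$, which under the hypothesis $n \ge 1000\cdot (\nu(\Sigma)/\sigma_{\min}(\Sigma))\cdot K\log d$ is at most $\sqrt n/10$ for a suitable choice of universal constant.

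To finish, Gordon's escape-through-a-mesh inequality gives $\E \inf_{w \in T_K} \norm{\bm A w} \ge \sqrt{n-1} - w(T_K)$, and Chevet's inequality applied with the Euclidean unit ball in $\R^n$ as the second set yields $\E \sup_{w \in T_K} \norm{\bm A w} \le \sqrt n + w(T_K)$. Both maps $\bm A \mapsto \inf_{w\in T_K} \norm{\bm A w}$ and $\bm A \mapsto \sup_{w\in T_K}\norm{\bm A w}$ are $1$-Lipschitz in the Frobenius norm of $\bm A$ (since each $w\in T_K$ is a unit vector), so Gaussian concentration for Lipschitz functions shows each deviates from its mean by more than $\sqrt n/10$ with probability at most $2\exp(-n/200)$. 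A union bound combined with the width estimate yields $\norm{\bm A w}/\sqrt n \in [1/2, 2]$ uniformly on $T_K$ with probability at least $1-\exp(-n/100)$, as required.

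The only genuinely problem-specific step is the width bound: one must push the approximate sparsity constraint on $u$ through the linear map $\Sigma^{1/2}$, which is precisely what forces both $\nu(\Sigma)$ and $\sigma_{\min}(\Sigma)$ to appear in the stated combination $\nu(\Sigma)/\sigma_{\min}(\Sigma)$ in the sample complexity. Everything else is black-box Gaussian process machinery, and the final high-probability conclusion drops out of Lipschitz concentration without further work.
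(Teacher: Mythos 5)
Your proposal is correct and takes essentially the same route as the paper: the same \Holder-based width bound for the $\Sigma^{1/2}$-image of approximately sparse vectors (via $\iprod{g,w}=\iprod{\Sigma^{1/2}g,u}\le \normo{u}\,\normi{\Sigma^{1/2}g}$ with $\normo{u}\le\sqrt{K/\sigma_{\min}(\Sigma)}$), followed by Gaussian comparison (the paper derives the Chevet-type upper bound by hand via Sudakov--Fernique and cites Raskutti--Wainwright--Yu, i.e.\ Gordon, for the lower bound) and Gaussian Lipschitz concentration. The only nit is constant bookkeeping: with deviation $\sqrt{n}/10$ the total failure probability is roughly $4\exp(-n/200)$, which exceeds $\exp(-n/100)$, so take the deviation to be, say, $0.3\sqrt{n}$ — the ample slack in your width estimate (at most $\sqrt{n}/10$ under the stated sample-size assumption) absorbs this without affecting the factors $1/2$ and $2$.
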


The first inequality of \cref{thm:approx-k-sparse-concentration} was shown in \cite{JMLR:v11:raskutti10a} (see also \cite{wainwright_2019}, section 7.3.3), and the second inequality can be proved in a very similar way. For completeness, we provide a proof of second inequality.

\begin{proof}[Proof of the second inequality of \cref{thm:approx-k-sparse-concentration}]
Since the inequality is scale invariant, it is enough to show it for $u\in \R^d$ such that $\Norm{\Sigma^{1/2} u} = 1$.
For $s>0$ denote 
\begin{align*}
\cU_s := \Set{u\in \R^d \suchthat \Norm{\Sigma^{1/2} u} = 1\,, \normo{u}\le s}
\mbox{\quad and\quad}  \cM_s\Paren{\bm X} := \sup_{u\in \cU_s} \frac{1}{\sqrt{n}}\Norm{\bm X u}\,.
\end{align*}
First, we bound the expectation of $\cM_s\Paren{\bm X}$:
\begin{lemma}
\[
\E \cM_s\Paren{\bm X} \le 1 + 2s\sqrt{\frac{\nu(\Sigma) \log d}{n}}\,.
\]
\end{lemma}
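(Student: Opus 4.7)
The plan is to reduce the bound to a Sudakov--Fernique Gaussian comparison applied to the supremum of a natural Gaussian process. Writing the Euclidean norm in its dual form and using that $\bm X$ has the same distribution as $\bm G\Sigma^{1/2}$ for a standard Gaussian matrix $\bm G\sim N(0,1)^{n\times d}$, I would rewrite
$$\cM_s(\bm X) \;=\; \sup_{u\in\cU_s,\;\|w\|=1} Z_{u,w}, \qquad Z_{u,w} \;:=\; \tfrac{1}{\sqrt n}\,\iprod{w,\bm G\Sigma^{1/2}u}.$$
Setting $v:=\Sigma^{1/2}u$ (so $\|v\|=1$ for every $u\in\cU_s$), a direct computation gives
$$\E(Z_{u,w}-Z_{u',w'})^2 \;=\; \tfrac{2}{n}\bigl(1-\iprod{w,w'}\cdot\iprod{v,v'}\bigr).$$

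I would then introduce the decoupled comparison process
$$Y_{u,w} \;:=\; \tfrac{1}{\sqrt n}\bigl(\iprod{g,w} + \iprod{h,\Sigma^{1/2}u}\bigr),$$
where $g\sim N(0,I_n)$ and $h\sim N(0,I_d)$ are independent. A parallel calculation yields $\E(Y_{u,w}-Y_{u',w'})^2 = \tfrac{2}{n}\bigl((1-\iprod{w,w'}) + (1-\iprod{v,v'})\bigr)$, and subtracting gives the clean factorization
$$\E(Y_{u,w}-Y_{u',w'})^2 - \E(Z_{u,w}-Z_{u',w'})^2 \;=\; \tfrac{2}{n}\,(1-\iprod{w,w'})(1-\iprod{v,v'}) \;\ge\; 0,$$
since $\iprod{w,w'},\iprod{v,v'}\in[-1,1]$. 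This is exactly the Sudakov--Fernique hypothesis on the centered Gaussian processes $Z$ and $Y$, so $\E\cM_s(\bm X) \le \E\sup_{u,w} Y_{u,w}$.

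The supremum of $Y$ decouples over $w$ and $u$: $\E\sup_{\|w\|=1}\iprod{g,w} = \E\|g\|\le\sqrt n$ by Jensen's inequality, while by \Holder's inequality and the constraint $\|u\|_1\le s$,
$$\E\sup_{u\in\cU_s}\iprod{h,\Sigma^{1/2}u} \;=\; \E\sup_{u\in\cU_s}\iprod{\Sigma^{1/2}h,u} \;\le\; s\cdot\E\bignorm{\Sigma^{1/2}h}_{\infty}.$$
Each coordinate $(\Sigma^{1/2}h)_j$ is $N(0,\Sigma_{jj})$ with $\Sigma_{jj}\le\nu(\Sigma)$, so the standard Gaussian maximal inequality gives $\E\|\Sigma^{1/2}h\|_\infty\le\sqrt{2\nu(\Sigma)\log(2d)}\le 2\sqrt{\nu(\Sigma)\log d}$ (using $d\ge 2$). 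Combining the two bounds yields $\E\cM_s(\bm X)\le 1+2s\sqrt{\nu(\Sigma)\log d/n}$ as claimed. The only non-routine step is the algebraic verification of the Sudakov--Fernique increment condition, which reduces to the single-line factorization above; everything else is routine Gaussian process estimation.
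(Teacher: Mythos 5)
Your proof is correct and follows essentially the same route as the paper: a Sudakov--Fernique comparison of the process $\tfrac{1}{\sqrt n}\iprod{w,\bm G\Sigma^{1/2}u}$ with the decoupled process $\tfrac{1}{\sqrt n}\bigl(\iprod{g,w}+\iprod{h,\Sigma^{1/2}u}\bigr)$, followed by bounding $\E\norm{g}\le\sqrt n$ and $\E\sup_u\iprod{\Sigma^{1/2}h,u}\le s\cdot\E\normi{\Sigma^{1/2}h}\le 2s\sqrt{\nu(\Sigma)\log d}$. Your increment verification via the factorization $\tfrac{2}{n}(1-\iprod{w,w'})(1-\iprod{v,v'})\ge 0$ is just a rewriting of the paper's rank-one Frobenius-norm identity for unit vectors, so the two arguments coincide.
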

\begin{proof}
Consider Gaussian process $\bm W_{u,v} = \transpose{v} \bm X u$ for $(u,v)\in \cU_s\times S^{n-1}$, where $S^{n-1}$ is a unit sphere in $\R^n$. Denote $\cP = \cU_s\times S^{n-1}$.
Our goal is to bound $\frac{1}{\sqrt{n}}\E \sup\limits_{\paren{u,v}\in\cP} \bm W_{u,v}$.

Denote $\bm G = \bm X \Sigma^{-1/2}$. For all $(u,v), (\tilde{u}, \tilde{v})\in \cP$,
\begin{align*}
\E\Paren{\bm W_{u,v}-\bm W_{\tilde{u},\tilde{v}}}^2 
&= \E\iprod{\transpose{\bm X}, u\transpose{v} -\tilde{u}\transpose{\tilde{v}}}^2
\\&= \E\iprod{\transpose{\bm G}, \Sigma^{1/2}u\transpose{v} -\Sigma^{1/2}\tilde{u}\transpose{\tilde{v}}}^2
\\&=\Normf{\Sigma^{1/2}u\transpose{v} -\Sigma^{1/2}\tilde{u}\transpose{\tilde{v}}}^2\,.
\end{align*}

Now consider another Gaussian process $\bm Z_{u,v} = \transpose{\bm g} \Sigma^{1/2}u + \transpose{\bm h} v$, where $\bm g\sim N(0, \Id_d)$ and $\bm h\sim N(0,\Id_n)$.  For all $(v,u), (\tilde{v}, \tilde{u})\in\cP$,
\begin{align*}
\E\Paren{\bm Z_{u,v}-\bm Z_{\tilde{u},\tilde{v}}}^2 
&=  \E \iprod{\bm g, \Sigma^{1/2}\Paren{u-\tilde{u}}}^2 + \E \iprod{\bm h, {v-\tilde{v}}}^2
\\&= \Norm{\Sigma^{1/2}u-\Sigma^{1/2}\tilde{u}}^2 + \Norm{v-\tilde{v}}^2\,.
\end{align*}

Note that for every quadruple of unit vectors $x, \tilde{x}\in \R^d\,,y,\tilde{y}\in \R^n$,
\begin{align*}
\normf{x\transpose{y} - \tilde{x}\transpose{\tilde{y}}}^2 
=& 
\normf{\paren{x-\tilde{x}}\transpose{y} + \tilde{x}\paren{\transpose{y}- \transpose{\tilde{y}}}}^2 
\\=&
 \snorm{y}\snorm{x-\tilde{x}}+\snorm{\tilde{x}}\snorm{y-\tilde{y}} + 2\Tr{y\transpose{\paren{x-\tilde{x}}} \tilde{x}\paren{\transpose{y}- \transpose{\tilde{y}}}}
\\=&
\snorm{x-\tilde{x}} + \snorm{y-\tilde{y}} + 2\Paren{\iprod{x,\tilde{x}}-\snorm{\tilde{x}}}\cdot\Paren{\snorm{y}-\iprod{y,\tilde{y}}}
\\\le&
\snorm{x-\tilde{x}} + \snorm{y-\tilde{y}}\,.
\end{align*}

Hence for all $(u,v), (\tilde{u}, \tilde{v})\in \cP$, 
$\E\Paren{\bm W_{u,v}-\bm W_{\tilde{u},\tilde{v}}}^2 \le \E\Paren{\bm Z_{u,v}-\bm Z_{\tilde{u},\tilde{v}}}^2$, and by Sudakov--Fernique theorem \cref{fact:sudakov-fernique},
\[
\E \sup\limits_{\paren{u,v}\in\cP}  \bm W_{u,v} \le \E \sup\limits_{\paren{u,v}\in\cP}  \bm Z_{u,v}\,.
\]

Therefore, it is enough to bound $\E\sup\limits_{u\in \cU_s}\transpose{\bm g} \Sigma^{1/2}u + \E\sup\limits_{\norm{v}=1} \transpose{\bm h} v$. The second term is just an expectation of $\chi$ distributed variable, and can be bounded using Jensen's inequality:
\[
\E\sup\limits_{\norm{v}=1} \transpose{\bm h} v = \E \norm{\bm h} \le \sqrt{\E\norm{\bm h}^2} \le \sqrt{n}\,.
\]

The first term can be bounded as follows:
\[
\E\sup\limits_{u\in \cU_s}\transpose{\bm g} \Sigma^{1/2}u \le \E\normo{u}\cdot\norm{\Sigma^{1/2}\bm g}_{\max}
\le s\E\norm{\Sigma^{1/2}\bm g}_{\max} \le 2s\sqrt{\nu(\Sigma)\log d}\,,
\]
where we used \cref{fact:max-norm-subgaussian-entries} to bound the max norm of a vector $\Sigma^{1/2}\bm g$ whose entries are $\nu(\Sigma)$-subgaussian.
Dividing by $\sqrt{n}$, we get the desired bound.
\end{proof}

 Now, we bound the deviation of $\cM_s\Paren{\bm X}$:
 \begin{lemma} For all $t\ge 0$,
 	\[
 	\Pr\Brac{\Abs{\cM_s\Paren{\bm X} - \E\cM_s\Paren{\bm X}}\ge t}\le 2\exp\Brac{-nt^2/2}\,.
 	\]
 \end{lemma}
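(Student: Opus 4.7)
The plan is to obtain the deviation bound by viewing $\cM_s(\bm X)$ as a Lipschitz function of a standard Gaussian vector and applying the Borell--TIS (Gaussian concentration) inequality. This is the canonical route for such suprema and it avoids any union-bound/$\eps$-net argument.

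First I would reparametrize the randomness. Since $\bm X$ has rows drawn \iid from $N(0,\Sigma)$, we can write $\bm X=\bm G\Sigma^{1/2}$ where $\bm G\in\R^{n\times d}$ has \iid $N(0,1)$ entries. Then for every $u\in\cU_s$, setting $w=\Sigma^{1/2}u$, we have $\Norm{w}=\Norm{\Sigma^{1/2}u}=1$, so
\[
\cM_s(\bm X)\;=\;\sup_{u\in\cU_s}\frac{1}{\sqrt{n}}\Norm{\bm G\Sigma^{1/2}u}\;=\;\sup_{w\in \Sigma^{1/2}\cU_s}\frac{1}{\sqrt{n}}\Norm{\bm G w},
\]
a supremum of linear functionals of $\bm G$ indexed by unit vectors (in a subset of the sphere that depends on $s$ and $\Sigma$, but which plays no further role here).

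Next I would check the Lipschitz constant of the map $\bm G\mapsto \cM_s(\bm G\Sigma^{1/2})$ with respect to the Frobenius norm on $\R^{n\times d}$. For any two matrices $\bm G_1,\bm G_2$ and any fixed $u\in\cU_s$, the reverse triangle inequality gives
\[
\Bigl|\tfrac{1}{\sqrt n}\Norm{\bm G_1\Sigma^{1/2}u}-\tfrac{1}{\sqrt n}\Norm{\bm G_2\Sigma^{1/2}u}\Bigr|\;\le\;\tfrac{1}{\sqrt n}\Norm{(\bm G_1-\bm G_2)\Sigma^{1/2}u}\;\le\;\tfrac{1}{\sqrt n}\Normf{\bm G_1-\bm G_2}\,\Norm{\Sigma^{1/2}u},
\]
and $\Norm{\Sigma^{1/2}u}=1$. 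Taking supremum over $u\in\cU_s$ on each side (and using that the supremum of $L$-Lipschitz functions is itself $L$-Lipschitz) shows that $\cM_s$, viewed as a function of $\bm G$, is $\tfrac{1}{\sqrt n}$-Lipschitz in the Frobenius norm.

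Finally I would invoke the Borell--TIS inequality: if $f:\R^N\to\R$ is $L$-Lipschitz and $\bm g\sim N(0,\Id_N)$, then $\Pr\bigl[\abs{f(\bm g)-\E f(\bm g)}\ge t\bigr]\le 2\exp\bigl(-t^2/(2L^2)\bigr)$. Applied here with $L=1/\sqrt{n}$ and $N=nd$, this yields
\[
\Pr\Bigl[\Abs{\cM_s(\bm X)-\E\cM_s(\bm X)}\ge t\Bigr]\;\le\;2\exp\!\Bigl(-\tfrac{nt^2}{2}\Bigr),
\]
which is exactly the claimed bound. There is no real obstacle in this argument beyond correctly accounting for the rescaling by $\Sigma^{1/2}$, which is absorbed into the condition $\Norm{\Sigma^{1/2}u}=1$ defining $\cU_s$.
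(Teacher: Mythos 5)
Your proof is correct and follows essentially the same route as the paper: the paper also writes $\bm X=\bm G\Sigma^{1/2}$, shows the supremum is Lipschitz in $\bm G$ with respect to the Frobenius norm via the same reverse-triangle-inequality computation (using $\Norm{\Sigma^{1/2}u}=1$), and applies Gaussian concentration for Lipschitz functions. The only cosmetic difference is that the paper works with the rescaled function $\sqrt{n}\,\cM_s$, which is $1$-Lipschitz, and then sets $\tau=t\sqrt{n}$, whereas you keep $\cM_s$ itself with Lipschitz constant $1/\sqrt{n}$.
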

\begin{proof}
For $A\in\R^{n\times d}$ denote $\cF_s(A) = \sqrt{n}\cdot \cM_s( A\Sigma^{1/2}) = \sup_{u\in \cU_s} \Norm{A\Sigma^{1/2}u}$. 
Note that for all $A, B\in \R^{n\times d}$,
\begin{align*}
\cF_s(A)  - \cF_s(B) 
&\le \sup_{u\in \cU_s}   \Paren{\Norm{A\Sigma^{1/2}u} -  \Norm{B\Sigma^{1/2}u}}
\\&\le \sup_{u\in \cU_s}   \Norm{\Paren{A-B}\Sigma^{1/2}u}
\\&\le \Norm{A-B} \le \Normf{A-B}\,.
\end{align*}
Hence $\cF_s$ is  $1$-Lipschitz, and by \cref{fact:lipschitz-gaussian}, for all $\tau\ge 0$,
\[
\Pr\Brac{\Abs{\cF_s\Paren{\bm G} - \E\cF_s\Paren{\bm G}}\ge \tau}\le 2\exp\Brac{-\tau^2/2}\,,
\]
where $\bm G = \bm X \Sigma^{-1/2}$ is a matrix with \iid standard Gaussian entries. Taking $\tau = t\sqrt{n}$, we get the desired bound.
\end{proof}

Taking $t=0.2$, we conclude that with probability at least $1-2\exp\paren{-0.02n}$, 
\[
\cM_s\Paren{\bm X} \le 1.2 +  2s\sqrt{\frac{\nu(\Sigma) \log d}{n}}\,.
\]
For $s = \sqrt{K/\sigma_{\min}(\Sigma)}$ this bound implies that for all $u$ such that $\Norm{\Sigma^{1/2} u} = 1$ and $\normo{u}\le\sqrt{K}\norm{u}$, with probability at least $1-2\exp\paren{-0.02n}$,
\[
\frac{1}{\sqrt{n}}\Norm{\bm Xu} \le 1.2 + 2\sqrt{\frac{\nu(\Sigma) \,K\log d}{\sigma_{\min}(\Sigma) \cdot n}}\le 1.3\,,
\]
and we get the desired bound.
\end{proof}

\paragraph{Well-spreadness of Gaussian matrices}
If $n \gtrsim \frac{\nu(\Sigma)}{\sigma_{\min}(\Sigma)}\cdot k\log d$, then an $n\times d$ random matrix $\bm X$ with \iid rows $\bm X_1,\ldots, \bm X_n\sim N(0,\Sigma)$  satisfies the RE-property with parameter $\sigma_{\min}(\Sigma)/ 4$ over all sets of size $k$ where $\sigma_{\min}(\Sigma)$ is the smallest eigenvalue of $\Sigma$ (it is a consequence of \cref{thm:approx-k-sparse-concentration}).
Also, norms of columns of $\bm X$ are bounded by $O\Paren{\sqrt{\nu(\Sigma) n}}$ with high probability.
Hence, $\bm X$ satisfies Assumption 1 and 2 of \cref{thm:oblivious-sparse-regression-restatement}, with high probability.

In the next lemma, we show that it also satisfies the last assumption, namely the well-spreadness assumption:
\begin{lemma}\label{lem:well-spreadness-gaussian}
	Let $\bm X$ be a $n$-by-$d$ random matrix with \iid rows $\bm X_1,\ldots \bm X_n \sim N(0, \Sigma)$, where $\Sigma$ is a positive definite matrix. Suppose that for some $K\ge 1$, $n\ge 10^6 \cdot \frac{\nu(\Sigma)}{\sigma_{\min}(\Sigma)}\cdot K\log d$. Then 
with probability at least $1-\exp\paren{-n/1000}$, for all $u\in \R^d$ such that $\normo{u}\le\sqrt{K}\norm{u}$ and for all sets $S\subseteq [n]$ of size $\lceil 0.999n \rceil$, 
\begin{equation}
\Norm{\bm X_Su} \ge\frac{1}{2}\Norm{\bm Xu} \,.
\end{equation}
\end{lemma}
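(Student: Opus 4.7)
The plan is to reduce well-spreadness to a bound on the contribution of any small row subset. Writing $T := [n]\setminus S$, so that $\card{T}\le \lfloor 0.001 n\rfloor$, Pythagoras gives $\snorm{\bm X u} = \snorm{\bm X_S u}+\snorm{\bm X_T u}$, so it suffices to show that for all approximately $k$-sparse $u$ (meaning $\normo{u}\le \sqrt{K}\norm{u}$) and all $T$ of size at most $0.001 n$, we have $\Norm{\bm X_T u}\le \tfrac{\sqrt{3}}{2}\Norm{\bm X u}$. Combined with the lower bound $\tfrac{1}{\sqrt n}\Norm{\bm X u}\ge \tfrac{1}{2}\Norm{\Sigma^{1/2} u}$ from \cref{thm:approx-k-sparse-concentration}, it is enough to prove the uniform upper bound $\tfrac{1}{\sqrt n}\Norm{\bm X_T u}\le \tfrac{\sqrt 3}{8}\Norm{\Sigma^{1/2} u}$.

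I will prove this by adapting the proof of \cref{thm:approx-k-sparse-concentration} to a sub-matrix $\bm X_T$. Fix $T$ with $m:=\card{T}\le 0.001 n$ and consider the rescaled quantity
\[
\tilde{\cM}_s^{T}(\bm X) := \sup_{u\in\cU_s}\frac{1}{\sqrt n}\Norm{\bm X_T u}\,, \qquad \cU_s := \Set{u\in \R^d\suchthat \Norm{\Sigma^{1/2}u}=1\,,\;\normo{u}\le s}\,.
\]
The same Sudakov--Fernique comparison used in \cref{thm:approx-k-sparse-concentration} (comparing the Gaussian process $\transpose{v}\bm X_T u$ on $\cU_s\times S^{m-1}$ against the decoupled process $\transpose{\bm g}\Sigma^{1/2} u+\transpose{\bm h} v$) yields
\[
\E\,\tilde{\cM}_s^{T}(\bm X)\le \sqrt{m/n}+2s\sqrt{\nu(\Sigma)\log d/n}\,.
\]
Taking $s=\sqrt{K/\sigma_{\min}(\Sigma)}$ and using the hypothesis $n\ge 10^6\,\nu(\Sigma)K\log d/\sigma_{\min}(\Sigma)$, both summands are at most $1/30$, giving $\E\,\tilde{\cM}_s^{T}(\bm X)\le 1/15$.

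Next, since the map $A\mapsto \sup_{u\in\cU_s}\Norm{A_T \Sigma^{1/2} u}$ is $1$-Lipschitz in Frobenius norm (submatrix selection does not increase operator norm), the map $\bm G\mapsto \tilde{\cM}_s^{T}(\bm G\Sigma^{1/2})$ is $1/\sqrt n$-Lipschitz, so by Gaussian concentration (\cref{fact:lipschitz-gaussian}),
\[
\Pr\Brac{\tilde{\cM}_s^{T}(\bm X)\ge 1/15+\tau}\le 2\exp\paren{-n\tau^2/2}\,.
\]
The number of subsets $T\subseteq[n]$ of size at most $\lfloor 0.001 n\rfloor$ is bounded by $\binom{n}{\le 0.001 n}\le \exp(0.008 n)$. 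Choosing $\tau=1/5$ and taking a union bound over all such $T$ gives failure probability at most $2\exp(0.008 n-0.02 n)\le \exp(-n/1000)$, and on the good event $\tilde{\cM}_s^{T}(\bm X)\le 1/3$ uniformly, which is well below the required $\sqrt 3/8$ threshold after rescaling. Combining with the lower bound from \cref{thm:approx-k-sparse-concentration} (which also holds with probability $\ge 1-\exp(-n/100)$) finishes the proof.

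The main obstacle is that the union bound over $T$ contains exponentially many terms, so naive concentration would not suffice. The key observation making the argument work is that the Lipschitz constant of the sup-functional, after the $1/\sqrt n$ normalisation, is $1/\sqrt n$, so Gaussian concentration provides tails with parameter $n/2$, which comfortably beats the $\exp(O(0.001 n))$ cardinality bound once we take $|T|$ small enough (here $0.001 n$; any small enough constant fraction works, and the specific fraction only affects the final constant $\tfrac{1}{2}$ in the well-spreadness conclusion).
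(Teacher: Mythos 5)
Your proof is correct and follows essentially the same route as the paper's: split off the complement $T=[n]\setminus S$, use Pythagoras, bound $\sup_u \tfrac{1}{\sqrt n}\Norm{\bm X_T u}$ uniformly over approximately sparse $u$ for each fixed small $T$ via the Gaussian-comparison plus Lipschitz-concentration machinery behind \cref{thm:approx-k-sparse-concentration}, union bound over the $\exp(O(n))$ choices of $T$, and finish with the full-matrix lower bound $\tfrac{1}{\sqrt n}\Norm{\bm Xu}\ge\tfrac12\Norm{\Sigma^{1/2}u}$. If anything, you make explicit the one point the paper leaves terse, namely normalizing the submatrix supremum by $\sqrt n$ rather than $\sqrt{\card{T}}$ so that the concentration is at scale $\exp(-\Omega(n))$ and survives the union bound. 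One harmless numerical slip: the sufficient threshold is $\tfrac{\sqrt3}{4}\Norm{\Sigma^{1/2}u}$ (from requiring $\Norm{\bm X_Tu}\le\tfrac{\sqrt3}{2}\Norm{\bm Xu}$ together with the factor-$\tfrac12$ lower bound), not $\tfrac{\sqrt3}{8}$, and your final bound $1/15+1/5=4/15\le 1/3$ is in fact \emph{not} below $\sqrt3/8\approx 0.217$ as you assert, but it is comfortably below $\sqrt3/4\approx 0.433$, so the argument still closes.
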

\begin{proof}
	For a set $M\subseteq[n]$ of size at most $n/1000$ independent of $\bm X$, \cref{thm:approx-k-sparse-concentration} implies that $\Norm{\bm X_Mu} \le  0.1\sqrt{n}\cdot \Norm{\Sigma^{1/2}u}$ and $\Norm{\bm Xu} \ge  0.5\sqrt{n}\cdot \Norm{\Sigma^{1/2}u}$ with probability at least $1-2\exp\paren{-n/100}$. Using a union bound over all sets $M$ of size $n -\lceil 0.999n \rceil$, with probability 
	\[
	 1-2\exp\Brac{-n/100 + n\log(1000e)/1000} \ge 1 - \exp\paren{-n/1000}\,,
	\]
	we get 
	\[
	\Norm{\bm X_Mu}^2 \le 0.1 \Norm{\bm Xu}^2\,.
	\]
	Since for $S = [n]\setminus M$, $\Norm{\bm Xu}^2 = \Norm{\bm X_Mu}^2 + \Norm{\bm X_Su}^2$, we get the desired bound.
\end{proof}

Now we can prove restricted strong convexity.

\paragraph{Restricted local strong convexity of Huber loss for Gaussian design}

\begin{lemma}\label{lem:local-strong-convexity-sparse-regression-gaussian}
	\newcommand{\placeholder}{2^i}
	Consider the settings of \cref{thm:oblivious-sparse-regression-gaussian-design}. Let $0<\delta<1, R> 0$. Define 
	\begin{align*}
	\bm \cB_R&:=\Set{u\in \R^{d}\suchthat \frac{1}{\sqrt{n}}\Norm{\bm X\Paren{u}}= R}\,.
	\end{align*}
		Suppose that $R\le \frac{1}{200}$.
		Then with probability at least $1-3\exp\paren{-\alpha n / 1000}$, for all $u\in \bm\cB_R \cap \cS_4(\Omega)$,
	\begin{align*}
	\bm F_2(\beta^*+u)\geq \bm F_2(\beta^*)+\iprod{\nabla\bm  F_2(\beta^*), u}
	+\frac{\alpha n}{200} \cdot \frac{1}{n}\Norm{\bm X u}^2\,.
	\end{align*}
\end{lemma}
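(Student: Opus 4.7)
The plan is to mirror the proof of \cref{lem:local-strong-convexity-sparse-regression}, with the Bernstein-type concentration that was used there over the random noise $\bm\eta$ now applied to the randomness of the Gaussian design: since $\eta$ is deterministic in the current setting, the set $I := \{i : |\eta_i| \le 1\}$ of size $\ge \alpha n$ is also deterministic, and the rows $\bm X_i$ for $i\in I$ are i.i.d.\ $N(0,\Sigma)$. First I would invoke the second-order lower bound for the Huber penalty (\cref{lem:second-order-behavior-huber}) to reduce the claim to showing
\[
\sum_{i\in I}\iprod{\bm X_i,u}^2\ind{|\iprod{\bm X_i,u}|\le 1}\;\ge\;\tfrac{\alpha n}{100}\cdot\tfrac{1}{n}\Norm{\bm X u}^2
\]
uniformly for $u\in\bm\cB_R\cap\cS_4(\Omega)$. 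Conditioning on the event that \cref{thm:approx-k-sparse-concentration} holds both for $\bm X$ and its submatrix $\bm X_I$ (legitimate since $|I|\ge\alpha n\gtrsim\nu(\Sigma)k\log d/\sigma_{\min}(\Sigma)$ under the sample-complexity hypothesis, at cost $\le 2\exp(-\alpha n/100)$) places every such $u$ at $\sigma_u:=\Norm{\Sigma^{1/2}u}\in[R/2,\,2R]\le 1/100$.

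For such a fixed $u$, the variables $\iprod{\bm X_i,u}\sim N(0,\sigma_u^2)$ are i.i.d.\ over $i\in I$, and a Gaussian-tail computation (using $1/(2\sigma_u)\ge 50$) gives $\E[\iprod{\bm X_i,u}^2\ind{|\iprod{\bm X_i,u}|\le 1/2}]\ge (1-e^{-50})\sigma_u^2$, so the truncated sum has expectation at least $0.99\,|I|\sigma_u^2$. Since every summand lies in $[0,1/4]$, Bernstein's inequality (\cref{fact:bernstein}) yields
\[
\Pr\Brac{\sum_{i\in I}\iprod{\bm X_i,u}^2\ind{|\iprod{\bm X_i,u}|\le 1/2}\le \tfrac{|I|\sigma_u^2}{2}}\;\le\; 2\exp\paren{-\Omega(|I|\sigma_u^2)}.
\]
To make this uniform I would construct a deterministic $(\epsilon R)$-net $\cN$, in the $\Norm{\Sigma^{1/2}\cdot}$-metric, of the shell $\cS_4(\Omega)\cap\{u:R/2\le\sigma_u\le 2R\}$. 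Sudakov minoration applied to the Gaussian process $u\mapsto\iprod{\Sigma^{1/2}\bm g,u}$ (mirroring \cref{lem:eps-net-l1-norm} but with $\Sigma^{1/2}$ in place of $X/\sqrt{n}$) bounds $\log|\cN|\lesssim \nu(\Sigma)k\log d/(\sigma_{\min}(\Sigma)\epsilon^2)$, and the sample-complexity hypothesis $n\gtrsim\nu(\Sigma)k\log d/(\sigma_{\min}(\Sigma)\alpha^2)$ ensures $|I|R^2\gg\log|\cN|$, so a union bound delivers the Bernstein estimate simultaneously at every net point, with failure probability $\exp(-\alpha n/1000)$.

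To extend the bound from $\cN$ to all of $\bm\cB_R\cap\cS_4(\Omega)$, I would reuse the indicator manipulation from the proof of \cref{lem:local-strong-convexity-sparse-regression}: picking a nearby $u'\in\cN$ and writing $u=u'+(u-u')$ gives
\[
\sum_{i\in I}\iprod{\bm X_i,u'}^2\ind{|\iprod{\bm X_i,u'}|\le 1/2}\;\le\; 2\sum_{i\in I}\iprod{\bm X_i,u}^2\ind{|\iprod{\bm X_i,u}|\le 1}\,+\,3\sum_{i\in I}\iprod{\bm X_i,u-u'}^2.
\]
The main obstacle is controlling the residual $\sum_{i\in I}\iprod{\bm X_i,u-u'}^2$: since the difference of two vectors in the cone $\cS_4(\Omega)$ need not itself lie in $\cS_4(\Omega)$, \cref{thm:approx-k-sparse-concentration} cannot be applied to $u-u'$ directly. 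My fix is to decompose $u-u'=(u-u')_\Omega+(u-u')_{\Omega^\perp}$ and handle the pieces separately: the $\Omega$-part is $k$-sparse, so \cref{thm:approx-k-sparse-concentration} applies; the $\Omega^\perp$-part has its $\ell_1$-norm controlled by $O(R\sqrt{k/\sigma_{\min}(\Sigma)})$ from the cone condition on $u$ and $u'$, and a Gaussian-width estimate on the $\ell_1$-ball (equivalently, building the net in the $\bm X$-induced metric as in \cref{lem:eps-net-l1-norm}) bounds its contribution by $O(|I|\epsilon^2 R^2)$. For a sufficiently small absolute constant $\epsilon$ this makes the residual dominated by the Bernstein lower bound $\Omega(|I|R^2)$, yielding the claimed $\tfrac{\alpha n}{200}$-strong convexity.
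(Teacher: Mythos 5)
Your reduction (via \cref{lem:second-order-behavior-huber}, passing to the deterministic index set $I=\Set{i \suchthat \abs{\eta_i}\le 1}$, and conditioning on \cref{thm:approx-k-sparse-concentration} for both $\bm X$ and $\bm X_I$) matches the paper, but the core of your argument --- a per-point Bernstein bound over the Gaussian design together with an $\epsilon R$-net of the shell --- does not prove the lemma as stated. Bernstein at a fixed $u$ with $\sigma_u=\Norm{\Sigma^{1/2}u}\in[R/2,2R]$ gives failure probability $\exp\paren{-\Omega(\abs{I}\sigma_u^2)}=\exp\paren{-\Omega(\alpha n R^2)}$, so both your union-bound requirement and your final probability degrade as $R$ shrinks. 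The lemma, however, imposes \emph{no lower bound} on $R$: it is claimed for every $0<R\le 1/200$ with probability $1-3\exp\paren{-\alpha n/1000}$. Your assertion that the sample-complexity hypothesis $n\gtrsim \nu(\Sigma)k\log d/(\sigma_{\min}(\Sigma)\alpha^2)$ ensures $\abs{I}R^2\gg\log\abs{\cN}$ is unjustified: that hypothesis contains no information about $R$ (take $R=n^{-10}$ and the inequality fails outright); it only becomes true once one additionally assumes $R\gtrsim\sqrt{\nu(\Sigma)k\log d/(\sigma_{\min}(\Sigma)\,\alpha n)}$. Moreover, even at the largest admissible $R=1/200$, the union bound yields failure probability $\exp\paren{-\Omega(\alpha n R^2)}=\exp\paren{-\Omega(\alpha n/40000)}$, which is weaker than the stated $\exp\paren{-\alpha n/1000}$. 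So your route proves at best a variant of the lemma with an extra lower-bound hypothesis on $R$ and an $R$-dependent probability (which would still suffice for \cref{thm:oblivious-sparse-regression-gaussian-design}, but is not the statement being proved).

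The paper's proof avoids any fresh concentration over the design, which is exactly why no lower bound on $R$ is needed: after the same reduction it applies \cref{thm:approx-k-sparse-concentration} to $\bm X_A$ (with $A=I$) to get $\frac{\alpha n}{4}\Norm{\Sigma^{1/2}u}^2\le\Norm{\bm X_A u}^2\le 4\alpha n\Norm{\Sigma^{1/2}u}^2$ uniformly over the cone, and handles the truncation $\ind{\abs{\iprod{\bm X_i,u}}\le 1}$ deterministically: at most $\Norm{\bm X_A u}^2\le 16\alpha R^2 n\le \alpha n/1000$ coordinates of $\bm X_A u$ can exceed $1$, and the well-spreadness of $\bm X_A$ (\cref{lem:well-spreadness-gaussian}) guarantees that deleting that many coordinates retains a quarter of $\Norm{\bm X_A u}^2$. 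Both ingredients are uniform over the cone at every scale with failure probability $\exp\paren{-\Omega(\alpha n)}$, independent of $R$. (A secondary, fixable point in your write-up: to control the residual $\sum_{i\in I}\iprod{\bm X_i,u-u'}^2$ a Gaussian-width bound for the $\ell_1$-ball alone is not enough --- it loses a factor of order $k$; you need the joint constraint $\normo{u-u'}\lesssim R\sqrt{k/\sigma_{\min}(\Sigma)}$ \emph{and} $\Norm{\Sigma^{1/2}(u-u')}\le\epsilon R$, i.e.\ effectively a rerun of the $\cM_s$ argument behind \cref{thm:approx-k-sparse-concentration} with cone parameter inflated to $O(\sqrt{k}/\epsilon)$. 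Unlike the small-$R$ issue, this does not break the argument, but it should be spelled out.)
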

\begin{proof}
	Let
	$u\in \cB_R \cap \cS_4(\Omega)$, where $\Omega$ is the support of $\beta^*$. By \cref{lem:second-order-behavior-huber}, 
    \[	
	\bm F_2(\beta^*+u)-\bm F_2(\beta^*)-\iprod{\nabla \bm F_2(\beta^*),u}
	\geq 
	\frac{1}{2}
	\underset{i\in [n]}{\sum}\iprod{\bm X_i,u}^2\ind{\Abs{\eta_i}\leq 1}\cdot \ind{\Abs{\iprod{\bm X_i, u}}\leq 1}\,.
	\]
	Denote $A=\Set{i\in [n] \suchthat \abs{\eta_i} \le 1}$. Matrix $\bm X_A$ is an $\alpha n \times d$ random matrix with \iid rows $\bm X_j \sim N(0,\Sigma)$. By \cref{thm:approx-k-sparse-concentration}, with probability $1-2\exp\paren{-\alpha n / 100}$, 
	\begin{align*}
	16\alpha R^2n &= 16\alpha \Norm{\bm X u}^2 
	\\&\ge 4\alpha n \Norm{\Sigma^{1/2} u}^2 
	\\&\ge  \Norm{\bm X_Au}^2 \ge \frac{\alpha n}{4}\Norm{\Sigma^{1/2} u}^2 
	\\&\ge \frac{\alpha}{16} \Norm{\bm X u}^2 
	\\&= \frac{\alpha}{16}R^2n\,.
	\end{align*}
	
	By \cref{lem:well-spreadness-gaussian}, with probability $1-\exp\paren{-\alpha n / 1000}$, $\bm X_A$ satisfies well-spread property for sets of size $\alpha n /1000 $ and for all $u\in \cS_4(\cK)$. Since number of entries of $\bm X_A u$ which are larger than $1$ is at most $16 \alpha R^2n \le \alpha n /1000$, we get
	\begin{align*}
	\underset{i\in [n]}{\sum}\iprod{\bm X_i,u}^2\ind{\Abs{ \eta_i}\leq 1}\cdot \ind{\Abs{\iprod{\bm X_i, u}}\leq 1} 
	&= \underset{i\in A}{\sum}\iprod{\bm X_i,u}^2 \ind{\Abs{\iprod{\bm X_i, u}}\leq 1} 
	\\&\ge\frac{1}{4}\Norm{\bm X_Au}^2 \ge \frac{\alpha}{64} R^2n\,.
	\end{align*}
	
 Hence with probability at least $1-3\exp\paren{-\alpha n / 1000}$ we get the desired bound.
\end{proof}

\paragraph{Putting everything together}
Let's check that the conditions of \cref{thm:meta-theorem} are satisfied for $\Omega = \overline{\Omega} = \supp(\beta^*)$ and $\cE(u) = \frac{1}{\sqrt{n}}\Norm{\bm Xu}$.
Decomposability is obvious.
As a consequence of \cref{thm:approx-k-sparse-concentration} $\bm X$ satisfies the RE-property with $\lambda \ge \sigma_{\min}(\Sigma)/4$ with probability at least $1-\exp\paren{-n/100}$, so contraction is satisfied with $s = 8\sqrt{k/\sigma_{\min}(\Sigma)}$. By \cref{lem:subgradient-bound-regression-gaussian}, gradient is bounded by $15\sqrt{\nu(\Sigma) \cdot n\cdot \Paren{ \log d}}$ with probability $1-d^{-10}/2$. By \cref{lem:local-strong-convexity-sparse-regression-gaussian}, with probability at least $1-3\exp(n/1000)$, Huber loss satisfies restricted local strong convexity with parameter $\kappa = 0.01\alpha n$. Hence for
\[
n\gtrsim \frac{\nu(\Sigma) \cdot k \log d}{\sigma_{\min}(\Sigma) \cdot \alpha^2} 
\mbox{\qquad and\qquad} 
R \gtrsim \sqrt{\frac{\nu(\Sigma) \cdot k \log d}{\sigma_{\min}(\Sigma) \cdot \alpha^2\cdot n}}
\]
and since then we have $\hat{\bm \beta} - \beta^* \in \cB_R \cap \cS_4(\Omega)$ the estimator $\hat{\bm \beta}$   defined in \cref{eq:huber-loss-regression-technical} satisfies $\frac{1}{\sqrt{n}} \Norm{\bm X\Paren{ \hat{\bm  \beta}- \beta^*}} \le R$ with probability at least $1-d^{-10}$. 	Since $\hat{\bm\beta}-\beta \in \cS_4({\Omega})$, we also get the desired parameter error $\Norm{\hat{\bm\beta}-\beta }\le 2R/\sqrt{\sigma_{\min}(\Sigma)}$.
	\section{Optimal fraction of inliers for principal component analysis under oblivious noise (\cref{thm:IT_lower_bound})}
\label{sec:lower-bound-pca}

In this section we prove \cref{thm:IT_lower_bound}. 
Recall that a successful $(\epsilon,\delta)$-weak recovery algorithm (where $\eps,\delta\in(0,1)$) for PCA  is an algorithm that takes $\bm Y$ as input and returns a matrix $\hat{\bm L}$ such that $\Normf{\hat{\bm L}-L^*}\leq \epsilon\cdot \rho$ with probability at least $1-\delta$ (where $\rho$, $\bm Y$ and $L^*$ are as in \cref{thm:oblivious-pca}). 

Let's restate \cref{thm:IT_lower_bound}:

\begin{theorem}[Restatement of \cref{thm:IT_lower_bound}]\label{thm:IT_lower_bound-restatement}
	Let $\bm Y = L^* + \bm N \in \R^{n\times n}$, where $\rank\paren{L^*} = r$, $\normm{L^*}\le \rho/n$ and the entries of $\bm N$ are independent and symmetric about zero. Let $\zeta \ge 0$.
	
	Then
	there exists a universal constant $C_0>0$ such that for every $0<\epsilon<1$ and $0<\delta<1$, if $\alpha:=\min_{i,j\in[n]}\mathbb{P}[|\bm N_{i,j}|\leq\zeta]$ satisfies $\alpha< C_0\cdot (1-\epsilon^2)^2\cdot(1-\delta)\cdot\sqrt{r/n}$,
	and $n$ is large enough, then it is information-theoretically impossible to have a successful $(\epsilon,\delta)$-weak recovery algorithm. The problem remains information-theoretically impossible (for the same regime of parameters) even if we assume that $L^*$ is incoherent; more precisely, even if we know that $L^*$ has incoherence parameters that are as good as those of a random flat matrix of rank $r$, the theorem still holds.
\end{theorem}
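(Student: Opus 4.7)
The plan is to follow the approach outlined in \cref{section:techniques}: replace the worst-case hidden matrix by a carefully chosen random $\bm L^*$ and argue that any successful $(\epsilon,\delta)$-weak recovery algorithm would force an incompatible amount of information to pass from $\bm L^*$ to $\bm Y$ through the noisy channel. First I would fix a generative model for $\bm L^*$ that simultaneously meets all hypotheses of the theorem. A natural choice is the random flat rank-$r$ matrix $\bm L^*=(\rho/n)\cdot \bm U\bm V^\top$, where the entries of $\bm U,\bm V\in\{\pm 1/\sqrt{r}\}^{n\times r}$ are independent uniform signs. This uses $2nr$ independent random bits, has $\Normm{\bm L^*}\leq \rho/n$ deterministically, and standard random-matrix estimates show that $\bm U$ and $\bm V$ are incoherent with parameters comparable to those of a random flat rank-$r$ matrix with probability $1-o(1)$, which gives the incoherent variant of the theorem for free. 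Any impossibility result in this Bayesian setup implies impossibility for the deterministic worst-case problem as stated.

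Next, for a tuning parameter $\xi\in(0,1)$ to be chosen at the end, I would design the noise distribution for $\bm N_{ij}$ as a symmetric mixture: with probability $\alpha$ it is drawn from a symmetric distribution supported on $[-\zeta,\zeta]$, and with probability $1-\alpha$ from a wide symmetric distribution with smooth density (for instance uniform on an interval of length $\gg \rho$, or a Gaussian with standard deviation of order $\rho/(n\alpha)$). This automatically satisfies $\alpha=\min_{i,j}\Pr[|\bm N_{ij}|\leq \zeta]$ together with the symmetry and independence requirements. Calibrating the wide component so that its density varies by at most a factor of order $\alpha$ on intervals of length $2\rho/n$ makes the per-entry channel $\bm L^*_{ij}\mapsto \bm Y_{ij}$ informative only through the narrow component, yielding
\[
I(\bm L^*_{ij};\bm Y_{ij})\;\leq\;O(\alpha^2)
\]
by a Fisher-information/KL computation against the $\bm L^*_{ij}=0$ reference density. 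Summing over the $n^2$ independent entries and plugging in $\alpha=\Theta(\xi\sqrt{r/n})$ gives $I(\bm L^*;\bm Y)\leq O(\xi^2\cdot n\cdot r)\leq O(\xi\cdot n\cdot r)$.

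The complementary ingredient is a Fano-type lower bound tailored to weak recovery: I would show that any successful $(\epsilon,\delta)$-weak-recovery algorithm forces
\[
I(\bm L^*;\bm Y)\;\geq\;\Omega\!\bigl((1-\epsilon^2)^2(1-\delta)\cdot n\cdot r\bigr).
\]
The proof is a packing/small-ball argument. For our random model, $\Normf{\bm L^*}^2$ concentrates at $\Theta(\rho^2)$ and the Frobenius distance between two independent draws concentrates at $\Theta(\rho)$, so the event $\Normf{\hat{\bm L}-\bm L^*}\leq \epsilon\rho$ pins $\bm L^*$ to a Frobenius ball of radius $\epsilon\rho$ around a $\bm Y$-measurable center. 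For the $\pm 1$-Bernoulli-type prior, the fraction of prior mass inside such a ball is at most $\exp\bigl(-\Omega((1-\epsilon^2)^2 nr)\bigr)$ by a binary large-deviation computation, with the exponent $(1-\epsilon^2)^2$ arising from $D\bigl((1-\epsilon^2)/2\,\big\|\,1/2\bigr)$-type bounds on the tail of the Hamming distance between $\bm U,\bm V$ and their closest lookalike inside the ball. Combined with the standard inequality $I(\bm L^*;\bm Y)\geq (1-\delta)\log M-\log 2$ for a packing of size $M$ that is $(1-\delta)$-list-decodable from $\bm Y$, this yields the claimed lower bound. Comparing the two bounds gives a contradiction as soon as $\xi\leq C_0(1-\epsilon^2)^2(1-\delta)$ for an appropriate absolute constant $C_0$, which is the quantitative content of the theorem.

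I expect the weak-recovery Fano step to be the main obstacle: a generic Fano argument controls only exact recovery and would lose the sharp factor $(1-\epsilon^2)^2$, so obtaining this factor requires a careful second-moment or large-deviation computation establishing that only a $\exp\bigl(-\Theta((1-\epsilon^2)^2 nr)\bigr)$ fraction of the prior lies within Frobenius distance $\epsilon\rho$ of any fixed matrix. A secondary subtlety is the joint calibration of the noise so that $\alpha$ lands exactly at the target order while the per-entry mutual information decays like $\alpha^2$; this must be compatible with the prescribed $\zeta$ (including the edge case $\zeta=0$, where the narrow component collapses to a point mass at $0$) and with the independence and symmetry constraints of the noise model assumed in \cref{thm:oblivious-pca}.
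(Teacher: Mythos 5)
Your overall architecture (Bayesian prior on $\bm L^*$, an upper bound on $I(\bm L^*;\bm Y)$ through the noisy channel, and a Fano-type converse for weak recovery) is the same as the paper's, but two of your concrete choices break the argument. First, the prior $\bm L^*=(\rho/n)\,\bm U\bm V^\top$ with i.i.d.\ sign entries in $\bm U,\bm V\in\{\pm1/\sqrt r\}^{n\times r}$ is not flat: a typical entry of $\bm U\bm V^\top$ is $\frac1r\sum_k(\pm1)=O(1/\sqrt r)$, so $\Normf{\bm L^*}$ concentrates at $\Theta(\rho/\sqrt r)$, not $\Theta(\rho)$, and two independent draws are at Frobenius distance $\Theta(\rho/\sqrt r)$. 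Consequently the trivial estimator $\hat{\bm L}=0$ already achieves $\Normf{\hat{\bm L}-\bm L^*}\le\epsilon\rho$ once $r\gtrsim1/\epsilon^2$, your small-ball estimate $\exp(-\Omega((1-\epsilon^2)^2nr))$ is false for this prior, and no contradiction can be extracted. The paper avoids this by making $\bm L^*$ literally flat, $\bm L^*=\sum_k u_k\bm v_k^\top$ with the $u_k$ supported on disjoint row blocks, so every entry is $\pm\rho/n$, $\Normf{\bm L^*}=\rho$, and the Hoeffding computation on $\sum_{k,i}v_{k,i}\bm v_{k,i}$ delivers exactly the $(1-\epsilon^2)^2nr$ exponent.

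Second, the mutual-information upper bound does not hold for your noise. With a mixture of a narrow component on $[-\zeta,\zeta]$ (weight $\alpha$) and a wide smooth component, the inlier event is statistically identifiable whenever $\zeta$ is small relative to $\rho/n$ (and certainly when $\zeta=0$, where the narrow part is a point mass): conditioned on an inlier, $\bm Y_{ij}$ essentially reveals $L^*_{ij}$, so the per-entry mutual information is $\Theta(\alpha)$, not $O(\alpha^2)$, and the total is $\Theta(\alpha n^2)=\Theta(\xi n^{3/2}\sqrt r)\gg nr$ in the main regime $r\ll n$ — too large to contradict the Fano bound at the target threshold $\alpha\asymp\sqrt{r/n}$. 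This is precisely why the paper designs the noise as a two-sided geometric distribution on even integers with decay ratio $1-\xi\sqrt{r/n}$: the inliers ($N=0$) are indistinguishable from small outliers, every observation carries log-likelihood ratio uniformly $O(\xi\sqrt{r/n})$, and the bound $I(\bm L^*;\bm Y)\le O(\xi nr)$ follows (the paper establishes it via the H\"older/spectral-norm argument on $\sign(\bm Y)$ rather than entrywise). Until you replace your prior by a flat rank-$r$ construction and your noise by one whose inliers are not identifiable, the two key quantitative claims of your outline are unproven and, as stated, false.
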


More in detail, we construct distributions over $L^*$ and $\bm N$ such that the assumptions of the theorem are satisfied and if $\alpha< C_0\cdot (1-\epsilon^2)^2\cdot(1-\delta)\cdot\sqrt{r/n}$, weak recovery is not possible.

We will assume without loss of generality that $0\le \zeta \le \rho/n = 1$. Indeed, weak recovery property is scale invariant, so we can assume $\rho = n$. We can assume $\zeta \le1$ since if the theorem is true for $\zeta = 1$, then it is true for all $\zeta > 1$.

\subsection*{A generative model for the hidden matrix}

In the following, we will denote the all-zeros vector of dimension $n$ as $\bm 0_n$. Similarly, we will denote the all-ones vector of dimension $n$ as $\bm 1_n$.

For the sake of simplicity, we will assume that $\frac{n}{r}$ is an integer.\footnote{All the subsequent proofs can be adapted for a general $r$ with minor modifications.} We will divide the the matrix $\bm L^*$ into $r$ blocks of $\frac{n}{r}\times n$ sub-matrices.

For every $1\leq k\leq r$, let $u_k$ be an arbitrary but fixed and deterministic vector in the set $\left\{\bm 0_{(k-1)\cdot \frac{n}{r}}\right\}\times \{-1,+1\}^{\frac{n}{r}}\times \left\{\bm 0_{(r-k)\cdot \frac{n}{r}}\right\}$, and let $\mathbf{v}_{k}$ be a random flat vector chosen uniformly from $\{-1,+1\}^n$. We further assume that the random vectors $\{\mathbf{v}_{k}\}_{1\leq k\leq r}$ are mutually independent. The hidden matrix $\bm L^*$ is constructed as follows:\footnote{For the general case in which $\frac{n}{r}$ may not be an integer, we divide $\bm{L}^*$ into $r$ blocks of disjoint sub-matrices of dimensions $\lfloor\frac{n}{r}\rfloor\times n$ and $\lceil\frac{n}{r}\rceil\times n$.}

$$\bm L^* = \sum_{k=1}^{r}u_k\cdot \mathbf{v}_{k}^T.$$

Note that $\bm L^*$ is a flat matrix, i.e., $\bm L^*\in\{-1,+1\}^{n\times n}$. Furthermore, the rank of $\bm L^*$ is at most $r$, and with high probability, $\bm L^*$ is incoherent with parameter $\mu \le O\Paren{\log n}$.

\subsection*{The noise distribution}

Let $(\bm N_{ij})_{i,j\in[n]}$ be i.i.d. random variables that are sampled according to the distribution
\begin{equation}\label{eq:pca-noise-distribution}
\mathbb{P}[\bm N_{i,j}=\ell]=\begin{cases}\displaystyle\frac{\xi\sqrt{r}}{2\sqrt{n}-\xi\sqrt{r}}\left(1-\xi\sqrt{\frac{r}{n}}\right)^{|\ell|/2}\quad&\text{if $\ell$ is even}\\\\0\quad&\text{otherwise},\end{cases}
\end{equation}
where $0<\xi\le 1/2$ is a constant. Furthermore, we assume that $\bm N$ is independent from $\bm L^*$. The distribution of $N$ is symmetric and satisfies
$$\alpha:=\mathbb{P}[|\bm N_{ij}|\leq 1]=\mathbb{P}[\bm N_{ij}=0]=\frac{\xi\sqrt{r}}{2\sqrt{n}-\xi\sqrt{r}}=\Theta\left(\xi\sqrt{\frac{r}{n}}\right).$$

Define
$$\bm Y=\bm L^*+\bm N.$$

\subsection*{Upper bound on the mutual information}

\begin{lemma}
\label{lem:Upper_Bound_Mutual_Info}
The mutual information $I(\bm L^*;\bm Y)$ between $\bm L^*$ and $\bm Y$ can be upper bounded as follows:
\begin{align*}
I(\bm L^*;\bm Y)&\leq O\Paren{ \xi \cdot n \cdot r }\,.
\end{align*}
\begin{proof}
Notice that for every $\ell\in 2\mathbb{Z}$, we have
\begin{equation}
\label{eq:eq_prob_N_Nplus2}
\mathbb{P}[\bm N_{ij}=\ell+2]=\mathbb{P}[\bm N_{ij}=\ell]\cdot\left(1-\xi\sqrt{\frac{r}{n}}\right)^{\sign(\ell+1)},
\end{equation}

where
$$\sign(x)=\begin{cases}1\quad&\text{if }x> 0,\\-1\quad&\text{if }x<0.\end{cases}$$

For every $L^*\in\{-1,+1\}^{n\times n}$ and every $Y\in(2\mathbb{Z}+1)^{n\times n}$, we have

\begin{align*}
\mathbb{P}[\bm Y=Y|\bm L^*=L^*]&=\mathbb{P}[\bm N=Y-L^*]\\
&=\prod_{i,j}\mathbb{P}[\bm N_{ij}=Y_{ij}-L^*_{ij}]\\
&=\prod_{i,j}\mathbb{P}[\bm N_{ij}=Y_{ij}-1+1-L^*_{ij}]\\
&\stackrel{(\ast)}{=}\prod_{i,j}\left[\mathbb{P}[\bm N_{ij}=Y_{ij}-1]\cdot\left(1-\xi\sqrt{\frac{r}{n}}\right)^{\frac{1}{2}\cdot(1-L^*_{ij})\cdot \sign(Y_{ij}-1+1)}\right]\\
&=\mathbb{P}\left[\bm N=Y-\mathbf{1}_n\mathbf{1}_n^T\right]\cdot \prod_{i,j}\left(1-\xi\sqrt{\frac{r}{n}}\right)^{\frac{1}{2}\cdot(1-L^*_{ij})\cdot \sign(Y_{ij})},
\end{align*}

where $(\ast)$ follows from \cref{eq:eq_prob_N_Nplus2}. Therefore, we can write
\begin{equation}
\label{eq:eq_prob_Y_given_L}
\mathbb{P}[\bm Y=Y|\bm L^*=L^*]=\mathbb{P}\left[\bm N=Y-\mathbf{1}_n\mathbf{1}_n^T\right]\cdot f(L^*,Y),
\end{equation}
where
\begin{equation}
\label{eq:eq_prob_Y_given_L_def_f}
\begin{aligned}
f(L^*,Y)
&=\prod_{i,j}\left(1-\xi\sqrt{\frac{r}{n}}\right)^{\frac{1}{2}\cdot(1-L^*_{i,j})\cdot \sign(Y_{ij})}\\
&=\left(1-\xi\sqrt{\frac{r}{n}}\right)^{\resizebox{0.18\textwidth}{!}{$\displaystyle\frac{1}{2}\cdot\sum_{i,j}(1-L^*_{i,j})\cdot \sign(Y_{ij})$}}\\
&=\left(1-\xi\sqrt{\frac{r}{n}}\right)^{\frac{1}{2}\cdot\langle \mathbf{1}_n\mathbf{1}_n^T-L^*,\sign(Y)\rangle},
\end{aligned}
\end{equation}
where $\sign(Y)$ is the $n\times n$ matrix defined as $\sign(Y)_{i,j}=\sign(Y_{i,j})$. Furthermore, from \cref{eq:eq_prob_Y_given_L} we can deduce that for every $Y\in(2\mathbb{Z}+1)^{n\times n}$, we have
\begin{equation}
\label{eq:eq_prob_Y_unconditional}
\mathbb{P}[\bm Y=Y]=\mathbb{P}\left[\bm N=Y-\mathbf{1}_n\mathbf{1}_n^T\right]\cdot g(Y),
\end{equation}
where
\begin{equation}
\label{eq:eq_prob_Y_unconditional_def_g}
g(Y)=\mathbb{E}_{\bm L^*}[f(\bm L^*,Y)].
\end{equation}

Now from H\"{o}lder's inequality, we have
\begin{align*}
\left|\left\langle \mathbf{1}_n\mathbf{1}_n^T-L^*,\sign(Y)\right\rangle\right|
&\leq \Normn{\mathbf{1}_n\mathbf{1}_n^T-L^*}\cdot\Norm{\sign(Y)}\\
&\leq \Paren{\Normn{\mathbf{1}_n\mathbf{1}_n^T}+\Normn{L^*}} \cdot\Norm{\sign(Y)}\\
&= \Paren{n+\Normn{L^*}} \cdot\Norm{\sign(Y)}.
\end{align*}

If $L^*$ is in the support of the distribution of $\bm L^*$, then there exist $r$ vectors $\{v_k\}_{1\leq k\leq r}$ such that $v_k\in\{-1,+1\}^n$ for every $1\leq k\leq r$, and
$$L^* = \sum_{k=1}^{r}u_{k}\cdot {v}_{k}^T.$$
Therefore,
\begin{equation}
\label{eq:eq_L_normnuc_inequality}
\begin{aligned}
\Normn{L^*} &\leq \sum_{k=1}^{r}\Normn{{u}_{k}\cdot {v}_{k}^T}= \sum_{k=1}^{r}\sqrt{\frac{n}{r}}\cdot \sqrt{n}= r\frac{n}{\sqrt{r}}= n\sqrt{r}.
\end{aligned}
\end{equation}

Hence,
\begin{align*}
\left|\left\langle \mathbf{1}_n\mathbf{1}_n^T-L^*,\sign(Y)\right\rangle\right|
&\leq n\cdot(\sqrt{r}+1) \cdot\Norm{\sign(Y)}\leq 2n\sqrt{r}\cdot\Norm{\sign(Y)}.
\end{align*}

By combining this with \cref{eq:eq_prob_Y_given_L_def_f}, we get

\begin{equation}
\label{eq:eq_f_inequality}
\begin{aligned}
\left(1-\xi\sqrt{\frac{r}{n}}\right)^{n\sqrt{r}\cdot\Norm{\sign(Y)}}\leq f(L^*,Y)\leq  \left(1-\xi\sqrt{\frac{r}{n}}\right)^{-n\sqrt{r}\cdot\Norm{\sign(Y)}}.
\end{aligned}
\end{equation}

Furthermore, from \cref{eq:eq_prob_Y_unconditional_def_g} and \cref{eq:eq_f_inequality}, we get
\begin{equation}
\label{eq:eq_g_inequality}
\begin{aligned}
\left(1-\xi\sqrt{\frac{r}{n}}\right)^{n\sqrt{r}\cdot\Norm{\sign(Y)}}\leq g(Y)\leq  \left(1-\xi\sqrt{\frac{r}{n}}\right)^{-n\sqrt{r}\cdot\Norm{\sign(Y)}}.
\end{aligned}
\end{equation}

The mutual information between $\bm L^*$ and $\bm Y$ is given by
\begin{align*}
I(\bm L^*;\bm Y)&=\sum_{L^*,Y}\mathbb{P}[\bm L^*=L^*,\bm Y=Y]\cdot \log_2\frac{\mathbb{P}[\bm Y=Y|\bm L^*=L^*]}{\mathbb{P}[\bm Y=Y]}\\
&\stackrel{(\dagger)}{=}\sum_{L^*,Y}\mathbb{P}[\bm L^*=L^*,\bm Y=Y]\cdot \log_2\frac{\mathbb{P}[\bm N=Y-\mathbf{1}_n\mathbf{1}_n^T]\cdot f(L^*,Y)}{\mathbb{P}[\bm N=Y-\mathbf{1}_n\mathbf{1}_n^T]\cdot g(Y)}\\
&=\sum_{L^*,Y}\mathbb{P}[\bm L^*=L^*,\bm Y=Y]\cdot \log_2\frac{f(L^*,Y)}{g(Y)}\\
&=\mathbb{E}\left[\log_2\frac{f(\bm L^*,\bm Y)}{g(\bm Y)}\right],
\end{align*}
where $(\dagger)$ follows from \cref{eq:eq_prob_Y_given_L} and \cref{eq:eq_prob_Y_unconditional}. Now from \cref{eq:eq_f_inequality} and \cref{eq:eq_g_inequality}, we get
\begin{equation}
\label{eq:eq_mutual_information_eq1}
\begin{aligned}
I(\bm L^*;\bm Y)&\leq\mathbb{E}\left[\log_2\left(\left(1-\xi\sqrt{\frac{r}{n}}\right)^{-2n\sqrt{r}\cdot\Norm{\sign(\bm Y)}}\right)\right]\\
&=-2n\sqrt{r}\cdot\log_2\left(1-\xi\sqrt{\frac{r}{n}}\right) \cdot\mathbb{E}\left[\Norm{\sign(\bm Y)}\right]\\
&=-\frac{2}{\log 2}n\sqrt{r}\cdot\log\left(1-\xi\sqrt{\frac{r}{n}}\right) \cdot\mathbb{E}\left[\Norm{\sign(\bm Y)}\right]\\
&\stackrel{(\ddagger)}{\leq} \frac{4}{\log 2}n\sqrt{r}\cdot \xi\sqrt{\frac{r}{n}} \cdot\mathbb{E}\left[\Norm{\sign(\bm Y)}\right]\\
&= \frac{4\xi\cdot \sqrt{n}\cdot r}{\log 2}\cdot\mathbb{E}\left[\Norm{\sign(\bm Y)}\right],
\end{aligned}
\end{equation}
where $(\ddagger)$ follows from the fact that $-\log(1-t)\leq 2t$ for every $t\in [0,1/2]$.

Now let $\bm{S}=\sign(\bm Y)$. In order to estimate $\mathbb{E}\left[\Norm{\sign(\bm Y)}\right]=\mathbb{E}\left[\Norm{\bm S}\right]$, we first condition on $\bm L^* = L^*$ for a fixed $L^*$:
\begin{align*}
\mathbb{E}\left[\Norm{\bm S}\big|\bm L^* = L^*\right]&=\mathbb{E}\left[\Norm{\bm S-\mathbb{E}\left[\bm S|\bm L^* = L^*\right]+\big.\mathbb{E}\left[\bm S|\bm L^* = L^*\right]}\Big|\bm L^* = L^*\right]\\
&\leq\mathbb{E}\left[\Norm{\bm S-\mathbb{E}\left[\bm S|\bm L^* = L^*\right]\big.}\Big|\bm L^* = L^*\right]+\Norm{\mathbb{E}\left[\bm S|\bm L^* = L^*\right]}.
\end{align*}

Notice that
\begin{align*}
\mathbb{E}\left[\bm S\big|\bm L^* = L^*\right]&=\mathbb{E}\left[ \sign(\bm L^* + \bm N)\big|\bm L^* = L^*\right]\\
&=\mathbb{E}\left[ \sign(L^* + \bm N)\right]\\
&=\alpha \cdot L^*,
\end{align*}
where $$\alpha=\mathbb{P}[\bm N_{ij}=0]=\frac{\xi\sqrt{r}}{2\sqrt{n}-\xi\sqrt{r}}.$$

Therefore,
\begin{align*}
\mathbb{E}\left[\Norm{\bm S}\big|\bm L^* = L^*\right]&\leq\mathbb{E}\left[\Norm{\bm \hat{\bm S}\big.}\Big|\bm L^* = L^*\right]+\alpha\cdot \Norm{L^*},
\end{align*}
where
$$\hat{\bm S}=\bm S-\mathbb{E}\left[\bm S|\bm L^* = L^*\right]=\bm S-\alpha \cdot L^*.$$

Now given $\bm L^* = L^*$, the entries of $\hat{\bm S}$ are centered and conditionally mutually independent. Furthermore, $\Normm{\hat{\bm S}}\leq \Normm{\bm S}+\alpha\cdot\Normm{L^*}=1+\alpha\leq 2$. Therefore, by \cref{fact:spectral-norm-bounded-entries}, there is a universal constant $C\geq 2$ such that
$$\mathbb{E}\left[\Norm{\bm \hat{\bm S}\big.}\Big|\bm L^* = L^*\right]\leq C\sqrt{n}.$$

We conclude that
\begin{equation}
\label{eq:eq_ineq_Expectation_sign_Y}
\begin{aligned}
\mathbb{E}\left[\Norm{\sign(\bm Y)}\right]=\mathbb{E}\left[\Norm{\bm S}\right]\leq C\sqrt{n}+\alpha\cdot \mathbb{E}\left[\Norm{\bm L^*}\right].
\end{aligned}
\end{equation}

Now notice that $\Norm{\bm L^*}=U\cdot\mathbf{V}^T$, where $U=[u_1\;\ldots\; u_r]$ is the $n\times r$ matrix whose columns are $u_1,\ldots,u_r$, and $\mathbf{V}=[\mathbf{v}_1\;\ldots\; \mathbf{v}_r]$ is the $n\times r$ matrix whose columns are $\mathbf{v}_1,\ldots,\mathbf{v}_r$. We have:
\begin{align*}
\mathbb{E}\left[\Norm{\bm L^*}\right]&=\mathbb{E}\left[\Norm{U\cdot\mathbf{V}^T}\right]\leq \mathbb{E}\left[\Norm{U}\cdot\Norm{\mathbf{V}^T}\right] = \Norm{U}\cdot\mathbb{E}\left[\Norm{\mathbf{V}^T}\right]\\
&=\sqrt{\frac{n}{r}}\cdot \mathbb{E}\left[\Norm{\mathbf{V}^T}\right]\stackrel{(\wr)}{\leq} \sqrt{\frac{n}{r}}\cdot C\sqrt{n}= C\frac{n}{\sqrt{r}},
\end{align*}
where $(\wr)$ follows from the fact that $\mathbf{V}$ is an $n\times r$ matrix with i.i.d. zero-mean entries and satisfying $\Normm{\mathbf{V}}= 1$ and \cref{fact:spectral-norm-bounded-entries}. By inserting this in \cref{eq:eq_ineq_Expectation_sign_Y}, we get
\begin{align*}
\mathbb{E}\left[\Norm{\sign(\bm Y)}\right]&\leq C\sqrt{n}+\frac{\xi\sqrt{r}}{2\sqrt{n}-\xi\sqrt{r}}\cdot C \frac{n}{\sqrt{r}}\\
&\leq C\sqrt{n}+\frac{\sqrt{r}}{\sqrt{n}}\cdot C \frac{n}{\sqrt{r}}\\
&= 2C \sqrt{n},
\end{align*}

By combining this with \cref{eq:eq_mutual_information_eq1}, we get

\begin{align*}
I(\bm L^*;\bm Y)&\leq \frac{4\xi\cdot \sqrt{n}\cdot r}{\log 2} \cdot 2C \sqrt{n}\\
&\leq \frac{8C\xi}{\log 2}\cdot n\cdot r\\
&=O(\xi\cdot n\cdot r).
\end{align*}
\end{proof}
\end{lemma}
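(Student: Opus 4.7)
\textbf{Proof proposal for \cref{lem:Upper_Bound_Mutual_Info}.}
The plan is to exploit the geometric-decay structure of the noise distribution \cref{eq:pca-noise-distribution}, which was designed so that the likelihood $\bbP[\bm Y=Y\mid\bm L^*=L^*]$ factorizes into a term depending only on $Y$ and a term depending on $L^*$ through a simple inner product with $\sign(Y)$. Concretely, the key observation is that $\bbP[\bm N_{ij}=\ell+2]/\bbP[\bm N_{ij}=\ell]=(1-\xi\sqrt{r/n})^{\sign(\ell+1)}$, and since the entries of $L^*$ lie in $\{-1,+1\}$, shifting $\bm N_{ij}$ by $1-L^*_{ij}\in\{0,2\}$ multiplies its probability by $(1-\xi\sqrt{r/n})^{\tfrac12(1-L^*_{ij})\sign(Y_{ij})}$. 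Summing over $i,j$ in the exponent produces $\tfrac12\iprod{\mathbf{1}_n\mathbf{1}_n^T-L^*,\sign(Y)}$, and hence
\[
\bbP[\bm Y=Y\mid\bm L^*=L^*]=\bbP[\bm N=Y-\mathbf{1}_n\mathbf{1}_n^T]\cdot f(L^*,Y),\qquad \bbP[\bm Y=Y]=\bbP[\bm N=Y-\mathbf{1}_n\mathbf{1}_n^T]\cdot g(Y),
\]
where $g(Y)=\E f(\bm L^*,Y)$. Consequently $I(\bm L^*;\bm Y)=\E\log_2\bigl(f(\bm L^*,\bm Y)/g(\bm Y)\bigr)$.

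Next, I would control the exponent $\iprod{\mathbf{1}_n\mathbf{1}_n^T-L^*,\sign(Y)}$ by \Holder's inequality in nuclear/spectral duality. Since each summand in the generative model $L^*=\sum_{k=1}^r u_k v_k^T$ contributes nuclear norm $\|u_k\|\cdot\|v_k\|=\sqrt{n/r}\cdot\sqrt{n}=n/\sqrt{r}$, we get $\normn{L^*}\leq n\sqrt{r}$, and together with $\normn{\mathbf{1}_n\mathbf{1}_n^T}=n$ this yields
\[
\bigabs{\iprod{\mathbf{1}_n\mathbf{1}_n^T-L^*,\sign(Y)}}\leq 2n\sqrt{r}\cdot\Norm{\sign(Y)}.
\]
Both $f(L^*,Y)$ and $g(Y)$ therefore lie in $\bigl[(1-\xi\sqrt{r/n})^{n\sqrt{r}\Norm{\sign(Y)}},(1-\xi\sqrt{r/n})^{-n\sqrt{r}\Norm{\sign(Y)}}\bigr]$, so their ratio is at most $(1-\xi\sqrt{r/n})^{-2n\sqrt{r}\Norm{\sign(\bm Y)}}$. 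Using $-\log(1-t)\leq 2t$ for $t\in[0,1/2]$ and the assumption $\xi\leq 1/2$, taking $\log_2$ gives
\[
I(\bm L^*;\bm Y)\leq O\Paren{\xi\sqrt{n}\cdot r}\cdot\E\Norm{\sign(\bm Y)}.
\]

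The remaining task is to show $\E\Norm{\sign(\bm Y)}=O(\sqrt{n})$, which will be the most delicate step. I would condition on $\bm L^*=L^*$ and use the symmetry of $\bm N$ to compute the conditional mean: because $\bm N_{ij}$ takes even integer values with $\bbP[\bm N_{ij}=0]=\alpha$, the random variable $\sign(L^*_{ij}+\bm N_{ij})$ equals $L^*_{ij}$ exactly when $\bm N_{ij}\geq 0$ if $L^*_{ij}=1$ (and analogously if $L^*_{ij}=-1$), giving conditional mean $\alpha L^*$. By triangle inequality, $\E[\Norm{\sign(\bm Y)}\mid\bm L^*=L^*]\leq\E[\Norm{\sign(\bm Y)-\alpha L^*}\mid\bm L^*=L^*]+\alpha\Norm{L^*}$. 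The first term is the spectral norm of a centered matrix with independent entries bounded by $2$, so \cref{fact:spectral-norm-bounded-entries} gives an $O(\sqrt{n})$ bound. For the second, writing $\bm L^*=U\bm V^T$ with $\Norm{U}=\sqrt{n/r}$ (since $U$ has orthogonal columns by the disjoint-support construction) and applying \cref{fact:spectral-norm-bounded-entries} to the $n\times r$ Rademacher matrix $\bm V$ gives $\E\Norm{\bm L^*}\leq O(n/\sqrt{r})$, and thus $\alpha\cdot\E\Norm{\bm L^*}\leq O(\xi\sqrt{r/n}\cdot n/\sqrt{r})=O(\xi\sqrt{n})$. Combining everything yields $I(\bm L^*;\bm Y)\leq O(\xi\sqrt{n}\cdot r)\cdot O(\sqrt{n})=O(\xi\cdot n\cdot r)$, as claimed. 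The main obstacle is packaging the geometric structure of the noise cleanly so that the whole $L^*$-dependence of the log-likelihood collapses to a single inner product with $\sign(Y)$; once that identity is in place, the bounds on $\normn{L^*}$ and $\E\Norm{\sign(\bm Y)}$ follow from standard concentration and duality arguments.
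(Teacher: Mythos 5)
Your proposal is correct and follows essentially the same route as the paper's proof: the likelihood factorization via the geometric ratio identity, the reduction of the log-likelihood ratio to $\tfrac12\iprod{\mathbf{1}_n\mathbf{1}_n^T-L^*,\sign(Y)}$, the nuclear/spectral \Holder bound with $\Normn{L^*}\le n\sqrt{r}$, and the bound $\E\Norm{\sign(\bm Y)}=O(\sqrt{n})$ obtained by conditioning on $\bm L^*$, identifying the conditional mean $\alpha L^*$, and applying \cref{fact:spectral-norm-bounded-entries} to both the centered part and $\bm V$. No gaps to flag.
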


\subsection*{Successful weak-recovery reduces entropy}
\begin{lemma}
\label{lem:lem_Fano_ineq_application}
If there exists a $(\delta,\epsilon)$-successful weak recovery algorithm that takes $\bm Y=\bm L^*+\bm N$ as input and returns a matrix $\hat{\bm L}$ as output in such a way that
$$\Pr\left[\Normf{\hat{\bm L}-\bm L^*}\leq \epsilon\cdot n\right]\geq 1-\delta,$$
then the mutual information between $\bm L^*$ and $\bm Y$ can be lower bounded as follows:
\begin{align*}
I( \bm L^* ; \bm Y)\geq \frac{(1-\epsilon^2)^2}{8\log 2}\cdot(1-\delta)\cdot n\cdot r - 1.
\end{align*}
\begin{proof}
Define the set
$$\Omega=\left\{\sum_{k=1}^r u_k v_k^T:\;\forall k\in[r],\;v_k\in\mathbb{R}^n\;\text{and}\;\Normm{v_k}\leq 1\right\}.$$

It is easy to see that $\Omega$ is a closed and convex set. Let $\hat{\bm L}_{\Omega}$ be the orthogonal projection of $\hat{\bm L}$ onto $\Omega$. Since $\bm L^*\in \Omega$, we have $\Normf{\hat{\bm L}_\Omega-\bm L^*}\leq \Normf{\hat{\bm L}-\bm L^*}$. Therefore,

$$\Pr\left[\Normf{\hat{\bm L}_\Omega-\bm L^*}\leq \epsilon\cdot n\right]\geq \Pr\left[\Normf{\hat{\bm L}-\bm L^*}\leq \epsilon\cdot n\right]\geq 1-\delta.$$

Using an inequality that is similar to the standard Fano-inequality, we will show that the existence of a successful weak-recovery algorithm implies a linear decrease in the entropy of the random vectors $(\mathbf{v}_k)_{k\in[r]}$.

Define the random variable $\bm Z$ as follows:
$$\bm Z=\ind{\Normf{\hat{\bm L}_\Omega-\bm L^*}\leq \epsilon\cdot n}.$$

Furthermore, for every $L\in\Omega$, define
$$B_{L,\epsilon}=\left\{(v_k)_{k\in[r]}\in\{-1,+1\}^{n\cdot r}:\;\Normf{L-\sum_{k=1}^r u_k\transpose{v_k}}\leq\epsilon\cdot n\right\}.$$
Clearly, if $\bm Z=1$, then $(\mathbf{v}_k)_{k\in [r]}\in B_{\hat{\bm L}_\Omega,\epsilon}$.

Let $H\left((\mathbf{v}_k)_{k\in[r]}\big|\bm\hat{\bm L}_\Omega\right)$ be the conditional entropy of $(\mathbf{v}_k)_{k\in[r]}$ given $\bm\hat{\bm L}_\Omega$. We have:
\begin{align*}
H\left((\mathbf{v}_k)_{k\in[r]}\middle|\bm\hat{\bm L}_\Omega\right)&\leq H\left(\bm Z,(\mathbf{v}_k)_{k\in[r]}\middle|\bm\hat{\bm L}_\Omega\right)\\
&= H\left(\bm Z\middle|\bm\hat{\bm L}_\Omega\right) + H\left((\mathbf{v}_k)_{k\in[r]}\middle|\bm\hat{\bm L}_\Omega,\bm Z\right)\\
&\leq \resizebox{0.78\textwidth}{!}{$\displaystyle H\left(\bm Z\right) + H\left((\mathbf{v}_k)_{k\in[r]}\middle|\bm\hat{\bm L}_\Omega,\bm Z=0\right)\cdot\Pr[\bm Z=0]+H\left((\mathbf{v}_k)_{k\in[r]}\middle|\bm\hat{\bm L}_\Omega,\bm Z=1\right)\cdot\Pr[\bm Z=1]$}\\
&\leq 1 + n\cdot r\cdot\Pr[\bm Z=0]+(1-\Pr[\bm Z=0])\cdot H\left((\mathbf{v}_k)_{k\in[r]}\big|\bm\hat{\bm L}_\Omega,\bm Z=1\right),
\end{align*}
where the last inequality follows from the fact that $\bm Z$ is a binary random variable (and hence $H(\bm Z)\leq\log_2(2)=1$), and the fact that $(\mathbf{v}_k)_{k\in[r]}\in \{-1,+1\}^{n\cdot r}$, which implies that $H\left((\mathbf{v}_k)_{k\in[r]}\big|\bm\hat{\bm L}_\Omega,\bm Z=0\right)\leq \log_2\big|\{-1,+1\}^{n\cdot r}\big|=n\cdot r$.

Since $\Pr[\bm Z=0]\leq \delta$ and $H\left((\mathbf{v}_k)_{k\in[r]}\big|\bm\hat{\bm L}_\Omega,\bm Z=1\right)\leq \log_2\big|\{-1,+1\}^{n\cdot r}\big|=n\cdot r$, we have
\begin{align*}
H\left((\mathbf{v}_k)_{k\in[r]}\middle|\bm\hat{\bm L}_\Omega\right)&\leq 1 + n\cdot r+(1-\delta)\cdot H\left((\mathbf{v}_k)_{k\in[r]}\big|\bm\hat{\bm L}_\Omega,\bm Z=1\right).
\end{align*}

Now notice that
\begin{align*}
H\left((\mathbf{v}_k)_{k\in[r]}\middle|\bm\hat{\bm L}_\Omega,\bm Z=1\right)
\stackrel{(\ast)}{\leq}  \log_2\left|B_{\hat{\bm L}_\Omega, \eps}\right|
\leq \max_{L\in\Omega} \; \log_2\left|B_{L,\epsilon}\right|,
\end{align*}
where $(\ast)$ follows from the fact that given $\bm Z=1$, we have $(\mathbf{v}_k)_{k\in [r]}\in B_{\hat{\bm L}_\Omega,\epsilon}$. On the other hand, for every $L\in\Omega$, we have
\[
\log_2\left|B_{L,\epsilon}\right| = n\cdot r+ \log_2\frac{\left|B_{L,\epsilon}\right|}{2^{n\cdot r}}
= n\cdot r+ \log_2 \Pr\left[(\mathbf{v}_k)_{k\in[r]}\in B_{L,\epsilon}\right],
\]
where the last equality follows from the fact that $(\mathbf{v}_k)_{k\in[r]}$ is uniformly distributed in $\{-1,+1\}^{n\cdot r}$. Therefore,

\begin{equation}
\label{eq:eq_Fano_ineq}
H\left((\mathbf{v}_k)_{k\in[r]}\middle|\bm\hat{\bm L}_\Omega\right)\leq  1 + n\cdot r+(1-\delta)\cdot \max_{L\in\Omega}\;\log_2 \Pr\left[(\mathbf{v}_k)_{k\in[r]}\in B_{L,\epsilon}\right].
\end{equation}

Now fix $L\in\Omega$ and let $(v_k)_{k\in [r]}$ be $k$ vectors in $\mathbb{R}^n$ such that $\Normm{v_k}\leq 1$ and $\displaystyle L=\sum_{k=1}^r u_k v_k^T$. We have $(\mathbf{v}_k)_{k\in[r]}\in B_{L,\epsilon}$ if and only if $\Normf{\bm L^*-L}\leq\epsilon\cdot n$. Notice that
\begin{align*}
\Normf{\bm L^*-L}^2&=\left\langle \sum_{k=1}^r u_k\cdot(\mathbf{v}_k-v_k)^T,\sum_{k'=1}^r u_{k'}\cdot(\mathbf{v}_{k'}-v_{k'})^T\right\rangle\\
&=\Tr\left(\left(\sum_{k=1}^r u_k\cdot(\mathbf{v}_k-v_k)^T\right)^T\cdot \left(\sum_{k'=1}^r u_{k'}\cdot(\mathbf{v}_{k'}-v_{k'})^T\right)\right)\\
&=\sum_{k=1}^r \sum_{k'=1}^r \Tr\left((\mathbf{v}_k-v_k)\cdot u_k^T\cdot u_{k'}\cdot(\mathbf{v}_{k'}-v_{k'})^T\right)\\
&\stackrel{(\dagger)}{=}\frac{n}{r}\cdot\sum_{k=1}^r \Tr\left((\mathbf{v}_k-v_k)\cdot(\mathbf{v}_k-v_k)^T\right)=\frac{n}{r}\cdot\sum_{k=1}^r \Norm{\mathbf{v}_k-v_k}^2\\
&=\frac{n}{r}\cdot\sum_{k=1}^r \sum_{i=1}^n (\mathbf{v}_{k,i}-v_{k,i})^2=\frac{n}{r}\cdot\sum_{k=1}^r \sum_{i=1}^n \left(\mathbf{v}_{k,i}^2+v_{k,i}^2-2v_{k,i}\cdot \mathbf{v}_{k,i}\right)\\
&\stackrel{(\ddagger)}{\geq}\frac{n}{r}\cdot\left(n\cdot r - 2\sum_{k=1}^r \sum_{i=1}^n v_{k,i}\cdot \mathbf{v}_{k,i}\right)\,,
\end{align*}
where $(\dagger)$ follows from the fact that $(u_k)_{k\in[r]}$ are orthogonal to each other, and $\Norm{u_k}^2=\frac{n}{r}$ for every $k\in [r]$. Note that $(\mathbf{v}_{k,i})_{1\leq i\leq n}$ and $(v_{k,i})_{1\leq i\leq n}$ are the entries of $\mathbf{v}_k$ and $v_k$, respectively. $(\ddagger)$ follows from the fact that $\mathbf{v}_k\in\{-1,+1\}^n$ for every $k\in [r]$. Therefore,

$$\Normf{\bm L^*-L}^2\geq n^2 - \frac{2n}{r}\cdot \sum_{k=1}^r \sum_{i=1}^n v_{k,i}\cdot \mathbf{v}_{k,i}\,,$$
which implies that
\begin{align*}
\Pr\left[(\mathbf{v}_k)_{k\in[r]}\in B_{L,\epsilon}\right]&=\Pr\left[\Normf{\bm L^*-L}^2\leq\epsilon^2\cdot n^2\right]\\
&\leq \Pr\left[n^2 - \frac{2n}{r}\cdot \sum_{k=1}^r \sum_{i=1}^n v_{k,i}\cdot \mathbf{v}_{k,i}\leq\epsilon^2\cdot n^2\right]\\
&= \Pr\left[\sum_{k=1}^r \sum_{i=1}^n v_{k,i}\cdot \mathbf{v}_{k,i}\geq \frac{1}{2}\cdot (1-\epsilon^2)\cdot n\cdot r\right].
\end{align*}

Note that the random variables $(v_{k,i}\cdot \mathbf{v}_{k,i})_{k\in[r],i\in[n]}$ are independent. Moreover, for every $1\leq k\leq r$ and every $1\leq i\leq n$, we have $$\E[v_{k,i}\cdot \mathbf{v}_{k,i}]=0.$$ Furthermore, since $\Normm{v_k}\leq 1$ and $\mathbf{v}_k\in\{-1,+1\}^n$, the random variables $(v_{k,i}\cdot \mathbf{v}_{k,i})_{k\in[r],i\in[n]}$ can be uniformly bounded as
$$|v_{k,i}\cdot \mathbf{v}_{k,i}|\leq |v_{k,i}|\leq 1.$$

It follows from Hoeffding's inequality \cref{fact:hoeffding} that
\begin{align*}
\Pr\left[(\mathbf{v}_k)_{k\in[r]}\in B_{L,\epsilon}\right]&\leq \exp\left(-\frac{(1-\epsilon^2)^2\cdot n^2\cdot r^2}{8\cdot n\cdot r}\right)\\
&= \exp\left(-\frac{(1-\epsilon^2)^2}{8}\cdot n\cdot r\right).
\end{align*}

Since this is true for every $L\in\Omega$, we get from \cref{eq:eq_Fano_ineq} that
\begin{equation*}
H\left((\mathbf{v}_k)_{k\in[r]}\middle|\bm\hat{\bm L}_\Omega\right)\leq  1 + n\cdot r-(1-\delta)\cdot\frac{(1-\epsilon^2)^2}{8\log 2}\cdot n\cdot r.
\end{equation*}

Therefore, the mutual information between $(\mathbf{v}_k)_{k\in[r]}$ and $\bm\hat{\bm L}_\Omega$ satisfies:
\begin{align*}
I\left((\mathbf{v}_k)_{k\in[r]};\bm\hat{\bm L}_\Omega\right)&=H\left((\mathbf{v}_k)_{k\in[r]}\right)-H\left((\mathbf{v}_k)_{k\in[r]}\big|\bm\hat{\bm L}_\Omega\right)\\
&\geq n\cdot r -  1 - n\cdot r+(1-\delta)\cdot\frac{(1-\epsilon^2)^2}{8\log 2}\cdot n\cdot r\\
&= \frac{(1-\epsilon^2)^2}{8\log 2}\cdot(1-\delta)\cdot n\cdot r - 1.
\end{align*}

Now since $(\mathbf{v}_k)_{k\in[r]} \to \bm L^* \to \bm Y \to \hat{\bm  L} \to \hat{\bm  L}_\Omega$ is a Markov chain, it follows from the data-processing inequality that
\begin{align*}
I( \bm L^* ; \bm Y) =I\left((\mathbf{v}_k)_{k\in[r]};\bm Y\right) \geq I\left((\mathbf{v}_k)_{k\in[r]};\bm\hat{\bm L}_\Omega\right)\geq \frac{(1-\epsilon^2)^2}{8\log 2}\cdot(1-\delta)\cdot n\cdot r - 1.
\end{align*}
\end{proof}
\end{lemma}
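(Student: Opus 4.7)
The plan is a Fano-style argument exploiting the generative decomposition $\bm L^* = \sum_{k=1}^r u_k \bm v_k^\top$. Since the $\bm v_k$'s are mutually independent and uniform on $\{-1,+1\}^n$, we have $H((\bm v_k)_{k\in[r]}) = nr$, so it suffices to establish $H((\bm v_k)_{k\in[r]} \mid \hat{\bm L}) \le nr + 1 - (1-\delta)\cdot\frac{(1-\epsilon^2)^2}{8\log 2}\cdot nr$ and invoke the data-processing inequality along the Markov chain $(\bm v_k)_{k\in[r]} \to \bm L^* \to \bm Y \to \hat{\bm L}$. A minor technical preprocessing is needed: replace $\hat{\bm L}$ by its Frobenius projection $\hat{\bm L}_\Omega$ onto the closed convex set $\Omega := \{\sum_k u_k v_k^\top : v_k \in \R^n,\; \Normm{v_k} \le 1\}$. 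Since $\bm L^* \in \Omega$, this projection is non-expansive in Frobenius norm, so $(\epsilon,\delta)$-weak recovery transfers to $\hat{\bm L}_\Omega$ and it suffices to bound $H((\bm v_k)_{k\in[r]} \mid \hat{\bm L}_\Omega)$.

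Introduce the indicator $\bm Z := \ind{\Normf{\hat{\bm L}_\Omega - \bm L^*} \le \epsilon n}$, with $\Pr[\bm Z = 1] \ge 1 - \delta$. Chaining the entropy through $\bm Z$ gives
\[
H((\bm v_k)_{k\in[r]} \mid \hat{\bm L}_\Omega) \;\le\; H(\bm Z) + \delta\cdot nr + (1-\delta)\cdot \max_{L \in \Omega}\; \log_2 \Card{B_{L, \epsilon}},
\]
where $B_{L, \epsilon} := \bigl\{(v_k)_{k\in[r]} \in \{-1,+1\}^{nr} : \Normf{\sum_k u_k v_k^\top - L} \le \epsilon n\bigr\}$. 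Using $H(\bm Z) \le 1$ and uniformity of $(\bm v_k)_{k\in[r]}$ on $\{-1,+1\}^{nr}$, the whole argument reduces to showing, uniformly in $L \in \Omega$, that
\[
\frac{\Card{B_{L, \epsilon}}}{2^{nr}} \;=\; \Pr\bigl[(\bm v_k)_{k\in[r]} \in B_{L,\epsilon}\bigr] \;\le\; \exp\!\Bigl(-\tfrac{(1-\epsilon^2)^2}{8} nr\Bigr).
\]

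The crux is this ball bound. Writing $L = \sum_k u_k w_k^\top$ with $\Normm{w_k} \le 1$, the orthogonality of the $u_k$'s together with $\Norm{u_k}^2 = n/r$ gives the clean identity $\Normf{\bm L^* - L}^2 = \tfrac{n}{r}\sum_{k=1}^r \Norm{\bm v_k - w_k}^2$. Expanding $\Norm{\bm v_k - w_k}^2$, using $\bm v_{k,i}^2 = 1$, and discarding the nonnegative $w_{k,i}^2$ terms, the event $\{\Normf{\bm L^* - L} \le \epsilon n\}$ forces $\sum_{k,i} w_{k,i}\bm v_{k,i} \ge \tfrac{1}{2}(1-\epsilon^2)\cdot nr$. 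The summands $w_{k,i}\bm v_{k,i}$ are independent, centered, and bounded by $1$ in absolute value, so Hoeffding's inequality (\cref{fact:hoeffding}) delivers exactly the exponential tail above. Substituting back and using $I((\bm v_k)_{k\in[r]}; \hat{\bm L}_\Omega) = H((\bm v_k)_{k\in[r]}) - H((\bm v_k)_{k\in[r]} \mid \hat{\bm L}_\Omega)$, followed by data processing, yields the claimed lower bound on $I(\bm L^*; \bm Y)$.

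The only nonroutine step is the projection onto $\Omega$: without it, the coefficients $w_{k,i}$ produced from an arbitrary $\hat{\bm L}$ could be unbounded, and Hoeffding with a uniform $1$-bound on the summands would fail. Convexity and closedness of $\Omega$ plus $\bm L^* \in \Omega$ make this projection cost nothing in the recovery guarantee. Everything else (Fano-type decomposition, orthogonality identity, Hoeffding tail, data-processing) is mechanical.
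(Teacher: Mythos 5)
Your proposal is correct and follows essentially the same route as the paper's proof: projecting $\hat{\bm L}$ onto the convex set $\Omega$, chaining the conditional entropy through the indicator $\bm Z$ in Fano style, bounding $\Pr[(\bm v_k)_k\in B_{L,\epsilon}]$ via the orthogonality identity and Hoeffding, and finishing with data processing along $(\bm v_k)_k\to\bm L^*\to\bm Y\to\hat{\bm L}\to\hat{\bm L}_\Omega$. The only cosmetic difference is that you state the Frobenius expansion as an exact identity before discarding the $w_{k,i}^2$ terms, whereas the paper records only the resulting inequality; the arguments are the same.
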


\subsection*{Putting everything together}

Now we are ready to prove \cref{thm:IT_lower_bound}:
\begin{proof}[Proof of \cref{thm:IT_lower_bound}]
From \cref{lem:Upper_Bound_Mutual_Info} and \cref{lem:lem_Fano_ineq_application} we can deduce that if there exists a $(\delta,\epsilon)$-successful weak recovery algorithm then we must have
\begin{align*}
\frac{8C\xi}{\log 2}\cdot n\cdot r\geq I( \bm L^* ; \bm Y)\geq \frac{(1-\epsilon^2)^2}{8\log 2}\cdot(1-\delta)\cdot n\cdot r - 1.
\end{align*}
Therefore, if $n$ is large enough and
\begin{align*}
\xi< \frac{(1-\epsilon^2)^2}{64 C}\cdot(1-\delta) - \frac{\log 2}{8C} \cdot \frac{1}{r \cdot n},
\end{align*}
it is impossible to have a $(\delta,\epsilon)$-successful weak recovery algorithm. Now since $\displaystyle\alpha=\Theta\left(\xi\sqrt{\frac{r}{n}}\right)$, we get the result.
\end{proof}

	\addcontentsline{toc}{section}{Bibliography}
	\bibliographystyle{amsalpha}
	\bibliography{bib/mathreview,bib/dblp,bib/scholar,bib/custom}
	
	\clearpage
	\appendix
	\section{Facts about Huber loss}
\label{sec:additional-tools}

\begin{fact}[Integration by parts for absolutely continuous functions]\label{lem:integration-by-parts}
	Let $F,G:\R\to \R$ be absolutely continuous functions, i.e. there exist locally integrable functions $f,g:\R\to \R$ such that for all $a,b\in \R$,
	\[
	\int_a^b f(t)\,\mathrm  dt =  F(b)-F(a) \qquad \mbox{and} \qquad \int_a^b g(t)\,\mathrm  dt =  G(b) -G(a)\,.
	\]
	Then for all $a,b\in \R$,
	\[
	\int_a^b f(t)G(t) \,\mathrm  dt = F(b)G(b)-F(a)G(a) - \int_a^b F(t)g(t) \,\mathrm  dt \,.
	\]
\end{fact}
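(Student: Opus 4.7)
The plan is to reduce the identity to a direct double-integral computation via Fubini's theorem, avoiding any appeal to the product rule for absolutely continuous functions. First I would invoke the hypothesis to write, for every $t \in [a,b]$,
\[
F(t) = F(a) + \int_a^t f(s)\,\mathrm{d}s \qquad \text{and} \qquad G(t) = G(a) + \int_a^t g(s)\,\mathrm{d}s,
\]
and substitute these integral representations into the two integrals on the right-hand side of the target identity. Expanding and pulling the constants $F(a)$ and $G(a)$ outside yields
\[
\int_a^b f(t) G(t)\,\mathrm{d}t + \int_a^b F(t) g(t)\,\mathrm{d}t = G(a)\bigl(F(b)-F(a)\bigr) + F(a)\bigl(G(b)-G(a)\bigr) + I_1 + I_2,
\]
where $I_1 := \iint_{\{a \le s \le t \le b\}} f(t) g(s)\,\mathrm{d}s\,\mathrm{d}t$ and $I_2 := \iint_{\{a \le s \le t \le b\}} f(s) g(t)\,\mathrm{d}s\,\mathrm{d}t$.

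Next, since $f$ and $g$ are locally integrable, the product $f(t)g(s)$ is absolutely integrable on the compact square $[a,b]^2$, so Fubini's theorem applies. Renaming $s \leftrightarrow t$ inside $I_2$ transforms it into the integral of $f(t)g(s)$ over the complementary closed triangle $\{(s,t) : a \le t \le s \le b\}$. Combining $I_1$ and $I_2$ therefore covers the full square (the diagonal has measure zero), giving
\[
I_1 + I_2 = \iint_{[a,b]^2} f(t) g(s)\,\mathrm{d}s\,\mathrm{d}t = \bigl(F(b)-F(a)\bigr)\bigl(G(b)-G(a)\bigr).
\]

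Finally, a direct algebraic simplification of
\[
G(a)\bigl(F(b)-F(a)\bigr) + F(a)\bigl(G(b)-G(a)\bigr) + \bigl(F(b)-F(a)\bigr)\bigl(G(b)-G(a)\bigr)
\]
collapses to $F(b)G(b) - F(a)G(a)$, and rearranging the resulting equation produces the claimed identity. There is no real obstacle; the only subtlety is justifying Fubini, which is immediate from $f, g \in L^1_{\mathrm{loc}}$ and the compactness of $[a,b]$. An alternative route would be to show that $FG$ is itself absolutely continuous on $[a,b]$, verify the Leibniz identity $(FG)' = fG + Fg$ almost everywhere, and invoke the fundamental theorem of calculus; the Fubini argument is preferable because it sidesteps the need to prove that the product of two absolutely continuous functions is absolutely continuous.
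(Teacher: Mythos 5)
Your proof is correct and relies on essentially the same mechanism as the paper's: substitute the integral representation supplied by absolute continuity and exchange the order of integration (Fubini) over the triangle $\{(s,t): a\le s\le t\le b\}$, thereby avoiding the product rule for absolutely continuous functions. The only difference is organizational --- the paper substitutes the representation for $G$ alone and reads off $F(b)-F(\tau)$ after a single swap, whereas you symmetrize by substituting both representations and gluing the two triangles into the square $[a,b]^2$; both reduce to the same elementary algebra.
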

\begin{proof}
	\begin{align*}
	\int_a^b f(t)G(t) \,\mathrm  dt 
	&= G(a)\cdot\Paren{F(b)-F(a)} + \int_a^b f(t) \int_a^b \ind{\tau\in [a,t]} g(\tau) \,\mathrm  d\tau \,\mathrm  dt \\
	&= G(a)\cdot\Paren{F(b)-F(a)} + \int_a^b g(\tau)  \int_a^b f(t)  \ind{t\in [\tau,b]} \,\mathrm dt \,\mathrm d\tau \\
	&= G(a)\cdot\Paren{F(b)-F(a)} + \int_a^b g(\tau)  \cdot\Paren{F(b)-F(\tau)} \,\mathrm d\tau \\
	&= G(a)\cdot\Paren{F(b)-F(a)} + F(b)\Paren{G(b)-G(a)} - \int_a^b g(\tau)F(\tau)   \,\mathrm d\tau \\
	&= F(b)G(b)-F(a)G(a) - \int_a^b F(t)g(t) \,\mathrm  dt \,.
	\end{align*}
\end{proof}

\begin{lemma}[Second order behavior of Huber-loss function]\label{lem:second-order-behavior-huber}
	Let $h > 0$.
	For all $\eta, \delta\in \R$, and all $0\le \tau\le h$,
	\begin{align*}
		f_h(\eta+\delta)-f_h(\eta)-f'_h(\eta)\cdot \delta\geq \frac{\delta^2}{2}\ind{\abs{\eta}\leq h-\tau}\cdot \ind{\Abs{\delta}\leq \tau}\,.
	\end{align*}
	\begin{proof}
			Consider $g:\R \to \R$ defined as $g(t) = f_h\Paren{\eta + t\cdot \delta}$.  
			Note that for all $a,b\in \R$,
			\[
			f'_h(\eta + b\delta) - f'_h(\eta + a\delta) = \int_{\eta + a\delta}^{\eta + b\delta} \ind{\abs{x}\le h} \,\mathrm dx\,.
			\]
			Changing the variable $x = \eta + t\delta$, we get
			\[
			g'(b)-g'(a) = \delta^2\int_a^b \ind{\abs{\eta + t\delta}\le h} \,\mathrm dt\,.
			\]
			By \cref{lem:integration-by-parts},
			\[
			\delta^2\int_{0}^1 \ind{\abs{\eta + t\delta}\le h} \cdot (1-t)  \, \mathrm dt = -g'(0) + g(1)-g(0)\,.
			\]
			Note that $g(0) = f_h(\eta)$, $g(1) = f_h(\eta+\delta)$ and $g'(0) = \delta f'_h(\eta)$. 
			Since for all $0 \le \tau\le h$, $\ind{\abs{\eta + t\delta}\le h} \ge \ind{\abs{\eta}\leq h-\tau}\cdot \ind{\Abs{\delta}\leq \tau}$ and $\int_0^1 (1-t)\, \mathrm dt = 1/2$, we get the desired bound.
	\end{proof}
\end{lemma}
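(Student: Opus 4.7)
The plan is to derive an exact integral representation for $f_h(\eta+\delta) - f_h(\eta) - \delta f'_h(\eta)$ and then lower-bound the integrand under the hypothesis via the triangle inequality. Although $f_h$ fails to be $C^2$ at $\pm h$, its derivative $f'_h$ is absolutely continuous with (a.e.) derivative equal to $\ind{\abs{\cdot}\le h}$, which is precisely the setting in which \cref{lem:integration-by-parts} applies.

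Concretely, I would introduce $\phi(t) := f_h(\eta + t\delta)$. Then $\phi$ is absolutely continuous with $\phi'(t) = \delta\, f'_h(\eta + t\delta)$, and $\phi'$ is itself absolutely continuous with
\[
\phi'(b) - \phi'(a) \;=\; \delta^2 \int_a^b \ind{\abs{\eta + s\delta} \le h}\,\mathrm{d}s
\]
for every $a,b\in \R$. Applying \cref{lem:integration-by-parts} on $[0,1]$ with $F = \phi'$ (so $f(t) = \delta^2 \ind{\abs{\eta + t\delta}\le h}$) and $G(t) = 1-t$ (so $g(t) = -1$) gives
\[
\int_0^1 \delta^2 \ind{\abs{\eta + t\delta}\le h}(1-t)\,\mathrm{d}t \;=\; -\phi'(0) + \int_0^1 \phi'(t)\,\mathrm{d}t \;=\; \phi(1)-\phi(0)-\phi'(0),
\]
i.e.\ the Taylor-remainder identity
\[
f_h(\eta+\delta) - f_h(\eta) - \delta\, f'_h(\eta) \;=\; \delta^2 \int_0^1 (1-t)\,\ind{\abs{\eta + t\delta} \le h}\,\mathrm{d}t.
\]

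To finish, note that the right-hand side of this identity is automatically nonnegative. On the event $\Set{\abs{\eta} \le h - \tau} \cap \Set{\abs{\delta} \le \tau}$, the triangle inequality gives $\abs{\eta + t\delta} \le (h-\tau) + t\tau \le h$ for every $t\in[0,1]$, so the indicator equals one throughout the integration interval; since $\int_0^1 (1-t)\,\mathrm{d}t = 1/2$, this yields the bound $\tfrac{\delta^2}{2}$. Off that event, the target right-hand side is zero and the inequality is trivial by the nonnegativity just noted. The only subtle point is justifying the second-order Taylor identity despite the kinks of $f_h$ at $\pm h$, but that is precisely what \cref{lem:integration-by-parts} is packaged to handle, so there is no substantive obstacle beyond bookkeeping.
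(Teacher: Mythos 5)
Your proposal is correct and follows essentially the same route as the paper: the same auxiliary function $t\mapsto f_h(\eta+t\delta)$, the same application of \cref{lem:integration-by-parts} to obtain the Taylor-remainder identity $\delta^2\int_0^1(1-t)\ind{\abs{\eta+t\delta}\le h}\,\mathrm{d}t$, and the same pointwise lower bound on the indicator via $\abs{\eta+t\delta}\le(h-\tau)+t\tau\le h$. The only cosmetic difference is that you spell out the choice of $F$ and $G$ in the integration-by-parts step, which the paper leaves implicit.
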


\section{Tools for Probabilistic Analysis}
\label{sec:random-matrices-bounds}
This section contains some technical results needed for the proofs in the main body of the paper.

\begin{fact}[Chernoff's inequality, \cite{vershynin_2018}]\label{fact:chernoff}
	Let $\bm \zeta_1,\ldots, \bm \zeta_n$ 
	be independent Bernoulli random variables such that 
	$\Pr\Paren{\bm \zeta_i = 1} = \Pr\Paren{\bm\zeta_i = 0} = p$. 
	Then for every $\Delta > 0$,
	\[
	\Pr\Paren{\sum_{i=1}^n \bm\zeta_i \ge pn\Paren{1+ \Delta} } 
	\le 
	\Paren{ \frac{e^{-\Delta} }{ \Paren{1+\Delta}^{1+\Delta} } }^{pn}\,.
	\]
	and for every $\Delta \in (0,1)$,
	\[
	\Pr\Paren{\sum_{i=1}^n \bm\zeta_i \le pn\Paren{1- \Delta} } 
	\le 
	\Paren{ \frac{e^{-\Delta} }{ \Paren{1-\Delta}^{1-\Delta} } }^{pn}\,.
	\]
\end{fact}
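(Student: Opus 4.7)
The plan is to invoke the classical Chernoff--Cram\'er exponential-moment method. For the upper tail, I would fix a parameter $\lambda > 0$ and apply Markov's inequality to the nonnegative random variable $\exp\bigl(\lambda \sum_i \bm\zeta_i\bigr)$, yielding
\[
\Pr\Paren{\sum_{i=1}^n \bm\zeta_i \ge pn(1+\Delta)} \le e^{-\lambda p n(1+\Delta)} \cdot \E\exp\Paren{\lambda\sum_{i=1}^n\bm\zeta_i}.
\]
Independence of $\bm\zeta_1,\ldots,\bm\zeta_n$ lets the moment generating function factorize as $\prod_i \E e^{\lambda\bm\zeta_i} = (1-p+pe^\lambda)^n$, which the elementary inequality $1+x\le e^x$ further bounds above by $\exp\bigl(pn(e^\lambda-1)\bigr)$.

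Next, I would optimize the resulting exponent $pn(e^\lambda-1)-\lambda pn(1+\Delta)$ over $\lambda > 0$. Differentiating in $\lambda$ and setting the derivative to zero yields the minimizer $\lambda^\ast = \log(1+\Delta)$, which is strictly positive since $\Delta > 0$. Substituting $\lambda^\ast$ back produces
\[
\Pr\Paren{\sum_{i=1}^n\bm\zeta_i \ge pn(1+\Delta)} \le \exp\Paren{pn\bigl(\Delta - (1+\Delta)\log(1+\Delta)\bigr)} = \Paren{\frac{e^{\Delta}}{(1+\Delta)^{1+\Delta}}}^{pn},
\]
which is the claimed upper-tail bound (the statement's $e^{-\Delta}$ appears to be a typo for $e^{\Delta}$).

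For the lower tail I would repeat the argument with $\lambda < 0$, or equivalently apply the same reasoning to the independent Bernoulli variables $1-\bm\zeta_i$. The MGF bound now reads $(1-p+pe^{-\lambda})^n \le \exp\bigl(pn(e^{-\lambda}-1)\bigr)$ for $\lambda > 0$, and optimizing at $\lambda^\ast = -\log(1-\Delta) > 0$ (well-defined thanks to $\Delta \in (0,1)$) delivers the symmetric bound with $(1-\Delta)$ in place of $(1+\Delta)$. The proof presents no genuine obstacle: the result is entirely classical, and the only mildly delicate points are verifying that the stationary points are minima (immediate from convexity of the log-MGF in $\lambda$) and keeping the sign of $\lambda$ consistent across the two tails so that $1+x\le e^x$ is applied in the correct direction each time.
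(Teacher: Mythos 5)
Your proof is correct and is the standard Cram\'er--Chernoff argument: Markov's inequality applied to $\exp(\lambda\sum_i\bm\zeta_i)$, factorization of the MGF by independence, the bound $1+x\le e^x$, and optimization over $\lambda$, with the lower tail handled by taking $\lambda<0$ (equivalently $\lambda^\ast=-\log(1-\Delta)>0$). The paper gives no proof of this fact at all --- it is stated as a citation to Vershynin's book --- and the proof in that reference is precisely the one you reproduce, so there is no methodological divergence to speak of. You are also right that the printed upper-tail bound, with $e^{-\Delta}$ in the numerator, is a typo: as literally written it is false (take $n=1$, $p=1/2$, $\Delta=1$: the left-hand side is $1/2$ while the printed bound is $(4e)^{-1/2}\approx 0.30$), and your optimization at $\lambda^\ast=\log(1+\Delta)$ yields the correct numerator $e^{\Delta}$; the lower-tail bound is correct as stated and your calculation recovers it exactly. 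A further cosmetic point you could flag: the hypothesis $\Pr(\bm\zeta_i=1)=\Pr(\bm\zeta_i=0)=p$ literally forces $p=1/2$ and is presumably intended to read $\Pr(\bm\zeta_i=1)=p$ and $\Pr(\bm\zeta_i=0)=1-p$, which is what your argument (and the standard statement) uses.
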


\begin{fact}[Hoeffding's inequality, \cite{wainwright_2019}]\label{fact:hoeffding}
	Let $\bm z_1,\ldots, \bm z_n$
	be mutually independent random variables such that for each $i\in[n]$,
	$\bm z_i$ is supported on $\brac{-c_i, c_i}$ for some $c_i \ge 0$. 
	Then for all $t\ge 0$,
	\[
	\Pr\Paren{\sum_{i=1}^n \Paren{\bm z_i - \E \bm z_i} \ge t} 
	\le \exp\Paren{-\frac{t^2}{2\sum_{i=1}^n c_i^2}}\,,
	\]
	and
	\[
	\Pr\Paren{\Abs{\sum_{i=1}^n \Paren{\bm z_i - \E \bm z_i}} \ge t} 
	\le 2\exp\Paren{-\frac{t^2}{2\sum_{i=1}^n c_i^2}}\,.
	\]
\end{fact}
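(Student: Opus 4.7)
The plan is to prove Hoeffding's inequality via the classical Chernoff moment-generating-function argument. Set $\bm S := \sum_{i=1}^n (\bm z_i - \E \bm z_i)$. For any $\lambda > 0$, Markov's inequality applied to $e^{\lambda \bm S}$ gives
\begin{equation*}
\Pr(\bm S \geq t) \leq e^{-\lambda t}\, \E\!\left[e^{\lambda \bm S}\right] = e^{-\lambda t}\, \prod_{i=1}^n \E\!\left[e^{\lambda(\bm z_i - \E \bm z_i)}\right],
\end{equation*}
where the factorization uses independence of the $\bm z_i$. The goal then reduces to a uniform bound on each factor.

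The main technical step (and the key obstacle) is \emph{Hoeffding's lemma}: if $\bm w$ is a mean-zero random variable supported on $[-c,c]$, then $\E[e^{\lambda \bm w}] \leq e^{\lambda^2 c^2/2}$ for all $\lambda \in \R$. I would prove this by noting that for $x \in [-c,c]$, convexity of the exponential gives
\begin{equation*}
e^{\lambda x} \leq \frac{c-x}{2c}\, e^{-\lambda c} + \frac{c+x}{2c}\, e^{\lambda c}.
\end{equation*}
Taking expectations kills the linear term because $\E \bm w = 0$, so $\E[e^{\lambda \bm w}] \leq \cosh(\lambda c)$. One then checks by comparing Taylor series term-by-term (using $(2k)! \geq 2^k k!$) that $\cosh(u) \leq e^{u^2/2}$ for all $u \in \R$, which yields $\E[e^{\lambda \bm w}] \leq e^{\lambda^2 c^2/2}$ as desired. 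Applying this to $\bm w = \bm z_i - \E \bm z_i$, which is mean-zero and supported on $[-2c_i, 2c_i]$, gives $\E[e^{\lambda(\bm z_i - \E \bm z_i)}] \leq e^{2\lambda^2 c_i^2}$; one can sharpen the constant to $\lambda^2 c_i^2/2$ by a more careful application of the lemma that uses only the diameter of the support (which equals $2c_i$ here, but a shifted version of the lemma gives the factor $1/2$).

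Combining these bounds yields $\Pr(\bm S \geq t) \leq \exp\!\left(-\lambda t + \tfrac{\lambda^2}{2} \sum_{i=1}^n (2c_i)^2 \cdot \tfrac{1}{4}\right) = \exp\!\left(-\lambda t + \tfrac{\lambda^2}{2}\sum_i c_i^2\right)$ after the sharpened constants. Optimizing over $\lambda > 0$ by choosing $\lambda = t/\sum_i c_i^2$ gives the stated one-sided tail bound $\exp\!\left(-\tfrac{t^2}{2\sum_i c_i^2}\right)$. The two-sided statement then follows by applying the one-sided bound also to the random variables $-\bm z_i$ (which are supported on $[-c_i,c_i]$ with the same mean structure) and taking a union bound, which costs only a factor of $2$ in the probability. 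No step requires any structure beyond independence and boundedness, so the argument is essentially self-contained once Hoeffding's lemma is in hand.
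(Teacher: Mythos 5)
The paper does not prove this statement at all: it is imported verbatim as a standard fact with a citation to the textbook \cite{wainwright_2019}, so there is no in-paper argument to compare against. Your proposal is the classical Chernoff moment-generating-function proof with Hoeffding's lemma, which is exactly the textbook route the citation points to, and it is correct in outline, including the correct final constant $\exp\bigl(-t^2/(2\sum_i c_i^2)\bigr)$ and the union-bound step for the two-sided version. The one place to be careful is the point you yourself flag: $\bm z_i - \E \bm z_i$ is generally \emph{not} supported on $[-c_i,c_i]$ but on a (possibly asymmetric) interval of length $2c_i$, so the symmetric-interval lemma you sketch via $\E e^{\lambda \bm w}\le \cosh(\lambda c)\le e^{\lambda^2c^2/2}$ does not apply directly; the convexity step for an asymmetric interval $[a,b]$ with mean zero gives $\E e^{\lambda \bm w}\le \frac{b}{b-a}e^{\lambda a}-\frac{a}{b-a}e^{\lambda b}$, and bounding this by $e^{\lambda^2(b-a)^2/8}$ requires the standard argument through $\psi(\lambda)=\log\E e^{\lambda \bm w}$ with $\psi(0)=\psi'(0)=0$ and $\psi''\le (b-a)^2/4$, rather than the term-by-term series comparison. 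Since you explicitly invoke the diameter-only (shifted) form of Hoeffding's lemma, which is a standard true statement, this is a gap in the sketched sub-proof rather than in the overall argument; filling it with the $\psi''$ bound makes the proof complete and self-contained.
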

\begin{fact}[Bernstein's inequality,
	\cite{wainwright_2019}]\label{fact:bernstein}
	Let $\bm z_1,\ldots, \bm z_n$
	be mutually independent random variables such that for each $i\in[n]$,
	$\bm z_i$ is supported on $\brac{-B, B}$ for some $B\ge 0$. 
	Then for all $t\ge 0$,
	\[
	\Pr\Paren{{\sum_{i=1}^n \Paren{\bm z_i - \E \bm z_i}} \ge t} 
	\le \exp\Paren{-\frac{t^2}{2\sum_{i=1}^n \E \bm z_i^2 + \frac{2Bt}{3}}}\,.
	\]
\end{fact}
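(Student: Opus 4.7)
The plan is to proceed by the standard Chernoff--Cram\'er (exponential moment) method. Writing $\bm w_i := \bm z_i - \E\bm z_i$ and $S := \sum_{i=1}^n \bm w_i$, I would start with Markov's inequality applied to $e^{\lambda S}$ for an arbitrary parameter $\lambda > 0$, so that
\[
\Pr\Paren{S \ge t} \le e^{-\lambda t}\, \E\Brac{e^{\lambda S}} = e^{-\lambda t} \prod_{i=1}^n \E\Brac{e^{\lambda \bm w_i}},
\]
where the factorization uses independence of the $\bm w_i$. The whole argument then reduces to controlling each moment generating function $\E e^{\lambda \bm w_i}$ and optimizing over $\lambda$.

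The key technical step is the MGF bound for a centered random variable $\bm w$ supported on $[-2B, 2B]$ (since $\abs{\bm w_i} \le 2B$ given $\abs{\bm z_i}\le B$). Using the Taylor expansion $e^x = 1 + x + \sum_{k\ge 2} x^k/k!$ and the elementary inequality $\Abs{\bm w}^{k-2} \le (2B)^{k-2}$ for $k \ge 2$, one obtains
\[
\E\Brac{e^{\lambda \bm w}} \le 1 + \sum_{k\ge 2} \frac{\lambda^k \E \bm w^2 \cdot (2B)^{k-2}}{k!} \le 1 + \frac{\lambda^2 \E\bm w^2}{2}\cdot \frac{1}{1 - 2B\lambda/3},
\]
valid for $\lambda < 3/(2B)$, after comparing $k!$ to $2\cdot 3^{k-2}$. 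Since $\E \bm w^2 = \Var(\bm z_i) \le \E \bm z_i^2$, and $1+x \le e^x$, this yields $\E e^{\lambda \bm w_i} \le \exp\!\big(\tfrac{\lambda^2 \E \bm z_i^2}{2(1-2B\lambda/3)}\big)$, though to match the form in the statement (with $2Bt/3$ in the denominator rather than $4Bt/3$) one should work with the slightly sharper version of the bound that uses $\abs{\bm w_i}\le B$ after a symmetrization-type argument or a direct truncation --- this is the place where one has to be careful about constants.

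Multiplying over $i$ gives
\[
\Pr\Paren{S \ge t} \le \exp\!\Paren{-\lambda t + \frac{\lambda^2 \sigma^2}{2(1 - B\lambda/3)}}, \qquad \sigma^2 := \sum_{i=1}^n \E \bm z_i^2.
\]
Finally, I would choose $\lambda = t/\Paren{\sigma^2 + Bt/3}$, which lies in the admissible range, and simplify: the exponent becomes $-t^2 / \Paren{2\sigma^2 + 2Bt/3}$, matching the claimed bound exactly.

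The main obstacle is the constant $2B/3$ in the denominator of the MGF bound. The naive Taylor-series approach gives $2B/3$ using $\abs{\bm w_i}\le 2B$, which would produce $4Bt/3$ rather than $2Bt/3$ in the final answer. To get the stated constant, I would either replace the crude bound $\abs{\bm w_i}\le 2B$ by a centered bound using that $\bm z_i \in [-B,B]$ (so $\Var(\bm z_i)$ can absorb a factor of $4$), or follow the Bennett--Bernstein derivation where one bounds $\phi(x) := (e^x - 1 - x)/x^2$ on $[-B\lambda, B\lambda]$ rather than crudely Taylor expanding. Once the MGF bound is in the correct form, the rest is a routine one-parameter optimization.
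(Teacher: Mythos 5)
A preliminary remark: the paper does not prove this fact at all; it is quoted from the literature (\cite{wainwright_2019}), so there is no in-paper argument to match your proposal against, and the comparison has to be with a correct derivation of the \emph{stated} form, in which the denominator involves the raw second moments $\E \bm z_i^2$ and the constant $B$ bounding the \emph{uncentered} variables.

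Your Chernoff skeleton and the final optimization $\lambda = t/\Paren{\sigma^2 + Bt/3}$ are fine, but the one step that carries all the content of this particular statement --- the MGF bound $\E e^{\lambda(\bm z_i - \E \bm z_i)} \le \exp\Paren{\tfrac{\lambda^2 \E \bm z_i^2}{2(1 - B\lambda/3)}}$ with constant $B$ rather than $2B$ and with $\E \bm z_i^2$ rather than $\Var(\bm z_i)$ --- is exactly what you do not establish; you flag it and offer two fixes, neither carried out. The first fix (``$\Var(\bm z_i)$ can absorb a factor of $4$'') does not work as stated: when $\E \bm z_i = 0$ there is no slack between $\E \bm z_i^2$ and $\Var(\bm z_i)$ to absorb anything (the issue then disappears for a different reason, namely $\Abs{\bm w_i}\le B$), and when $\E \bm z_i \neq 0$ the trade-off between the enlarged range $B + \Abs{\E \bm z_i}$ and the slack $(\E \bm z_i)^2$ needs a genuine argument, which the crude comparison $k!\ge 2\cdot 3^{k-2}$ applied to $\Abs{\bm w_i}\le 2B$ does not give (already the third-moment term is problematic). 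The second fix is the right route, but the crucial detail is missing from your sketch: one applies the monotonicity of $\phi(x) = (e^x - 1 - x)/x^2$ to the \emph{uncentered} variable, using only the one-sided bound $\bm z_i \le B$, to get $e^{\lambda \bm z_i} \le 1 + \lambda \bm z_i + \frac{\bm z_i^2}{B^2}\,\phi(\lambda B)$; taking expectations, multiplying by $e^{-\lambda \E \bm z_i}$, and using $1+u \le e^u$ absorbs the centering term and leaves precisely $\exp\Paren{\frac{\E \bm z_i^2}{B^2}\,\phi(\lambda B)}$. This is why the raw second moment, not the variance, appears in the statement and why the constant is $B$ and not $2B$ (this is the classical one-sided Bennett--Bernstein argument, e.g.\ Boucheron--Lugosi--Massart). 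With that lemma in hand, your remaining computation --- the bound $\phi(u)\le \frac{u^2}{2(1-u/3)}$ for $0\le u<3$ and your choice of $\lambda$ --- goes through verbatim and yields the claimed exponent $-\,t^2/\Paren{2\sum_{i=1}^n \E \bm z_i^2 + \tfrac{2Bt}{3}}$.
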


\begin{fact}[Subgaussian maxima, \cite{wainwright_2019}]\label{fact:max-norm-subgaussian-entries}
	Let $d\ge 2$ be an integer and let $\bm z$ be a $d$-dimensional random vector with zero mean $\sigma$-subgaussian entires. Then
	\[
	\E \norm{\bm z}_{\max} \le2\sigma\sqrt{\log d}\,. 
	\]
\end{fact}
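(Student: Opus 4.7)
The plan is to apply the standard exponential-moment (Chernoff/Jensen) argument for maxima of subgaussian variables, together with a doubling trick to pass from $\max_i \bm z_i$ to $\max_i |\bm z_i|$. Recall that a zero-mean random variable $X$ is $\sigma$-subgaussian if $\E e^{\lambda X} \le e^{\lambda^2 \sigma^2/2}$ for every $\lambda \in \R$; in particular the MGF bound is symmetric in $\lambda$, so $-X$ is also $\sigma$-subgaussian.

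First I would bound $\E \max_{i \in [d]} \bm z_i$. For any $\lambda > 0$, by Jensen's inequality applied to $x \mapsto e^{\lambda x}$, followed by a union-type bound on the maximum,
\[
\exp\!\bigl(\lambda\, \E \max_{i \in [d]} \bm z_i\bigr) \;\le\; \E \exp\!\bigl(\lambda \max_{i \in [d]} \bm z_i\bigr) \;=\; \E \max_{i \in [d]} e^{\lambda \bm z_i} \;\le\; \sum_{i=1}^d \E e^{\lambda \bm z_i} \;\le\; d\, e^{\lambda^2 \sigma^2 / 2}.
\]
Taking logarithms and dividing by $\lambda$ gives $\E \max_i \bm z_i \le \frac{\log d}{\lambda} + \frac{\lambda \sigma^2}{2}$, and optimizing at $\lambda = \sqrt{2\log d}/\sigma$ (valid since $d \ge 2$) yields $\E \max_i \bm z_i \le \sigma \sqrt{2 \log d}$.

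Next, I would pass to $\|\bm z\|_{\max}$ by the identity
\[
\|\bm z\|_{\max} \;=\; \max_{i\in[d]} |\bm z_i| \;=\; \max\{\bm z_1, \ldots, \bm z_d, -\bm z_1, \ldots, -\bm z_d\}.
\]
Applying the previous bound to this collection of $2d$ zero-mean $\sigma$-subgaussian variables gives $\E \|\bm z\|_{\max} \le \sigma \sqrt{2 \log(2d)}$. Finally, since $d \ge 2$ we have $\log(2d) = \log 2 + \log d \le 2\log d$, so $\sqrt{2\log(2d)} \le 2\sqrt{\log d}$, which yields the claimed inequality $\E \|\bm z\|_{\max} \le 2\sigma \sqrt{\log d}$.

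There is no real obstacle here; the only mild subtlety is the constant. A naive application of the one-sided bound followed by $\|\bm z\|_{\max} \le \max_i \bm z_i + \max_i(-\bm z_i)$ would cost a factor of $2$, giving $2\sqrt{2}\,\sigma\sqrt{\log d}$ rather than $2\sigma\sqrt{\log d}$. The doubling trick above, combined with the hypothesis $d \ge 2$ to absorb the $\log 2$ term into $\log d$, is what delivers precisely the stated constant.
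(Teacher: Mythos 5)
Your proof is correct, and the paper itself gives no proof of this fact --- it is simply cited to \cite{wainwright_2019}, where the standard argument is exactly the exponential-moment bound you use. Your write-up (one-sided MGF bound, doubling over $\{\bm z_i,-\bm z_i\}$, and absorbing $\log 2$ into $\log d$ via $d\ge 2$ to get the constant $2$) is the canonical derivation and fully justifies the stated inequality.
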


\begin{fact}[Lipschitz functions of Gaussian vectors, \cite{wainwright_2019}]\label{fact:lipschitz-gaussian}
Let $\bm g \sim N(0,1)^m$ for some $m\in \N$ and let $F:\R^m\to \R$ be $L$-Lipschitz with respect to Euclidean norm, where $L>0$. Then for all $t\ge 0$,
\[
\Pr\Brac{\Abs{F(\bm g) - \E F(\bm g)} \ge t} \le 2\exp\Paren{-\frac{t^2}{2L^2}}\,.
\]
\end{fact}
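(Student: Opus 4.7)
The plan is to prove this classical Gaussian concentration inequality via Herbst's argument, using the Gaussian logarithmic Sobolev inequality of Gross as the main analytic ingredient. First I would reduce to the case that $F$ is smooth (say $C^\infty$) by standard mollification: convolving $F$ with a narrow Gaussian density preserves the Lipschitz constant $L$, and the resulting smooth functions converge to $F$ pointwise with the means $\E F_\epsilon(\bm g)$ converging to $\E F(\bm g)$, so a concentration bound that is uniform in the smoothing parameter will transfer to $F$ in the limit.

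Once $F$ is smooth, I would invoke Gross's log-Sobolev inequality: for any smooth $f\colon\R^m\to\R$ with $\int f^2\,d\gamma>0$,
\[
\int f^2\log f^2\,d\gamma - \Paren{\int f^2\,d\gamma}\log\Paren{\int f^2\,d\gamma} \le 2\int\norm{\nabla f}^2\,d\gamma,
\]
where $\gamma$ is the standard Gaussian measure on $\R^m$. Applying this to $f=e^{\lambda F/2}$ for $\lambda>0$, and using $\norm{\nabla F}\le L$, the right-hand side is at most $\tfrac{\lambda^2 L^2}{2}\,M(\lambda)$, where $M(\lambda):=\E e^{\lambda F(\bm g)}$. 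Writing $\varphi(\lambda):=\log M(\lambda)$ and noting $\varphi'(\lambda)=M'(\lambda)/M(\lambda)$, the inequality rearranges to
\[
\lambda\varphi'(\lambda)-\varphi(\lambda)\le \frac{\lambda^2 L^2}{2},
\qquad\text{i.e.,}\qquad
\frac{d}{d\lambda}\Paren{\frac{\varphi(\lambda)}{\lambda}}\le \frac{L^2}{2}.
\]

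Integrating from $0$ to $\lambda$ and using $\lim_{\lambda\to 0^+}\varphi(\lambda)/\lambda=\varphi'(0)=\E F(\bm g)$ yields $\varphi(\lambda)\le \lambda\,\E F(\bm g)+\tfrac{\lambda^2 L^2}{2}$, so $F(\bm g)-\E F(\bm g)$ is sub-Gaussian with parameter $L$. A standard Chernoff bound $\Pr[F(\bm g)-\E F(\bm g)\ge t]\le e^{-\lambda t}\E e^{\lambda(F(\bm g)-\E F(\bm g))}$, optimized with $\lambda=t/L^2$, gives the one-sided tail $\exp(-t^2/(2L^2))$. Since $-F$ is also $L$-Lipschitz, applying the same bound to it and taking a union bound produces the factor of $2$ in the two-sided statement.

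The main obstacle is the careful handling of the ODE at $\lambda=0$ (so that the integration of $(\varphi/\lambda)'$ is legitimate) and the justification for differentiating under the integral defining $\varphi'$; both reduce to verifying that $M(\lambda)<\infty$ for all $\lambda\in\R$, which follows from the Lipschitz property of $F$ together with Gaussian integrability (so $|F(\bm g)|$ has finite exponential moments of every order). The deeper ingredient, Gross's log-Sobolev inequality itself, would be cited from the standard references rather than proved here.
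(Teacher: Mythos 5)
Your proposal is a correct and complete proof strategy for this fact. Note, however, that the paper does not prove this statement at all: it is listed in the appendix as an imported tool and cited directly from \cite{wainwright_2019}, so there is no in-paper argument to compare against. Your Herbst-argument route (mollify to reduce to smooth $F$, apply Gross's log-Sobolev inequality to $e^{\lambda F/2}$, integrate the differential inequality $\frac{d}{d\lambda}(\varphi(\lambda)/\lambda)\le L^2/2$ from $0$, then Chernoff with $\lambda=t/L^2$ and a union bound over $\pm F$) is one of the standard proofs and yields the sharp sub-Gaussian parameter $L$, matching the stated constant $2\exp(-t^2/(2L^2))$; the technical caveats you flag (finiteness of $M(\lambda)$, differentiation under the integral, the limit $\varphi(\lambda)/\lambda\to\E F(\bm g)$ as $\lambda\to 0^+$) are exactly the right ones and are routine given the Lipschitz growth bound $|F(x)|\le |F(0)|+L\norm{x}$. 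The cited source proves the same theorem by a different device (an interpolation/smart-path comparison argument rather than the log-Sobolev route), so your approach trades that self-contained analytic computation for citing Gross's inequality as a black box; either is acceptable, and for the purposes of this paper the fact is used only as a cited tool, so no proof is required in the text.
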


\begin{fact}[Spectral norm of Gaussian matrices, \cite{wainwright_2019}]\label{fact:spectral-norm-gaussian}
Let $\bm W\sim N(0,1)^{n\times d}$. Then
\[
\E \Norm{\bm W} \le \sqrt{n} + \sqrt{d}\,.
\]
Moreover, for all $t\ge 0$,
\[
\Pr\Brac{\Norm{\bm W} \ge \sqrt{n} + \sqrt{d} + t} \le 2\exp\paren{-t^2/2}\,.
\]
\end{fact}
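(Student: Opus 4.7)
The plan is to separate the two claims: the expectation bound $\E\Norm{\bm W} \le \sqrt{n}+\sqrt{d}$ is a consequence of a Gaussian comparison inequality (Sudakov--Fernique), while the deviation bound follows by Gaussian concentration of Lipschitz functions, which in the paper already appears as \cref{fact:lipschitz-gaussian}.

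For the expectation, I would start from the variational characterization $\Norm{\bm W} = \sup_{u\in S^{d-1},\, v\in S^{n-1}} \transpose{v}\bm W u$, and set $\bm X_{u,v} := \transpose{v}\bm W u$ and $\bm Y_{u,v} := \transpose{\bm g}u + \transpose{\bm h}v$ where $\bm g \sim N(0,\Id_d)$ and $\bm h \sim N(0,\Id_n)$ are independent. Both are centered Gaussian processes indexed by $\cP := S^{d-1}\times S^{n-1}$. A direct calculation gives $\E(\bm X_{u,v}-\bm X_{\tilde u,\tilde v})^2 = \snormf{u\transpose{v}-\tilde u\transpose{\tilde v}}$, and by the same elementary identity already used in \cref{sec:oblivious-gaussian-design} (in the proof for the Gaussian-design sparse regression model, where exactly this comparison is carried out), this is bounded above by $\snorm{u-\tilde u}+\snorm{v-\tilde v} = \E(\bm Y_{u,v}-\bm Y_{\tilde u,\tilde v})^2$. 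Sudakov--Fernique (\cref{fact:sudakov-fernique}) then yields $\E\sup_{\cP}\bm X_{u,v} \le \E\sup_{\cP}\bm Y_{u,v} = \E\norm{\bm g}+\E\norm{\bm h} \le \sqrt{d}+\sqrt{n}$ by Jensen's inequality applied to $\norm{\bm g}$ and $\norm{\bm h}$.

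For the tail bound, I would observe that the map $A \mapsto \Norm{A}$ from $\R^{n\times d}$ (equipped with the Frobenius/Euclidean inner product) to $\R$ is $1$-Lipschitz, since for any $A,B$ and any unit vectors $u,v$, $\transpose{v}(A-B)u \le \Norm{A-B} \le \Normf{A-B}$, so taking suprema gives $\Norm{A}-\Norm{B} \le \Normf{A-B}$. Viewing $\bm W$ as a standard Gaussian vector in $\R^{nd}$, \cref{fact:lipschitz-gaussian} with $L=1$ gives
\[
\Pr\Brac{\Abs{\Norm{\bm W}-\E\Norm{\bm W}} \ge t} \le 2\exp(-t^2/2),
\]
and combining this with the expectation bound finishes the proof.

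The main obstacle is really just the Sudakov--Fernique comparison step: verifying the increment inequality $\snormf{u\transpose{v}-\tilde u\transpose{\tilde v}} \le \snorm{u-\tilde u}+\snorm{v-\tilde v}$ on the unit sphere. This is the classical Chevet-type trick and, as noted above, the paper has already executed exactly this computation in the proof of the auxiliary bound on $\cM_s(\bm X)$ in \cref{sec:oblivious-gaussian-design}, so one can essentially quote that argument. Every other step (Jensen for $\E\norm{\bm g}$, $1$-Lipschitzness of the spectral norm, and invoking \cref{fact:lipschitz-gaussian}) is immediate.
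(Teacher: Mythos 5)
Your proof is correct: the Sudakov--Fernique comparison with the process $\transpose{\bm g}u+\transpose{\bm h}v$ (using the increment inequality $\normf{u\transpose{v}-\tilde u\transpose{\tilde v}}^2\le \snorm{u-\tilde u}+\snorm{v-\tilde v}$ for unit vectors, which indeed is the same computation carried out in the proof of \cref{thm:approx-k-sparse-concentration}) gives the expectation bound, and $1$-Lipschitzness of the spectral norm together with \cref{fact:lipschitz-gaussian} gives the tail. The paper itself does not prove this fact but quotes it from \cite{wainwright_2019}, and your argument is essentially the standard proof given there, so there is nothing further to add.
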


\begin{fact}[Spectral norm of matrices with bounded independent zero-mean entries, \cite{random_matrices_rudelson_vershynin}]\label{fact:spectral-norm-bounded-entries}
	Let $\bm M$ be an $n$-by-$n$ random matrix with independent zero-mean entries  $\bm M_{ij}$ supported on $[-1,1]$. Then
	\[
	\E \Norm{\bm M} \le \Paren{2+o(1)}\sqrt{n}
	\]
	as $n\to \infty$.
	Moreover, for all $t\ge 0$,
	\[
	\Pr\Brac{\Norm{\bm M} \ge \E \Norm{\bm M} + \sqrt{2\pi} + t} \le 2\exp\paren{-t^2/2}\,.
	\]
\end{fact}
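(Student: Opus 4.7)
The plan is to treat the two claims separately. For the expectation bound, I would symmetrize by passing to the $2n\times 2n$ Hermitian dilation
\[
\bm S \;=\; \begin{pmatrix} 0 & \bm M \\ \bm M^{\mkern-1.5mu\mathsf{T}} & 0 \end{pmatrix}\mcom
\]
whose nonzero eigenvalues are $\pm\sigma_i(\bm M)$, so $\Norm{\bm S}=\Norm{\bm M}$ and $\bm S$ has independent (up to symmetry), zero-mean, $[-1,1]$-bounded entries. I would then estimate $\E\Norm{\bm M}^{2k}=\E\Norm{\bm S}^{2k}\le \E\Tr(\bm S^{2k})$ by the moment/trace method: expand the trace as a sum over closed walks $i_0\to i_1\to\cdots\to i_{2k}=i_0$ on the complete bipartite graph, and use independence together with the mean-zero assumption to discard any walk in which some edge is traversed an odd number of times. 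The dominant contribution, as in the standard Wigner argument, comes from walks that are non-crossing perfect pair matchings of the $2k$ edges, counted by Catalan numbers $C_k\sim 4^k/(k^{3/2}\sqrt{\pi})$, and since each surviving walk contributes at most $1\cdot n^{k+1}$ in magnitude (variances are bounded by $1$), this yields $\E\Tr(\bm S^{2k})\le (1+o(1))\,2n\cdot C_k\cdot(2n)^{k}$. Choosing $k=k(n)\to\infty$ sufficiently slowly (say $k=\lfloor\log n\rfloor$) and taking the $2k$-th root gives $\Paren{\E\Norm{\bm M}^{2k}}^{1/(2k)}\le (2+o(1))\sqrt{n}$, and hence $\E\Norm{\bm M}\le (2+o(1))\sqrt{n}$ by Jensen.

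For the deviation inequality I would use the Talagrand-type concentration for convex Lipschitz functions of bounded independent coordinates. Viewed as a function $F\colon\R^{n\times n}\to\R$ of the $n^{2}$ independent entries, $F(M):=\Norm{M}$ is convex (norms are convex) and $1$-Lipschitz with respect to the Frobenius (Euclidean) norm, because $\Abs{\Norm{A}-\Norm{B}}\le \Norm{A-B}\le \Normf{A-B}$. Since each entry lies in $[-1,1]$, Talagrand's convex concentration inequality yields a subgaussian tail bound of the form $\Pr\Brac{\Abs{\Norm{\bm M}-\mathrm{Med}(\Norm{\bm M})}\ge t}\le 2\exp(-t^{2}/2)$. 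Converting the median to the mean costs at most $\Abs{\E\Norm{\bm M}-\mathrm{Med}(\Norm{\bm M})}\le \sqrt{2\pi}$ (this follows from integrating the one-sided subgaussian tail: $\int_{0}^{\infty}2e^{-t^{2}/2}\,dt=\sqrt{2\pi}$), which is exactly the explicit additive constant in the stated inequality.

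\paragraph{Where the difficulty lies.}
The genuinely hard step is the sharp constant $2$ in the expectation bound. Only $\cO(\sqrt{n})$ is easy, for instance from a Latała/Seginer-style bound $\E\Norm{\bm M}\lesssim \max_{i}\sqrt{\textstyle\sum_{j}\E \bm M_{ij}^{2}}+\max_{j}\sqrt{\textstyle\sum_{i}\E \bm M_{ij}^{2}}+(\textstyle\sum_{ij}\E \bm M_{ij}^{4})^{1/4}$, but pinning down the universal prefactor $2$ requires the combinatorial accounting described above and a careful universality argument: since the $\bm M_{ij}$ are not identically distributed, one controls higher-moment contributions by the boundedness $\abs{\bm M_{ij}}\le 1$ and the fact that only pair-matching walks contribute to leading order (all higher-multiplicity walks are lower-order), exactly as in the proof of the Bai--Yin theorem for the largest singular value of an \iid rectangular matrix. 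The concentration step, by contrast, is essentially an off-the-shelf application of Talagrand's inequality once convexity and Lipschitz continuity are noted.
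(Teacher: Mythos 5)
The paper never proves this statement: it is imported verbatim as a Fact with a citation to \cite{random_matrices_rudelson_vershynin}, so there is no in-paper argument to compare against. Your proposal reconstructs a proof along the lines the literature actually uses (trace/moment method with a Hermitian dilation for the expectation, Talagrand-type convex concentration plus a median-to-mean shift for the tail), so the outline is the right one; the issues are in whether your bookkeeping actually delivers the stated constants.

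Two concrete gaps. First, the expectation bound as written does not give the constant $2$. Bounding $\E \Tr(\bm S^{2k})$ by $(1+o(1))\,2n\cdot C_k\cdot (2n)^k$ ignores the bipartite structure of the dilation: after taking $2k$-th roots this yields $(2+o(1))\sqrt{2n}$, not $(2+o(1))\sqrt{n}$. You must use that closed walks on $\bm S$ alternate between the two sides, i.e.\ in effect bound $\E \Tr\bigl((\bm M \bm M^{\mathsf T})^k\bigr)\le (1+o(1))\,C_k\, n^{k+1}$, which is the Bai--Yin count. Moreover $k=\lfloor \log n\rfloor$ is too small: the factor $n^{1/(2k)}$ is then $e^{1/2}$, a lost constant; you need $k=\omega(\log n)$, and for such $k$ the contribution of walks that are not pair matchings has to be controlled by a F\"uredi--Koml\'os/Vu-type argument (doable because the entries are bounded by $1$, but this is precisely the part that cannot be waved off as ``lower order''). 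Second, the deviation step is not off-the-shelf with these constants: the generic convex-concentration inequality for independent coordinates ranging in $[-1,1]$ (an interval of length $2$) gives a tail of the shape $4\exp(-t^2/16)$ around the median, not $2\exp(-t^2/2)$; the constants $\sqrt{2\pi}$ and $2e^{-t^2/2}$ in the Fact are those of the sharper statement in the cited source. Your median-to-mean conversion $\abs{\E\Norm{\bm M}-\mathrm{Med}(\Norm{\bm M})}\le \int_0^\infty 2e^{-t^2/2}\,\mathrm dt=\sqrt{2\pi}$ is exactly the right mechanism, but it presupposes the $e^{-t^2/2}$ tail you have not yet established. If the aim is only to justify the Fact as used in the paper, citing the reference (as the paper does) is the intended move; a self-contained proof with these exact constants needs the finer versions of both steps.
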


\begin{fact}[Sudakov--Fernique theorem, \cite{gaussian_processes_adler}]\label{fact:sudakov-fernique}
Let $\Theta$ be a compact subset of $\R^m$, where $m\in \N$. 
Let $\bm W_\theta$ and $\bm Z_\theta$ be real-valued sample-continuous zero-mean Gaussian processes indexed by elements of $\Theta$. 
Suppose that $\forall \theta,\theta'\in \Theta$,
$\E \Paren{\bm W_\theta - \bm W_{\theta'}}^2\le \E \Paren{\bm Z_\theta - \bm Z_{\theta'}}^2$. Then
\[
\E\sup_{\theta\in \Theta}\bm W_\theta \le \E\sup_{\theta\in \Theta}\bm Z_\theta\,.
\]
\end{fact}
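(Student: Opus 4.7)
The plan is to prove the Sudakov--Fernique comparison by Gaussian interpolation combined with a smooth-max approximation (Kahane's argument).

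First I would reduce to the case of a finite index set. Since $\Theta$ is compact and the processes are sample-continuous, one can pick countable dense subsets $\Theta_1 \subset \Theta_2 \subset \cdots \subset \Theta$ along which $\sup_{\theta \in \Theta_n} \bm W_\theta \to \sup_{\theta \in \Theta} \bm W_\theta$ almost surely, and similarly for $\bm Z$; monotone convergence (after subtracting a fixed base point to ensure the suprema are nonnegative) reduces the problem to finite $\Theta = \{1,\dots,N\}$. Replacing the joint law of $(\bm W,\bm Z)$ by the product of its two marginals---which affects neither one-dimensional expectation---I may then assume that the Gaussian vectors $\bm W, \bm Z \in \R^N$, with respective covariance matrices $K^W, K^Z$, are independent, and set $\bm X(t) := \sqrt{1-t}\,\bm W + \sqrt{t}\,\bm Z$ for $t \in [0,1]$, so that $\bm X(0)$ has the law of $\bm W$ and $\bm X(1)$ that of $\bm Z$.

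Next, approximate the maximum by the smooth log-sum-exp $\phi_\beta(x) := \beta^{-1}\log \sum_i e^{\beta x_i}$, which satisfies $\max_i x_i \le \phi_\beta(x) \le \max_i x_i + \beta^{-1}\log N$; it therefore suffices to show that $t \mapsto \E \phi_\beta(\bm X(t))$ is nondecreasing for every fixed $\beta > 0$, and then let $\beta \to \infty$. Differentiating under the expectation and applying Gaussian integration by parts (Stein's lemma) separately to the $\bm W$ and $\bm Z$ contributions---conditioning on the other, independent, vector each time---the $\sqrt{1-t}$ and $\sqrt{t}$ factors cancel and I obtain
\[
\tfrac{d}{dt}\E\phi_\beta(\bm X(t)) = \tfrac{1}{2}\sum_{i,j}(K^Z_{ij}-K^W_{ij})\,\E\,\partial^2_{ij}\phi_\beta(\bm X(t))\,.
\]
A direct computation gives $\partial^2_{ij}\phi_\beta(x) = \beta(p_i\delta_{ij} - p_ip_j)$ with $p_i := e^{\beta x_i}/\sum_k e^{\beta x_k}$; in particular $\partial^2_{ij}\phi_\beta \le 0$ for $i\ne j$, and the row-sum identity $\sum_j \partial^2_{ij}\phi_\beta = 0$ holds since $\sum_j p_j = 1$. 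Using the row-sum identity to replace $(K^Z_{ii}-K^W_{ii})\partial^2_{ii}\phi_\beta$ by $-\sum_{j\ne i}(K^Z_{ii}-K^W_{ii})\partial^2_{ij}\phi_\beta$ and then symmetrizing in $(i,j)$, the integrand becomes
\[
-\tfrac{1}{2}\sum_{i\ne j}\partial^2_{ij}\phi_\beta(\bm X(t))\cdot\bigl(\E(\bm Z_i-\bm Z_j)^2 - \E(\bm W_i-\bm W_j)^2\bigr),
\]
which is nonnegative term-by-term because the Hessian factor is $\le 0$ and the variance difference is $\ge 0$ by hypothesis. Hence $\E\phi_\beta(\bm W) \le \E\phi_\beta(\bm Z)$, and letting $\beta \to \infty$ concludes.

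The main obstacle is the algebraic rearrangement in the last step: Stein's identity by itself only exposes the covariance \emph{difference} $K^Z - K^W$, about which we have no direct hypothesis, whereas Sudakov--Fernique furnishes only control on the incremental variances $\E(\bm Z_i-\bm Z_j)^2 - \E(\bm W_i-\bm W_j)^2$. Converting between the two requires precisely the row-sum identity satisfied by the Hessian of $\phi_\beta$, together with the sign of its off-diagonal entries---this is what forces the choice of smoothing. The remaining pieces---reduction to finite $\Theta$ via sample-continuity, the smooth-max sandwich bound, and the justification for differentiating under the expectation---are routine.
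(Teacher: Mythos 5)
Your interpolation argument is correct: the reduction to a finite index set via sample-continuity and monotone convergence, the decoupling of $\bm W$ and $\bm Z$, the Stein-type differentiation of $t\mapsto \E\,\phi_\beta\bigl(\sqrt{1-t}\,\bm W+\sqrt{t}\,\bm Z\bigr)$, and the conversion of the covariance difference into incremental variances via the zero row sums and nonpositive off-diagonal entries of the Hessian of the log-sum-exp all check out. The paper itself offers no proof of this fact --- it is quoted as a known result with a citation to the Gaussian-processes literature --- and your argument is essentially the standard Kahane/Chatterjee smart-path proof found in those references, so there is nothing to compare beyond noting that your write-up is a sound self-contained substitute for the citation.
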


\begin{fact}[Sudakov Minoration, \cite{wainwright_2019}] \label{fact:sudakov-minoration}
	Let $\Set{\bm g_\theta\suchthat \theta\in \Theta}$ be a zero-mean Gaussian process indexed by elements of some non-empty set $\Theta$. Let $\rho:\Theta\times \Theta\rightarrow [0,\infty)$ be a (pseudo)metric
	$\rho(\theta,\theta'):=\Paren{\E\paren{\bm g_\theta-\bm g_{\theta'}}^2}^{1/2}$. Then
	\begin{align*}
		\E \sup_{\theta \in \cT} \bm g_\theta\geq \sup_{\eps > 0}\frac{\eps}{2}\sqrt{\log \Abs{\cN_{\eps, \rho}( \Theta)}} \,,
	\end{align*}
	where $\Abs{\cN_{\eps, \rho}( \Theta)}$ is the minimal size of $\eps$-net in $\Theta$ with respect to $\rho$.
\end{fact}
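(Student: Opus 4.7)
The plan is to reduce the claim to the Sudakov--Fernique comparison (Fact~\ref{fact:sudakov-fernique}) by comparing the given Gaussian process to an independent family of Gaussian variables indexed by a well-chosen separated subset of $\Theta$. Concretely, I would fix $\eps > 0$ and let $S \subseteq \Theta$ be a \emph{maximal} $\eps$-separated subset (in the pseudometric $\rho$). Maximality guarantees that every $\theta \in \Theta$ lies within $\rho$-distance $\eps$ of some point of $S$ (otherwise $S$ would not be maximal), so $S$ is itself an $\eps$-net and hence $|S| \geq |\cN_{\eps,\rho}(\Theta)|$. This lets us replace a covering number bound by a packing-style set on which we can argue directly.

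Next, I would introduce an auxiliary Gaussian process $\{\bm h_\theta\}_{\theta\in S}$ consisting of independent centered Gaussians each of variance $\eps^2/2$, that is, $\bm h_\theta \sim N(0,\eps^2/2)$ independently. For distinct $\theta,\theta' \in S$ the increments satisfy
\[
\E(\bm h_\theta - \bm h_{\theta'})^2 \;=\; \eps^2 \;\leq\; \rho(\theta,\theta')^2 \;=\; \E(\bm g_\theta - \bm g_{\theta'})^2,
\]
by the definition of $\rho$ and the $\eps$-separation of $S$. Applying Fact~\ref{fact:sudakov-fernique} with $W_\theta = \bm h_\theta$ and $Z_\theta = \bm g_\theta$ (on the finite index set $S$) yields
\[
\E \sup_{\theta \in S} \bm h_\theta \;\leq\; \E \sup_{\theta \in S} \bm g_\theta \;\leq\; \E \sup_{\theta \in \Theta} \bm g_\theta.
\]

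The remaining task is to lower bound $\E \sup_{\theta \in S} \bm h_\theta$ by $\tfrac{\eps}{2}\sqrt{\log|\cN_{\eps,\rho}(\Theta)|}$. Since the $\bm h_\theta$ are i.i.d.\ $N(0,\eps^2/2)$, writing them as $\bm h_\theta = \tfrac{\eps}{\sqrt{2}}\,\bm \xi_\theta$ with $\bm \xi_\theta \sim N(0,1)$ i.i.d., a standard maximum-of-Gaussians lower bound gives $\E \max_{i \leq N} \bm \xi_i \geq c\sqrt{\log N}$ for an absolute constant $c$ and any $N \geq 2$; with a careful choice of this constant (e.g.\ from the Gaussian tail inequality combined with the identity $\E \max \bm \xi_i = \int_0^\infty \Pr[\max \bm\xi_i \geq t]\,dt$ truncated at $t = \sqrt{\log N}$) one obtains a bound of the form $\tfrac{\eps}{\sqrt{2}}\cdot c\sqrt{\log N} \geq \tfrac{\eps}{2}\sqrt{\log N}$ for $N = |S| \geq |\cN_{\eps,\rho}(\Theta)|$. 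Taking the supremum over $\eps > 0$ finishes the proof.

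The main obstacle is the last step: the Sudakov--Fernique comparison and the net-vs.-packing argument are clean, but squeezing out the explicit constant $\tfrac{1}{2}$ in front of $\eps\sqrt{\log|\cN|}$ requires a careful lower bound on $\E\max_{i \leq N}\bm\xi_i$. For the paper's purposes only the order-optimal dependence $\Theta(\eps\sqrt{\log|\cN|})$ is used (the constant $1/2$ can be absorbed into the net size by replacing $\eps$ with $2\eps$), so if tightening the constant becomes cumbersome one can state the result with an absolute constant $c>0$ in place of $1/2$ and apply the bound at scale $\eps' = \eps/(2c)$.
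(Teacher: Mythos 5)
The paper does not prove this statement; it is quoted from \cite{wainwright_2019} as a black box, so there is no in-paper argument to compare against. Your proof is the standard textbook one: pass to a maximal $\eps$-separated subset $S$ (which is automatically an $\eps$-net, hence $|S|\geq|\cN_{\eps,\rho}(\Theta)|$), compare $(\bm g_\theta)_{\theta\in S}$ with independent $N(0,\eps^2/2)$ variables via Sudakov--Fernique, and finish with a lower bound on the expected maximum of i.i.d.\ Gaussians. That structure is correct, and Fact~\ref{fact:sudakov-fernique} applies on the finite index set $S$ without difficulty.

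The one genuine gap is the final constant, and it is not merely ``cumbersome'': the inequality you need, $\E\max_{i\le N}\bm\xi_i\geq \tfrac{1}{\sqrt2}\sqrt{\log N}$ for all $N\geq 2$, fails at $N=2$, where $\E\max(\bm\xi_1,\bm\xi_2)=1/\sqrt{\pi}\approx 0.564$ while $\tfrac{1}{\sqrt 2}\sqrt{\log 2}\approx 0.589$. The best universal constant in $\E\max_{i\le N}\bm\xi_i\geq c'\sqrt{\log N}$ is $1/\sqrt{\pi\log 2}\approx 0.678$, attained at $N=2$, so your route yields $\E\sup_{\theta}\bm g_\theta\geq c\,\eps\sqrt{\log |\cN_{\eps,\rho}(\Theta)|}$ with $c\approx 0.48$ rather than $1/2$. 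In fact the Fact as stated, with the constant $1/2$, is itself falsified by the two-point process of independent standard Gaussians (take $\eps\uparrow\sqrt{2}$), so no proof can recover it verbatim; your fallback of proving the bound with an unspecified absolute constant is the correct resolution, not an optional shortcut. Note also that the proposed rescaling $\eps'=\eps/(2c)$ does not literally restore the constant $1/2$, since the covering number at radius $\eps'$ differs from that at radius $\eps$. None of this harms the paper: the fact is invoked only in \cref{lem:eps-net-nuclear-norm} and \cref{lem:eps-net-l1-norm} to bound metric entropy up to absolute constants, and a Sudakov bound with any absolute constant merely changes the numerical constants in those net-size estimates.
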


\end{document}